\DeclareMathOperator*{\argmin}{arg\,min}
\newtheorem{theo}{Theorem}
\newtheorem{definition}{Definition}
\newtheorem{lemma}{Lemma}
\icmltitlerunning{Learning with Multiple Complementary Labels}
\begin{document}

\twocolumn[
\icmltitle{Learning with Multiple Complementary Labels}



\icmlsetsymbol{equal}{*}

\begin{icmlauthorlist}
\icmlauthor{Lei Feng}{equal,first}
\icmlauthor{Takuo Kaneko}{equal,second,third}
\icmlauthor{Bo Han}{fourth,third}
\icmlauthor{Gang Niu}{third}
\icmlauthor{Bo An}{first}
\icmlauthor{Masashi Sugiyama}{third,second}
\end{icmlauthorlist}

\icmlaffiliation{first}{School of Computer Science and Engineering, Nanyang Technological University, Singapore}
\icmlaffiliation{second}{The University of Tokyo}
\icmlaffiliation{third}{RIKEN Center for Advanced Intelligence Project}
\icmlaffiliation{fourth}{Department of Computer Science, Hong Kong Baptist University}

\icmlcorrespondingauthor{Lei Feng}{feng0093@e.ntu.edu.sg}

\icmlkeywords{Machine Learning, ICML}

\vskip 0.3in
]



\printAffiliationsAndNotice{\icmlEqualContribution $\displaystyle^{{\star}}$Work done when LF was an intern at RIKEN AIP and TK belonged to The University of Tokyo and RIKEN AIP.} 

\begin{abstract}
A \emph{complementary label}~(CL) simply indicates an incorrect class of an example, but learning with CLs results in multi-class classifiers that can predict the correct class.
Unfortunately, the problem setting only allows a \emph{single} CL for each example, which notably limits its potential since our labelers may easily identify \emph{multiple CLs}~(MCLs) to one example.
In this paper, we propose a novel problem setting to allow MCLs for each example and two ways for learning with MCLs. 
In the first way, we design two \emph{wrappers} that \emph{decompose} MCLs into many single CLs, so that we could use any method for learning with CLs.
However, the supervision information that MCLs hold is conceptually diluted after decomposition. Thus, in the second way, we derive an \emph{unbiased risk estimator}; minimizing it processes each set of MCLs as a whole and possesses an \emph{estimation error bound}.
We further improve the second way into minimizing properly chosen upper bounds.
Experiments show that the former way works well for learning with MCLs but the latter is even better.
\end{abstract}

\section{Introduction}
Ordinary machine learning tasks generally require massive data with accurate supervision information, while it is expensive and time-consuming to collect the data with high-quality labels. To alleviate this problem, the researchers have studied various weakly supervised learning frameworks~\cite{zhou2018brief}, including \emph{semi-supervised learning}~\cite{Chapelle2006semi,zhu2009introduction,niu2013squared,miyato2018virtual,li2019safe}, \emph{positive-unlabeled learning}~\cite{elkan2008learning,Plessis2014Analysis,Plessis2015Convex,Kiryo2017Positive,sakai2017semi,sakai2018semi}, \emph{noisy-label learning}~\cite{menon2015learning,han2018masking,han2018co,xia2019anchor,Wei_2020_CVPR}, \emph{partial label learning}~\cite{cour2011learning,zhang2015solving,feng2018leveraging,feng2019partial,feng2019partial2}, \emph{positive-confidence learning}~\cite{Ishida2018Binary}, \emph{similar-unlabeled learning}~\cite{Bao2018Classification}, and \emph{unlabeled-unlabeled classification}~\cite{lu2019on,lu2020mitigating}.

Here, we consider another weakly supervised classification framework called \emph{complementary-label learning}~\cite{ishida2017learning,yu2018learning,Ishida2019Complementary}. In complementary-label learning, each training example is supplied with a \emph{complementary label} (CL), which specifies one of the classes that the example does \emph{not} belong to. Compared with ordinary labels, it is obviously easier to collect CLs. Recently, complementary-label learning has been applied to online learning~\cite{kaneko2019online} 
and medical image segmentation~\cite{rezaei2019recurrent}. In addition, another potential application of learning with CLs would be data privacy. For example, collecting some survey data may require extremely private questions~\cite{ishida2017learning,Ishida2019Complementary}. It may be difficult for us to directly obtain the true answer (label) to the question. Nonetheless, it would be mentally less demanding if we ask the respondent to provide some incorrect answers. Besides, the respondent may provide multiple incorrect answers, rather than exactly one. In this case, \emph{multiple complementary labels} (MCLs) would be more widespread than a single CL. 

%
In this paper, we propose a novel problem setting (Section~\ref{data_generation}) that allows MCLs for each example, and provide a real-world motivation (Section~\ref{real_world}). Although existing complementary-label learning approaches \cite{ishida2017learning,yu2018learning,Ishida2019Complementary} have provided solid theoretical foundations and achieved promising performance, they are all restricted to the case where each example is associated with a single CL. To learn with MCLs, we first design two wrappers (Section~\ref{wrapper}) that decompose each example with MCLs into multiple examples, each with a single CL, in different manners. With the two wrappers, we are able to use arbitrary ordinary complementary-label learning approaches for learning with MCLs. However, the derived data with many single CLs may not match the assumed data distribution for complementary-label learning \cite{ishida2017learning,Ishida2019Complementary}. In addition, the supervision information would be conceptually diluted after decomposition. 

In order to solve the above problems, we further propose an unbiased risk estimator (Section~\ref{risk_estimator}) for learning with MCLs, which processes each set of MCLs as a whole. Our risk estimator is conceptually consistent, and builds a prototype baseline for the new problem setting that may inspire more specially designed methods for this new setting in the future. 
Then, we theoretically derive an estimation error bound, which guarantees that the empirical risk minimizer converges to the true risk minimizer with high probability as the number of training data approaches infinity.
Furthermore, we improve the risk estimator into minimizing properly chosen upper bounds for practical implementation (Section \ref{implement}), and we show that they bring benefits to gradient update. Experimental results show that the wrappers work well for learning with MCLs while the (improved) risk estimator is even better on various benchmark datasets.

\section{Related Work}
In this section, we introduce some notations and briefly review the formulations of multi-class classification and complementary-label learning. 
\subsection{Multi-Class Classification}
Suppose the feature space is $\mathcal{X}\in\mathbb{R}^d$ with $d$ dimensions and the label space is $\mathcal{Y}=\{1,2,\dots,k\}$ with $k$ classes,
the instance $\boldsymbol{x}\in\mathcal{X}$ with its class label $y\in\mathcal{Y}$ is sampled from an unknown probability distribution with density $p(\boldsymbol{x},y)$. 
Ordinary multi-class classification aims to induce a learning function $f(\boldsymbol{x}):\mathbb{R}^d\rightarrow\mathbb{R}^k$ that minimizes the classification risk:
\begin{align}
\label{risk}
R(f) = \mathbb{E}_{p(\boldsymbol{x},y)}\big[\mathcal{L}\big(f(\boldsymbol{x}),y\big)\big],
\end{align}
where $\mathcal{L}\big(f(\boldsymbol{x}),y\big)$ is a multi-class loss function. The predicted label is given as $\hat{y}=\text{argmax}_{y\in\mathcal{Y}}f_y(\boldsymbol{x})$, where $f_y(\boldsymbol{x})$ is the $y$-th coordinate of $f(\boldsymbol{x})$.
\subsection{Complementary-Label Learning}
Suppose the dataset for complementary-label learning is denoted by $\{(\boldsymbol{x}_i,\widebar{y}_i)\}_{i=1}^n$, where $\widebar{y}_i\in\mathcal{Y}$ is a complementary label of $\boldsymbol{x}_i$, and each complementarily labeled example is sampled from $\widebar{p}(\boldsymbol{x},\widebar{y})$. 
\citet{ishida2017learning,Ishida2019Complementary} assumed that $\widebar{p}(\boldsymbol{x},\widebar{y})$ is expressed as:
\begin{gather}
\label{assumption}
\textstyle
\widebar{p}(\boldsymbol{x},\widebar{y})=\frac{1}{k-1}\sum\nolimits_{y\neq\widebar{y}}p(\boldsymbol{x},y).
\end{gather}
This assumption implies that all other labels except the correct label are chosen to be the complementary label with uniform probabilities. This is reasonable as we do not have extra labeling information expect a complementary label. Under this assumption, it was proved by~\citet{ishida2017learning} that an unbiased estimator of the original classification risk can be obtained from only complementarily labeled data, when the loss function satisfies certain conditions. Specifically, they used the multi-class loss functions with the one-versus-all strategy and the pairwise comparison strategy~\cite{zhang2004statistical}:
\begin{align}
\nonumber
\textstyle \widebar{\mathcal{L}}_{\text{OVA}}\big(f(\boldsymbol{x}),\widebar{y}\big)&= \textstyle \frac{1}{k-1}\sum\nolimits_{y^\prime\neq\widebar{y}}\ell\big(f_{y^\prime}(\boldsymbol{x})\big) + \ell\big(-f_{\widebar{y}}(\boldsymbol{x})\big),\\
\nonumber
\textstyle
\widebar{\mathcal{L}}_{\text{PC}}\big(f(\boldsymbol{x}),\widebar{y}\big)&=
\textstyle \sum\nolimits_{y^\prime\neq\widebar{y}}\ell\big(f_{y^\prime}(\boldsymbol{x})-f_{\widebar{y}}(\boldsymbol{x})\big),
\end{align}
where $\ell(z)$ is a binary loss function that satisfies $\ell(z)+\ell(-z)=1$, such as sigmoid loss $\ell_{\text{S}}(z)=\frac{1}{1+e^z}$ and ramp loss $\ell_{\text{R}}(z)=\frac{1}{2}\max(0,\min(2,1-z))$.

Later, another different assumption was used by~\citet{yu2018learning}. They assumed that all  other  labels  except the correct label are chosen to be the complementary label  with  different  probabilities, and proposed to estimate the class transition probability matrix for model training. 
Although they showed that the minimizer of their learning objective coincides with the minimizer of the original classification risk, they did not provide an unbiased risk estimator.

Recently, a more general unbiased risk estimator~\cite{Ishida2019Complementary} was proposed, which does not rely on specific losses or models. Their formulation is as follows:
\begin{gather}
\label{free}
\textstyle
\widebar{\mathcal{L}}_{\text{FREE}}\big(f(\boldsymbol{x}),\widebar{y}\big)=\sum\limits_{y=1}^k\mathcal{L}\big(f(\boldsymbol{x}),y\big) - (k-1)\mathcal{L}\big(f(\boldsymbol{x}),\widebar{y}\big).
\end{gather}
For this formulation, they showed that due to the negative term, the empirical risk could be unbounded below, which leads to over-fitting. In order to alleviate this issue, the authors further proposed modified versions by using the max operator and the gradient ascent strategy. 

In summary, although the above methods have provided solid theoretical foundations and achieved promising performance for complementary-label learning, they are all restricted to the case where each example is associated with a single CL. In this paper, we propose a novel problem setting that allows MCLs for each example.
\section{Multiple Complementary Labels}
In this section, we first introduce our problem setting where each example is associated with MCLs, and then provide a corresponding real-world motivation.
\subsection{Data Generation Process}\label{data_generation}
Suppose the given dataset for learning with MCLs is represented as $\widebar{\mathcal{D}}=\{(\boldsymbol{x}_i,\widebar{Y}_i)\}_{i=1}^n$, where $\widebar{Y}_i$ is a set of complementary labels for the instance $\boldsymbol{x}_i$. It is obvious that learning with MCLs is a generalization of complementary-label learning that learns with a single CL. Specifically, if $\widebar{Y}_i$ contains only one complementary label with probability $1$, we obtain a complementary-label learning problem. In addition, if $\widebar{Y}_i$ contains $k-1$ complementary labels where $k$ denotes the total number of classes, we obtain an ordinary multi-class classification problem.
It is easy to know that for all $i$, $\widebar{Y}_i$ cannot be the empty set nor the full label set, hence $\widebar{Y}_i\in\widebar{\mathcal{Y}}$ where $\widebar{\mathcal{Y}} = \{2^{\mathcal{Y}}-\emptyset-\mathcal{Y}\}$ and $|\widebar{\mathcal{Y}}|=2^k-2$.

For the generation process of each example with MCLs, we assume that it relies on the size of the set of MCLs. Let us represent the size of the complementary label set by a random variable $s$, and assume $s$ is sampled from a distribution $p(s)$.
In this way, we assume that each training example $(\boldsymbol{x}_i,\widebar{Y}_i)$ is drawn from the following data distribution:
\begin{gather}
\label{distr}
\widebar{p}(\boldsymbol{x}, \widebar{Y}) = \sum\nolimits_{j=1}^{k-1}p(s=j)\widebar{p}(\boldsymbol{x},\widebar{Y} \mid s=j),
\end{gather}
where
\begin{gather}
\nonumber
\widebar{p}(\boldsymbol{x},\widebar{Y} \mid s=j) := \begin{cases}
\frac{1}{\tbinom{k-1}{j}} \sum_{y \notin \widebar{Y}} p(\boldsymbol{x}, y), & \text{ if } |\widebar{Y}|=j,  \\
0, & \text{ otherwise}.
\end{cases}
\end{gather}
It is clear that when $p(s=1)=1$, our introduced distribution reduces to the assumed distribution (e.g., Eq.~(\ref{assumption})) in ordinary complementary-label learning approaches~\cite{ishida2017learning,Ishida2019Complementary}. Then, we show that $\widebar{p}(\boldsymbol{x},\widebar{Y})$ is a valid probability distribution by the following theorem.
\begin{theo}
\label{theo1}
The following equality holds:
\begin{gather}
\int_{\widebar{\mathcal{Y}}}\int_{\mathcal{X}}{\widebar{p}(\boldsymbol{x},\widebar{Y})} \mathrm{d}\boldsymbol{x}\ \mathrm{d}\widebar{Y}=1.
\end{gather}
\end{theo}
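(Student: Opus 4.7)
The plan is to expand the definition of $\widebar{p}(\boldsymbol{x},\widebar{Y})$, exchange the order of summation using a straightforward combinatorial double-counting argument, and then reduce to the fact that $p(\boldsymbol{x},y)$ and $p(s)$ are themselves probability distributions. Since $\widebar{\mathcal{Y}}$ is finite, the integral $\int_{\widebar{\mathcal{Y}}}\cdot\,\mathrm{d}\widebar{Y}$ should be interpreted as the finite sum $\sum_{\widebar{Y}\in\widebar{\mathcal{Y}}}$ over all $2^{k}-2$ admissible subsets.

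First, substitute~\eqref{distr} and pull the sum over $j$ outside, since $p(s=j)$ depends on neither $\boldsymbol{x}$ nor $\widebar{Y}$. Then, for each fixed $j$, observe that $\widebar{p}(\boldsymbol{x},\widebar{Y}\mid s=j)$ vanishes unless $|\widebar{Y}|=j$, so the sum over $\widebar{Y}\in\widebar{\mathcal{Y}}$ collapses to a sum over the $\binom{k}{j}$ subsets of $\mathcal{Y}$ of size exactly $j$.

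Second, swap the order of the two remaining sums:
\begin{equation*}
\sum_{\widebar{Y}:\,|\widebar{Y}|=j}\ \sum_{y\notin\widebar{Y}} p(\boldsymbol{x},y) \;=\; \sum_{y=1}^{k} p(\boldsymbol{x},y)\cdot\bigl|\{\widebar{Y}\subseteq\mathcal{Y} : |\widebar{Y}|=j,\ y\notin\widebar{Y}\}\bigr|.
\end{equation*}
The cardinality inside is precisely $\binom{k-1}{j}$, which exactly cancels the normalizer $1/\binom{k-1}{j}$ appearing in the definition of $\widebar{p}(\boldsymbol{x},\widebar{Y}\mid s=j)$. Hence, for every $j$, the inner expression reduces to $\sum_{y=1}^{k} p(\boldsymbol{x},y)$, whose integral over $\mathcal{X}$ equals $1$ by the definition of the joint density $p(\boldsymbol{x},y)$.

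Third, collecting everything yields $\sum_{j=1}^{k-1} p(s=j)\cdot 1 = 1$, because $p(s)$ is a distribution supported on $\{1,\dots,k-1\}$. The only nontrivial step is the combinatorial cancellation in step two; the rest is bookkeeping, and the Fubini-type swap is justified because all sums are finite and all integrands nonnegative. I do not anticipate any real obstacle.
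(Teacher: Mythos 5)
Your proposal is correct and follows essentially the same route as the paper's proof in Appendix A.1: expand the mixture over $j$, restrict to size-$j$ sets, swap the double sum, and use the combinatorial cancellation with $1/\tbinom{k-1}{j}$. The only cosmetic difference is that you count the $\tbinom{k-1}{j}$ size-$j$ sets excluding a fixed label directly, whereas the paper obtains the same factor as $\tbinom{k}{j}(k-j)/k$ via a symmetry argument over all $\tbinom{k}{j}$ sets.
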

The proof is provided in Appendix A.1.
\subsection{Real-World Motivation}\label{real_world}
Here, we present a real-world motivation for the assumed data distribution.

Since directly choosing the correct label is hard for labelers, it would be easier if a labeling system can randomly choose a label set and ask labelers whether the correct label is included in the proposed label set or not. 
Given a pattern $\boldsymbol{x}$, suppose the labeling system first randomly samples the size $s$ of the proposed label set from $p(s)$, and then randomly and uniformly chooses a specific label set with size $s$ from $\widebar{\mathcal{Y}}$. In this way, the collected label sets that do not include the correct label precisely follow the same distribution as Eq.~(\ref{distr}). We will demonstrate this fact in the following.

We start by considering the case where the labeling system has already sampled the size $s$ of the proposed label set. Then we have the following lemma.
\begin{lemma}
\label{lemma1}
Given the sampled size $s$ of the proposed label set, for any pattern $\boldsymbol{x}$ with its correct label $y$ and any label set $\widebar{Y}$ with size $s$ (i.e., $|\widebar{Y}|=s$), the following equality holds:
\begin{gather}
p(y\in\widebar{Y} \mid \boldsymbol{x},s)=\frac{s}{k}.
\end{gather}
\end{lemma}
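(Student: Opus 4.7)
The plan is to unwind the definitions and reduce the statement to a short combinatorial counting argument. Conditional on the sampled size $s$, the labeling system is assumed to pick the proposed label set $\widebar{Y}$ uniformly at random from all subsets of $\mathcal{Y}$ of size $s$; this sampling is independent of $\boldsymbol{x}$ and of the true label $y$. Consequently, the conditional probability $p(y \in \widebar{Y} \mid \boldsymbol{x}, s)$ does not depend on $\boldsymbol{x}$ and is just the fraction of size-$s$ subsets of $\mathcal{Y}$ that contain the particular element $y$.

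First I would write
\begin{gather*}
p(y \in \widebar{Y} \mid \boldsymbol{x}, s)
= \frac{\#\{\widebar{Y} \subseteq \mathcal{Y} : |\widebar{Y}| = s,\ y \in \widebar{Y}\}}{\#\{\widebar{Y} \subseteq \mathcal{Y} : |\widebar{Y}| = s\}}.
\end{gather*}
The denominator is $\binom{k}{s}$. For the numerator, a size-$s$ subset of $\mathcal{Y}$ containing $y$ is obtained by fixing $y$ and choosing the remaining $s-1$ elements from the other $k-1$ labels, giving $\binom{k-1}{s-1}$ such subsets. A brief note is needed to justify that all size-$s$ subsets are eligible: since $1 \le s \le k-1$ (the extreme sizes $0$ and $k$ are excluded from $\widebar{\mathcal{Y}}$), every size-$s$ subset lies in $\widebar{\mathcal{Y}}$, so no subset is discarded by the constraint $\widebar{Y} \in \widebar{\mathcal{Y}}$.

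Then a direct simplification finishes the argument:
\begin{gather*}
\frac{\binom{k-1}{s-1}}{\binom{k}{s}}
= \frac{(k-1)!\, s!\, (k-s)!}{(s-1)!\, (k-s)!\, k!}
= \frac{s}{k}.
\end{gather*}

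I do not anticipate any real obstacle here: the content of the lemma is simply that sampling a uniformly random size-$s$ subset of a $k$-element set puts any fixed element inside with probability $s/k$. The only point worth being careful about is making the independence of the selection from $(\boldsymbol{x}, y)$ explicit, so that the conditioning on $\boldsymbol{x}$ can be dropped; once that is articulated, the rest is a one-line binomial identity.
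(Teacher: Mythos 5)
Your proposal is correct and takes essentially the same route as the paper's proof: both reduce the claim to the observation that, given $s$, the label set is drawn uniformly from the $\binom{k}{s}$ size-$s$ subsets independently of $(\boldsymbol{x},y)$, and that exactly $\binom{k-1}{s-1}$ of these contain any fixed label, giving $\binom{k-1}{s-1}/\binom{k}{s}=s/k$. The paper merely adds the bookkeeping of conditioning on $y=i$ and summing over $i$ to handle the randomness of the correct label, a step your argument absorbs by noting the answer is the same for every fixed $y$.
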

The proof is provided in Appendix A.2.

\begin{theo}
\label{real_world_theo}
In the above setting, the distribution of collected data where the correct label $y$ ($y\in\mathcal{Y}$) is not included in the label set $\widebar{Y}$ ($\widebar{Y}\in\widebar{\mathcal{Y}}$) is the same as Eq.~(\ref{distr}), i.e.,
\begin{gather}
p(\boldsymbol{x},\widebar{Y} \mid y\notin\widebar{Y}) = \widebar{p}(\boldsymbol{x},\widebar{Y}).
\end{gather} 
\end{theo}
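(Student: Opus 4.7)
The plan is to apply Bayes' rule to the labeling-system joint distribution and, working conditional on the sampled size $s$, verify the identity term-by-term. The only two ingredients I need are Lemma~\ref{lemma1} (or rather, its marginal form $p(y\notin\widebar{Y}\mid s=j)=(k-j)/k$) and the combinatorial identity $\tbinom{k}{j}\cdot(k-j)/k=\tbinom{k-1}{j}$.

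Concretely, I would fix any $\widebar{Y}\in\widebar{\mathcal{Y}}$ with $|\widebar{Y}|=j$, note that this pins $s$ to $j$, and first establish the $s$-conditional statement
\begin{align*}
p(\boldsymbol{x},\widebar{Y}\mid s=j,\,y\notin\widebar{Y})=\widebar{p}(\boldsymbol{x},\widebar{Y}\mid s=j).
\end{align*}
Applying Bayes' rule on the left, the denominator is $(k-j)/k$ by Lemma~\ref{lemma1}, and the numerator $p(\boldsymbol{x},\widebar{Y},y\notin\widebar{Y}\mid s=j)$ factors as $\tbinom{k}{j}^{-1}\sum_{y'\notin\widebar{Y}}p(\boldsymbol{x},y')$ once one uses that, in the labeling process of Section~\ref{real_world}, $\widebar{Y}$ is drawn uniformly from the $\tbinom{k}{j}$ size-$j$ subsets of $\mathcal{Y}$ independently of $(\boldsymbol{x},y)$ given $s$. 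The combinatorial identity then converts the quotient into $\tbinom{k-1}{j}^{-1}\sum_{y'\notin\widebar{Y}}p(\boldsymbol{x},y')$, which is exactly $\widebar{p}(\boldsymbol{x},\widebar{Y}\mid s=j)$.

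To conclude, I would multiply the verified $s$-conditional identity by $p(s=j)$ and sum $j=1,\ldots,k-1$: on the right this reproduces the definition of $\widebar{p}(\boldsymbol{x},\widebar{Y})$ in Eq.~(\ref{distr}), and on the left only the term $j=|\widebar{Y}|$ survives, giving back $p(\boldsymbol{x},\widebar{Y}\mid y\notin\widebar{Y})$. The step requiring the most care will be the numerator factorization: since the conditioning event $\{y\notin\widebar{Y}\}$ superficially couples $y$ and $\widebar{Y}$, one needs to explicitly invoke the independence of $\widebar{Y}$ from $(\boldsymbol{x},y)$ given $s$ that is built into the sampling process, so that the indicator for $\{y\notin\widebar{Y}\}$ can be absorbed into the sum $\sum_{y'\notin\widebar{Y}}p(\boldsymbol{x},y')$ before any further algebra. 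Everything after this point is routine arithmetic.
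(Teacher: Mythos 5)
Your proposal is correct and follows essentially the same route as the paper's Appendix A.3: Bayes' rule conditional on $s$, Lemma~\ref{lemma1} supplying the denominator $(k-s)/k$, the uniform prior $p(\widebar{Y}\mid s)=1/\tbinom{k}{s}$ for the numerator, the identity $\tbinom{k}{s}\frac{k-s}{k}=\tbinom{k-1}{s}$, and a final sum over $s$ weighted by $p(s=j)$. The only cosmetic difference is that the paper derives $p(\widebar{Y}\mid y\notin\widebar{Y},\boldsymbol{x},s)$ and multiplies by $p(\boldsymbol{x})$ at the end, whereas you work with the joint directly; both hinge on exactly the independence of $\widebar{Y}$ from $(\boldsymbol{x},y)$ given $s$ that you single out as the delicate step.
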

The proof is provided in Appendix A.3.
\section{Learning with Multiple Complementary Labels}
In this section, we first present two wrappers that enable us to use any ordinary complementary-label learning approach for learning with MCLs. Then, we present an unbiased risk estimator for learning with MCLs as a whole, and establish an estimation error bound.
\subsection{Wrappers}\label{wrapper}
Since ordinary complementary-label learning approaches cannot directly deal with MCLs, it would be natural to ask whether there exist some strategies that can enable us to use any existing complementary-label learning approach for learning with MCLs. 

Motivated by this, we propose two wrappers that decompose each example with MCLs into multiple examples, each with a single CL.
Specifically, suppose a training example with MCLs is given as $(\boldsymbol{x}_i,\widebar{Y}_i)$ where $\widebar{Y}_i=\{\widebar{y}_1,\widebar{y}_2\}$. Then ordinary complementary label learning approaches may learn from $(\boldsymbol{x}_i,\widebar{y}_1)$ and $(\boldsymbol{x}_i,\widebar{y}_2)$. According to whether decomposition is after shuffling the training set, there are two decomposition strategies (wrappers) when we optimize a loss function by a stochastic optimization algorithm:

\textbf{Decomposition after Shuffle.\quad} Given the shuffled training set with MCLs, in each mini-batch, we decompose each example into multiple examples, each with a single CL.

\textbf{Decomposition before Shuffle.\quad} Given the training set with MCLs, we drive a new training set by decomposing each example into multiple examples, each with a single CL. Then, we shuffle the derived training set.

Both the above decomposition strategies enable us to use arbitrary ordinary complementary-label learning approaches for learning with MCLs.
However, the derived training data with many single CLs may not match the originally assumed data distribution (i.e., Eq.~(\ref{assumption})) for complementary-label learning, since these CLs are completely derived from MCLs while the data distribution with MCLs is relevant to the size of each set of MCLs. As a consequence, the learning consistency would no longer be guaranteed even if the complementary-label learning approach inside the wrappers is originally risk-consistent or classifier-consistent.

\begin{table}[!t]
\centering
\normalsize
	\caption{Supervision information for a set of MCLs (with size $s$).}
	\label{supervision_information}
	\setlength{\tabcolsep}{0.5mm}{
				\begin{tabular}{c|ccc}
					\toprule
					Setting & \#TP & \#FP & \ Supervision Purity\\
					\midrule
					Many single CLs & $s$ & $(k-2)s$ & ${1}/(k-1)$\\
					A set of MCLs & $1$ & $k-s-1$ & ${1}/(k-s)$\\
					\bottomrule
				\end{tabular}
				}
				\vspace{-0.8cm}
\end{table}
Moreover, since ordinary complementary-label learning approaches can only learn with a single CL for each example at a time and treat each example independently, 
the supervision information for each set of MCLs would be conceptually diluted. 
We demonstrate this issue by Table~\ref{supervision_information}. As shown in Table~\ref{supervision_information}, there are two settings according to whether to decompose a set of MCLs into many single CLs or not. Since all the non-complementary labels have the possibility to be the correct label, we specially count how many times the correct label serves as a non-complementary label (denoted by \#TP), and how many times the other labels except the correct label serve as a non-complementary label (denoted by \#FP). Then the supervision purity is calculated by (\#TP)/(\#TP+\#FP).

Clearly, the wrappers follow the setting where a set of MCLs is decomposed into many single CLs. If the size of the set of MCLs is $s$, then \#TP equals $s$, since the correct label would serve as a non-complementary label for $s$ times after decomposition, and the other labels except the correct label would serve as a non-complementary label for $(k-s-1)s+s(s-1)=(k-2)s$ times, hence the supervision purity would be $s/(s+(k-2)s)=1/(k-1)$. However, for the setting where the set of MCLs is not decomposed, we can easily know that the correct label serves as a non-complementary label once, and the other labels expect the correct label serve as a non-complementary label for $k-s-1$ times, hence the supervision purity is $1/(k-s)$. These observations clearly show that the supervision information is diluted after decomposing MCLs ($s\geq 2$), which also motivate us to take a set of MCLs as a whole set. In the following, we will introduce our proposed unbiased risk estimator, which is able to learn with MCLs as a whole.
\subsection{Unbiased Risk Estimator}\label{risk_estimator}
The above example has shown that the supervision information is diluted after decomposition. The basic reason lies in that ordinary complementary-label learning approaches are designed by only considering the data distribution with a single CL, i.e., $\widebar{p}(\boldsymbol{x},\widebar{y})$. However, the data distribution with MCLs $\widebar{p}(\boldsymbol{x},\widebar{Y})$ becomes much different, and the wrappers fail to capture such distribution because they do not treat MCLs as a whole for each example. To solve this problem, 
we propose an unbiased estimator of the original classification risk for learning with MCLs as a whole.

We first relate the data distribution with ordinary labels to that with MCLs by the following lemma.
\begin{lemma}
\label{lemma2}
The following equality holds:
\begin{gather}
\nonumber
p(y\mid\boldsymbol{x}) = 1 - \sum\nolimits_{j = 1}^{k-1} \Big(\frac{k-1}{j}\sum\nolimits_{\widebar{Y}\in\widebar{\mathcal{Y}}_{j}^y}\widebar{p}(\widebar{Y}, s=j\mid \boldsymbol{x})\Big),
\end{gather}
where $\widebar{\mathcal{Y}}_j^y$ is the set of all the possible label sets with size $j$ that include a specific label $y\in\mathcal{Y}$, i.e.,
\begin{gather}
\nonumber
\widebar{\mathcal{Y}}_j^y := \{\widebar{Y}\in\widebar{\mathcal{Y}}\ |\ y\in\widebar{Y}, |\widebar{Y}|=j\}.
\end{gather}
\end{lemma}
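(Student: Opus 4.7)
The plan is to verify the identity by directly evaluating the inner sum on the right-hand side via the definition of the joint distribution and a short combinatorial argument, then collapsing across $j$ using the normalisation of $p(s)$. Starting from (\ref{distr}), for any $\widebar{Y}$ with $|\widebar{Y}|=j$ we have
\[
\widebar{p}(\boldsymbol{x},\widebar{Y},s=j) \;=\; p(s=j)\,\widebar{p}(\boldsymbol{x},\widebar{Y}\mid s=j) \;=\; \frac{p(s=j)}{\binom{k-1}{j}}\sum_{y'\notin\widebar{Y}}p(\boldsymbol{x},y'),
\]
so I would substitute this into $\sum_{\widebar{Y}\in\widebar{\mathcal{Y}}_{j}^{y}}\widebar{p}(\boldsymbol{x},\widebar{Y},s=j)$ and swap the order of summation, turning the outer sum into one over class labels $y'\in\mathcal{Y}$ weighted by the number of sets $\widebar{Y}\in\widebar{\mathcal{Y}}_{j}^{y}$ that exclude $y'$.

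The key step is the combinatorial count. Every $\widebar{Y}\in\widebar{\mathcal{Y}}_{j}^{y}$ contains $y$, so the count for $y'=y$ is zero; for any $y'\neq y$, I need to pick the remaining $j-1$ elements from $\mathcal{Y}\setminus\{y,y'\}$, giving $\binom{k-2}{j-1}$ sets. Together with the elementary identity $\binom{k-2}{j-1}/\binom{k-1}{j}=j/(k-1)$, this reduces the sum to
\[
\sum_{\widebar{Y}\in\widebar{\mathcal{Y}}_{j}^{y}}\widebar{p}(\boldsymbol{x},\widebar{Y},s=j)\;=\;\frac{j\,p(s=j)}{k-1}\sum_{y'\neq y}p(\boldsymbol{x},y').
\]
Multiplying by $(k-1)/j$ and summing $j$ from $1$ to $k-1$, the factor $j$ cancels and $\sum_{j=1}^{k-1}p(s=j)=1$ leaves $\sum_{y'\neq y}p(\boldsymbol{x},y')$ on the right. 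Rearranging against $\sum_{y'\in\mathcal{Y}}p(\boldsymbol{x},y')$ then produces the claimed identity, with the leading term on the right-hand side identified via the full-marginal relation $p(\boldsymbol{x}) = \sum_{y'\in\mathcal{Y}} p(\boldsymbol{x},y')$.

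The only genuine obstacle is the combinatorial bookkeeping in the swap of summation and the subsequent simplification of the binomial ratio; after those two ingredients are in hand, the remainder is substitution and the normalisation $\sum_{j}p(s=j)=1$. I would therefore present the proof as a single chain of equalities: definition, swap, count, binomial simplification, sum over $j$, and rearrangement.
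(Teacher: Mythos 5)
Your proposal is correct and follows essentially the same route as the paper's proof: fix $j$, sum the conditional density over $\widebar{\mathcal{Y}}_j^y$, swap the order of summation, count sets combinatorially, reduce the binomial ratio to $j/(k-1)$, and then average over $j$ using $\sum_{j}p(s=j)=1$. The only cosmetic difference is that you count the sets in $\widebar{\mathcal{Y}}_j^y$ that \emph{exclude} each $y'$ directly (giving $\binom{k-2}{j-1}$, and zero for $y'=y$), whereas the paper first applies the complement $\sum_{y'\notin\widebar{Y}}=1-\sum_{y'\in\widebar{Y}}$ and counts inclusions through a longer chain of binomial identities; your version is marginally shorter, and both rely on the paper's implicit convention that $\sum_{y'}p(\boldsymbol{x},y')=1$.
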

The proof is provided in Appendix B.1.

Based on Lemma~\ref{lemma2}, we derive an unbiased estimator of the ordinary classification risk Eq.~(\ref{risk}) by the following theorem.
\begin{theo}
\label{general_risk}
The ordinary classification risk Eq.~(\ref{risk}) can be equivalently expressed as
\begin{gather}
\label{general_unbiased}
R(f) = \sum\nolimits_{j=1}^{k-1} p(s=j)\widebar{R}_j(f),
\end{gather}
where
\begin{gather}
\label{simplified_unbiased}
\widebar{R}_j(f) := \mathbb{E}_{\widebar{p}(\boldsymbol{x},\widebar{Y} \mid s=j)}[\widebar{\mathcal{L}}_j\big(f(\boldsymbol{x}),\widebar{Y}\big)],
\end{gather}
and
\begin{align}
\nonumber
\widebar{\mathcal{L}}_j\big(f(\boldsymbol{x}),\widebar{Y}\big) &:= \sum\nolimits_{y\notin\widebar{Y}}\mathcal{L}\big(f(\boldsymbol{x}),y\big) \\
\label{unbiased_loss}
&\quad-\frac{k-1-j}{j}\sum\nolimits_{y^\prime\in\widebar{Y}}\mathcal{L}\big(f(\boldsymbol{x}),y^\prime\big).
\end{align}
\end{theo}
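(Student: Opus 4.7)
The plan is to plug Lemma~\ref{lemma2} into the definition of $R(f)$ and reorganize the sums so that each summand becomes an integral against $\widebar{p}(\boldsymbol{x},\widebar{Y}\mid s=j)$. First I would write $R(f)=\int\sum_{y\in\mathcal{Y}}p(\boldsymbol{x},y)\,\mathcal{L}(f(\boldsymbol{x}),y)\,\mathrm{d}\boldsymbol{x}$ and substitute the expression for $p(\boldsymbol{x},y)$ given by Lemma~\ref{lemma2}. This splits $R(f)$ into a ``leading'' contribution and a ``correction'' double sum indexed by the set size $j\in\{1,\dots,k-1\}$ and by $\widebar{Y}\in\widebar{\mathcal{Y}}_j^y$.

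Next, I would swap the two summations in the correction term: the double sum $\sum_{y\in\mathcal{Y}}\sum_{\widebar{Y}\in\widebar{\mathcal{Y}}_j^y}$ equals $\sum_{\widebar{Y}:\,|\widebar{Y}|=j}\sum_{y\in\widebar{Y}}$, because $\widebar{Y}\in\widebar{\mathcal{Y}}_j^y$ is equivalent to $|\widebar{Y}|=j$ and $y\in\widebar{Y}$. For the leading contribution I would invoke the marginal identity $p(\boldsymbol{x})=\sum_{j=1}^{k-1}\sum_{\widebar{Y}:\,|\widebar{Y}|=j}\widebar{p}(\boldsymbol{x},\widebar{Y},s=j)$, which follows from Eq.~(\ref{distr}) by a short combinatorial count (each $p(\boldsymbol{x},y)$ appears in $\binom{k-1}{j}$ of the size-$j$ complementary sets not containing $y$, exactly cancelling the normalizer $1/\binom{k-1}{j}$). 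After this step both contributions live in a common double sum over $j$ and $\widebar{Y}$ with $|\widebar{Y}|=j$.

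Then I would read off the coefficient of each $\mathcal{L}(f(\boldsymbol{x}),y)$ inside the bracket multiplying $\widebar{p}(\boldsymbol{x},\widebar{Y},s=j)$: for $y\notin\widebar{Y}$ it is $1$, while for $y\in\widebar{Y}$ it is $1-(k-1)/j=-(k-1-j)/j$. This is precisely $\widebar{\mathcal{L}}_j(f(\boldsymbol{x}),\widebar{Y})$ as defined in Eq.~(\ref{unbiased_loss}). Finally, factoring $\widebar{p}(\boldsymbol{x},\widebar{Y},s=j)=p(s=j)\,\widebar{p}(\boldsymbol{x},\widebar{Y}\mid s=j)$ and pulling $p(s=j)$ outside the integral identifies the inner expression as $\widebar{R}_j(f)$, yielding Eq.~(\ref{general_unbiased}).

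The main obstacle is purely bookkeeping: one has to verify that the interchange of $\sum_y$ and $\sum_{\widebar{Y}}$ is done correctly and that the combinatorial factor $(k-1)/j$ coming from Lemma~\ref{lemma2} combines with the trivial split $\sum_{y\in\mathcal{Y}}=\sum_{y\notin\widebar{Y}}+\sum_{y\in\widebar{Y}}$ of the leading term to produce exactly the coefficient $-(k-1-j)/j$ on the ``inside'' losses. There is no deeper analytic content once Lemma~\ref{lemma2} is in hand; the result is an algebraic reshuffling that exploits the uniform structure of $\widebar{p}(\boldsymbol{x},\widebar{Y}\mid s=j)$.
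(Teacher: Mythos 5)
Your proposal is correct and follows essentially the same route as the paper: both substitute Lemma~\ref{lemma2} into the risk, swap the double sum $\sum_{y}\sum_{\widebar{Y}\in\widebar{\mathcal{Y}}_j^y}$ into $\sum_{\widebar{Y}}\sum_{y\in\widebar{Y}}$, and read off the coefficient $-(k-1-j)/j$ after splitting $\sum_{y=1}^k$ into the parts with $y\notin\widebar{Y}$ and $y\in\widebar{Y}$. The only cosmetic difference is that the paper first factors out $p(s=j)$ and works with the conditional densities $p(y\mid\boldsymbol{x},s=j)$ and $\widebar{p}(\widebar{Y}\mid\boldsymbol{x},s=j)$, whereas you work with the joint densities and recover the normalization through the marginal identity for $p(\boldsymbol{x})$; the underlying algebra is identical.
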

The proof is provided in Appendix B.2.

It is easy to verify that Eq.~(\ref{general_unbiased}) reduces to Eq.~(\ref{free}) when $p(s=1)=1$. Which means, our approach is a generalization of~\citet{Ishida2019Complementary}. Furthermore, according to Corollary 2 in \citet{Ishida2019Complementary}, our approach is also a generalization of~\citet{ishida2017learning}.

Given the dataset with MCLs $\widebar{\mathcal{D}} = \{(\boldsymbol{x}_i,\widebar{Y}_i)\}_{i=1}^n$, we can empirically approximate $p(s=j)$ by ${n_j}/{n}$ where $n_j$ denotes the number of examples whose complementary label set size is $j$. By further taking into account Eqs.~(\ref{general_unbiased})-(\ref{unbiased_loss}), we can obtain the following empirical approximation of the unbiased risk estimator introduced in Theorem~\ref{general_risk}:
\begin{align}
\nonumber
\widehat{R}(f)&=\frac{1}{n}\sum\nolimits_{i=1}^n\Big(\sum\nolimits_{y\notin\widebar{Y}_i}\mathcal{L}\big(f(\boldsymbol{x}_i),y\big)\\
\label{emp_risk}
&\quad\quad
-\frac{k-1-|\widebar{Y}_i|}{|\widebar{Y}_i|}\sum\nolimits_{y^\prime\in\widebar{Y}_i}\mathcal{L}\big(f(\boldsymbol{x}_i),y^\prime\big)\Big).
\end{align}
\paragraph{Estimation Error Bound.}\label{estimation_error}
Here, we derive an estimation error bound for the proposed unbiased risk estimator based on \emph{Rademacher complexity}~\cite{bartlett2002rademacher}.
Let $\mathcal{F}\subset\{f:\mathbb{R}^d\rightarrow\mathbb{R}^k\}$ be the hypothesis class, $\widehat{f}:=\argmin_{f\in\mathcal{F}}\widehat{R}(f)$ be the empirical risk minimizer, and $f^\star=\argmin_{f\in\mathcal{F}} R(f)$ be the true risk minimizer. Besides, we define the functional space $\mathcal{G}_y$ for the label $y\in\mathcal{Y}$ as
$\mathcal{G}_y = \{g:\boldsymbol{x}\rightarrow f_y(\boldsymbol{x})\ |\ f\in\mathcal{F}\}$. Then, we have the following theorem.
\begin{theo}\label{error_bound}
Assume the loss function $\mathcal{L}(f(\boldsymbol{x}),y)$ is $\rho$-Lipschitz with respect to $f(\boldsymbol{x})$ $(0<\rho<\infty)$ for all $y \in \mathcal{Y}$. 
Let $C_{\mathcal{L}} = \sup_{\boldsymbol{x} \in \mathcal{X}, f \in \mathcal{F}, y\in\mathcal{Y}}\mathcal{L}(f(\boldsymbol{x}), y)$ and $\mathfrak{R}_n(\mathcal{G}_y)$ be the Rademacher complexity of $\mathcal{G}_y$ given the sample size $n$. Then, for any $\delta>0$, with probability at least $1-\delta$,
\begin{align}
\nonumber
&R(\widehat{f}) - R(f^\star) \\
\nonumber
& \leq \sum\limits_{j=1}^{k-1}p(s=j)\Big(\frac{4\sqrt{2}\rho k(k-1)}{j}\sum\limits_{y=1}^k\mathfrak{R}_{n_j}(\mathcal{G}_y)+\frac{C_j}{\sqrt{n_j}}\Big),
\end{align}
where $C_j=(4k-4j-2)C_{\mathcal{L}}\sqrt{\frac{\log\frac{2(k-1)}{\delta}}{2}}$ for all $j \in \{1, \ldots, k-1\}$ and $n_j$ denotes the number of examples whose complementary label set size is $j$.
\end{theo}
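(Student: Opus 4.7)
The plan is to follow the standard Rademacher-complexity template for excess-risk bounds, respecting the stratified structure of $\widehat{R}(f)$ in Eq.~(\ref{emp_risk}). The first step reduces the excess risk to a uniform deviation: since $\widehat{R}(\widehat{f}) \leq \widehat{R}(f^\star)$ by optimality,
\[
R(\widehat{f}) - R(f^\star) \leq 2 \sup_{f \in \mathcal{F}} |R(f) - \widehat{R}(f)|.
\]
By Theorem~\ref{general_risk} and Eq.~(\ref{emp_risk}), both risks split by the complementary-label-set size. Writing $\widehat{\widebar{R}}_j(f) := \frac{1}{n_j} \sum_{i:\,|\widebar{Y}_i|=j} \widebar{\mathcal{L}}_j(f(\boldsymbol{x}_i), \widebar{Y}_i)$, the samples with $|\widebar{Y}_i|=j$ are i.i.d.\ from $\widebar{p}(\boldsymbol{x}, \widebar{Y} \mid s=j)$. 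I would condition on the counts $n_1, \ldots, n_{k-1}$, identify $\widehat{R}(f)$ with $\sum_j p(s=j)\,\widehat{\widebar{R}}_j(f)$ (the two differ only by a concentration gap between $n_j/n$ and $p(s=j)$ that can be absorbed into lower-order terms), and deduce $|R(f) - \widehat{R}(f)| \leq \sum_{j=1}^{k-1} p(s=j)\,|\widebar{R}_j(f) - \widehat{\widebar{R}}_j(f)|$.

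Next I would control each $\sup_{f} |\widebar{R}_j(f) - \widehat{\widebar{R}}_j(f)|$ by the standard two-ingredient argument. From Eq.~(\ref{unbiased_loss}), $\widebar{\mathcal{L}}_j$ is a sum of $k-j$ nonnegative terms in $[0, C_{\mathcal{L}}]$ minus $j$ terms scaled by $(k-1-j)/j$, so its range is at most $(2k-2j-1)\,C_{\mathcal{L}}$ and replacing one sample changes $\widehat{\widebar{R}}_j$ by at most $(2k-2j-1)\,C_{\mathcal{L}}/n_j$. Applying McDiarmid's inequality twice (once per sign of $\widebar{R}_j - \widehat{\widebar{R}}_j$) at confidence level $\delta/(2(k-1))$ and union-bounding over $j \in \{1,\ldots,k-1\}$, the additional factor of $2$ from the excess-risk reduction produces exactly the $C_j/\sqrt{n_j}$ term. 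A standard symmetrization argument then yields $\mathbb{E}[\sup_f |\widebar{R}_j(f) - \widehat{\widebar{R}}_j(f)|] \leq 2\,\mathfrak{R}_{n_j}(\widebar{\mathcal{L}}_j \circ \mathcal{F})$.

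The main obstacle is bounding $\mathfrak{R}_{n_j}(\widebar{\mathcal{L}}_j \circ \mathcal{F})$ by the per-coordinate Rademacher complexities $\mathfrak{R}_{n_j}(\mathcal{G}_y)$, because $\widebar{\mathcal{L}}_j$ depends on all $k$ coordinates of $f(\boldsymbol{x})$ simultaneously and the scalar Talagrand contraction does not apply directly. I would write
\[
\widebar{\mathcal{L}}_j\bigl(f(\boldsymbol{x}), \widebar{Y}\bigr) = \sum_{y=1}^{k} a_y(\widebar{Y})\,\mathcal{L}(f(\boldsymbol{x}), y),
\]
with $a_y(\widebar{Y}) \in \{1,\,-(k-1-j)/j\}$ and $|a_y(\widebar{Y})| \leq (k-1)/j$; together with the $\rho$-Lipschitzness of $\mathcal{L}(\cdot, y)$ this makes $\widebar{\mathcal{L}}_j$ Lipschitz in $f(\boldsymbol{x}) \in \mathbb{R}^k$ with constant at most $\rho\,k(k-1)/j$. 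Invoking a vector-valued contraction inequality (e.g.\ Maurer's lemma), which is precisely where the characteristic $\sqrt{2}$ appears, yields
\[
\mathfrak{R}_{n_j}(\widebar{\mathcal{L}}_j \circ \mathcal{F}) \leq \sqrt{2}\,\frac{\rho\,k(k-1)}{j} \sum_{y=1}^{k} \mathfrak{R}_{n_j}(\mathcal{G}_y).
\]
Folding in the factors of $2$ from the excess-risk reduction and from symmetrization produces the coefficient $4\sqrt{2}\,\rho\,k(k-1)/j$, and summing the per-$j$ bounds with weights $p(s=j)$ completes the proof.
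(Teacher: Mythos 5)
Your proposal is correct and follows the same overall template as the paper's proof: the reduction $R(\widehat{f})-R(f^\star)\le 2\sum_j p(s=j)\sup_f|\widehat{R}_j(f)-R_j(f)|$, McDiarmid's inequality with the per-sample range $(2k-2j-1)C_{\mathcal{L}}$ and a union bound over the two tails and the $k-1$ strata, symmetrization, and Maurer's vector contraction inequality, with all constants landing where they should. The one place you genuinely diverge is the contraction step (the paper's Lemma~5): the paper first eliminates the dependence of $\widebar{\mathcal{L}}_j$ on the random set $\widebar{Y}_i$ by introducing indicators $\alpha_{i,y}=\mathbb{I}[y\in\widebar{Y}_i]$, splitting the positive and negative parts, and exploiting that $(1-2\alpha_{i,y})\sigma_i$ is distributed as $\sigma_i$, which collapses the bound to $\tfrac{k-1}{j}$ times the Rademacher complexity of the full-sum loss class before contraction is applied coordinate-wise; you instead treat $u\mapsto\widebar{\mathcal{L}}_j(u,\widebar{Y}_i)$ directly as an $i$-dependent Lipschitz map $\mathbb{R}^k\to\mathbb{R}$ with constant at most $\rho k(k-1)/j$ and invoke the vector contraction once. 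Your route is shorter and equally valid (Maurer's inequality permits the $h_i$ to vary with $i$, so the $\widebar{Y}_i$-dependence is harmless), and it reaches the identical constant $\sqrt{2}\rho k(k-1)/j$; in fact your own intermediate estimate shows the composite loss is $(2k-2j-1)\rho$-Lipschitz, so your argument would even support a slightly sharper bound than the one stated. Your explicit acknowledgment of the gap between $n_j/n$ and $p(s=j)$ is also more careful than the paper, which silently identifies the two.
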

The definition of Rademacher complexity and the proof of Theorem~\ref{error_bound} are provided in Appendix C. Theorem~\ref{error_bound} shows that the empirical risk minimizer converges to the true risk minimizer with high probability as the number of training data approaches infinity. It is
worth noting that this bound is not only related to the Redemacher
complexity of the function class, but also $s$ and
$k$. This observation accords
with our intuition that the learning task will be harder if the
number of classes $k$ increases or the size of the complementary label set $s$ decreases. 



\subsection{Practical Implementation}\label{implement}
In this section, we present the practical implementation of our proposed formulation and improvements of the used loss functions. As described above, we have provided a general unbiased risk estimator that is able to use arbitrary loss functions. There arises a question: Can all loss functions work well in our approach? Unfortunately, the answer is negative.

The original classification risk estimator in Eq.~(\ref{risk}) includes an expectation over a non-negative loss $\mathcal{L}:\mathbb{R}^k\times [k]\rightarrow\mathbb{R}_+$, hence the expected risk and the empirical approximation are both lower-bounded by zero. However, our proposed risk estimator in Theorem \ref{general_risk} contains a negative term. 
Although the expected risk estimator is unbiased,
the empirical estimator may become unbounded below if the used loss function is unbounded, thereby leading to over-fitting. Similar issues have also been shown by~\citet{Kiryo2017Positive,Ishida2019Complementary}.
The above analysis suggests that a bounded loss is probably better than an unbounded loss, in our empirical risk estimator (i.e., Eq.~(\ref{emp_risk})).

To demonstrate the above conjecture, we would like to insert bounded and unbounded losses into Eq.~(\ref{emp_risk}), for comparison studies.
Note that we assume that the softmax function is absorbed in each loss, and denote by $p_{\boldsymbol{\theta}}(y|\boldsymbol{x})={\exp(f_y(\boldsymbol{x}))}/{(\sum_{j=1}^k\exp(f_j(\boldsymbol{x})))}$ the predicted probability of the instance $\boldsymbol{x}$ belonging to class $y$, where $\boldsymbol{\theta}$ denotes the parameters of the model $f$. In this way, we list the compared loss functions as follows.
\begin{figure*}[!t]
\centering
\subfigure[MNIST, linear]{
      \includegraphics[width=1.6in]{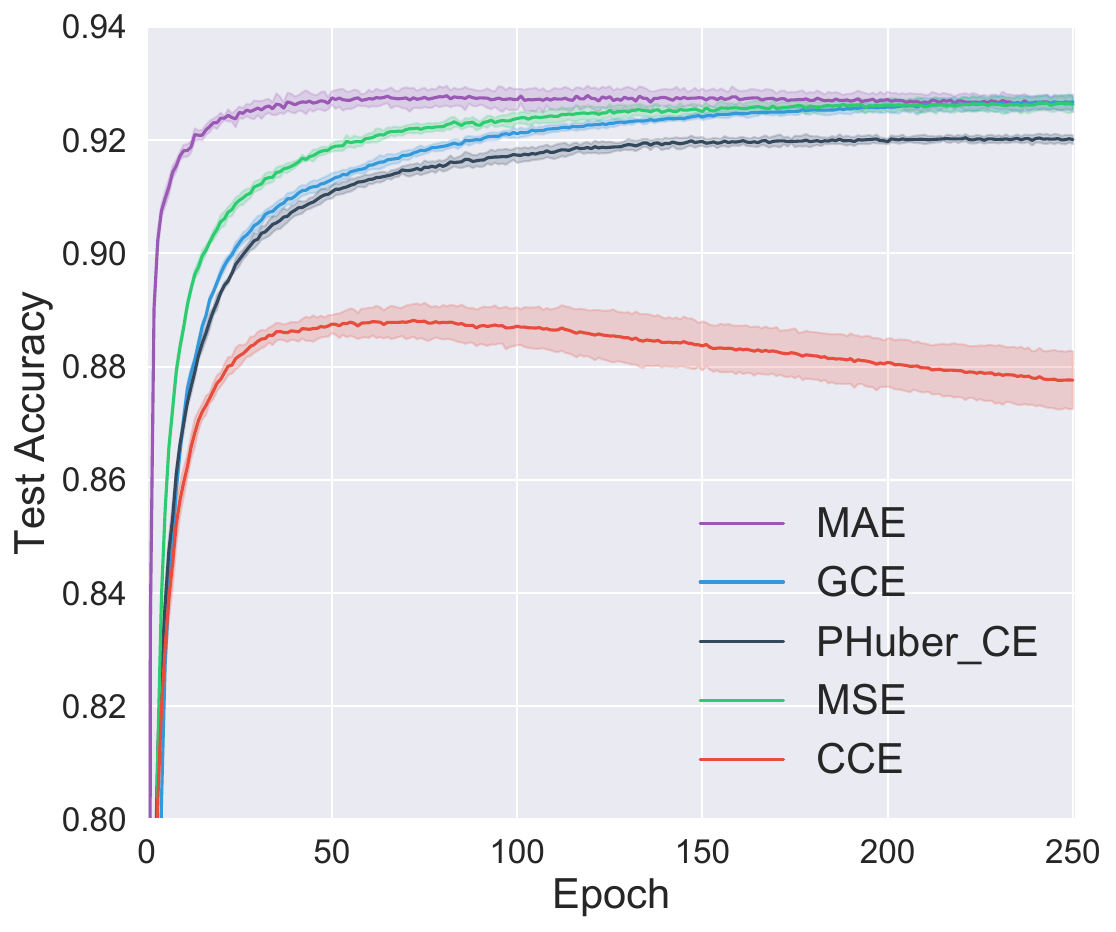}}%
  \subfigure[MNIST, MLP]{
      \includegraphics[width=1.6in]{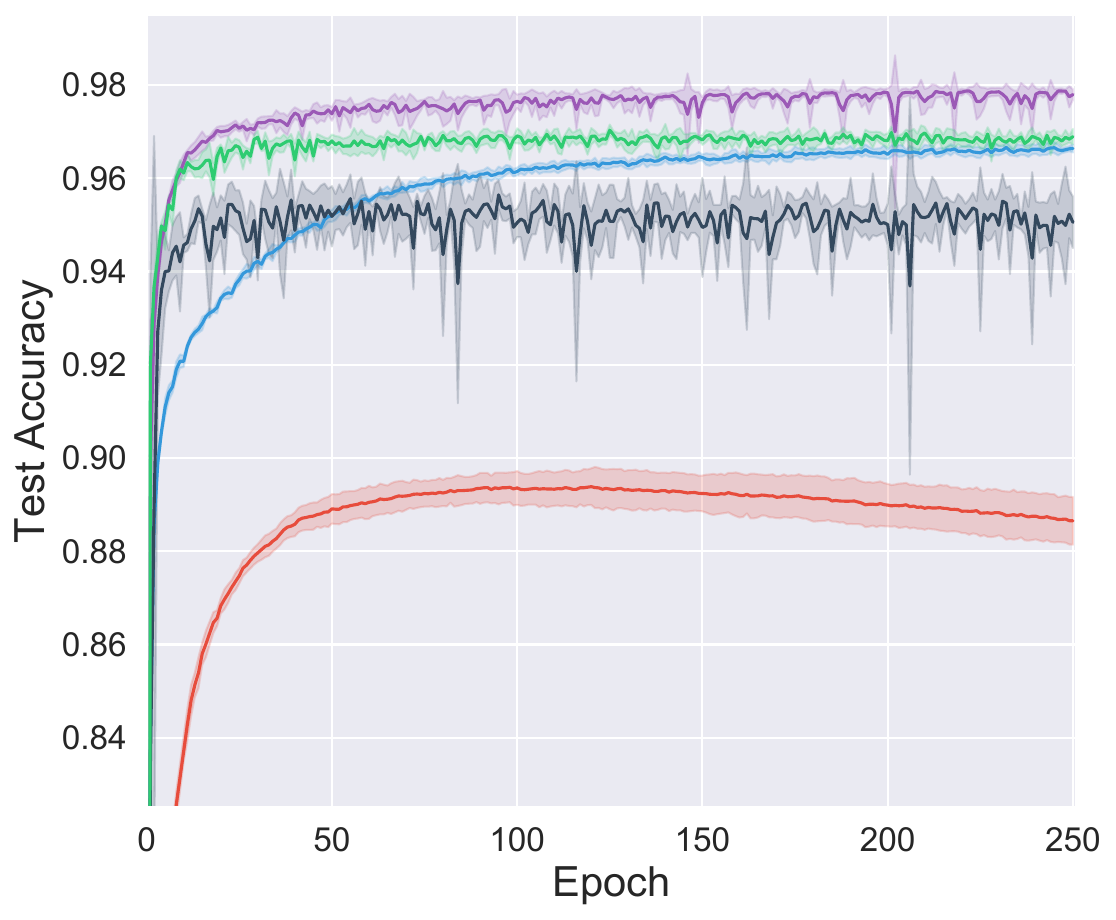}}
      \subfigure[Fashion MNIST, linear]{
      \includegraphics[width=1.6in]{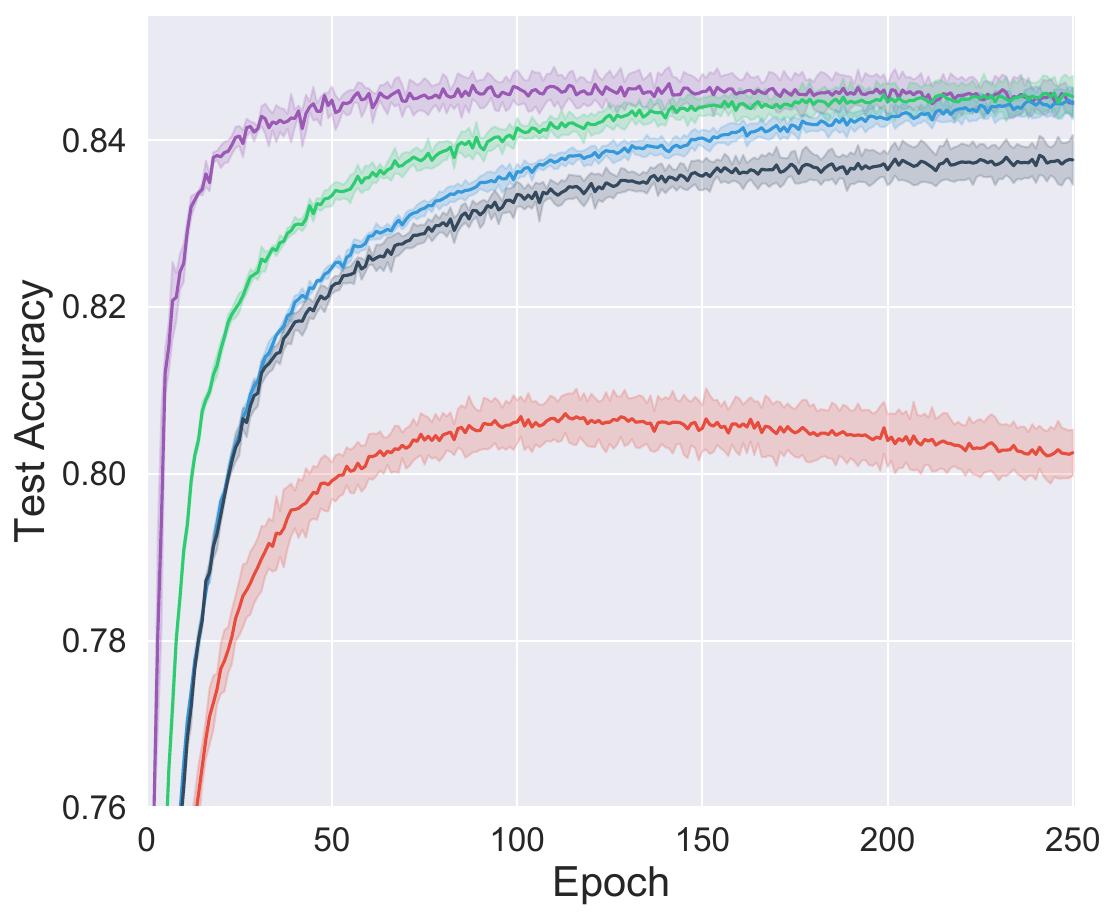}}
      \subfigure[Fashion MNIST, MLP]{
      \includegraphics[width=1.6in]{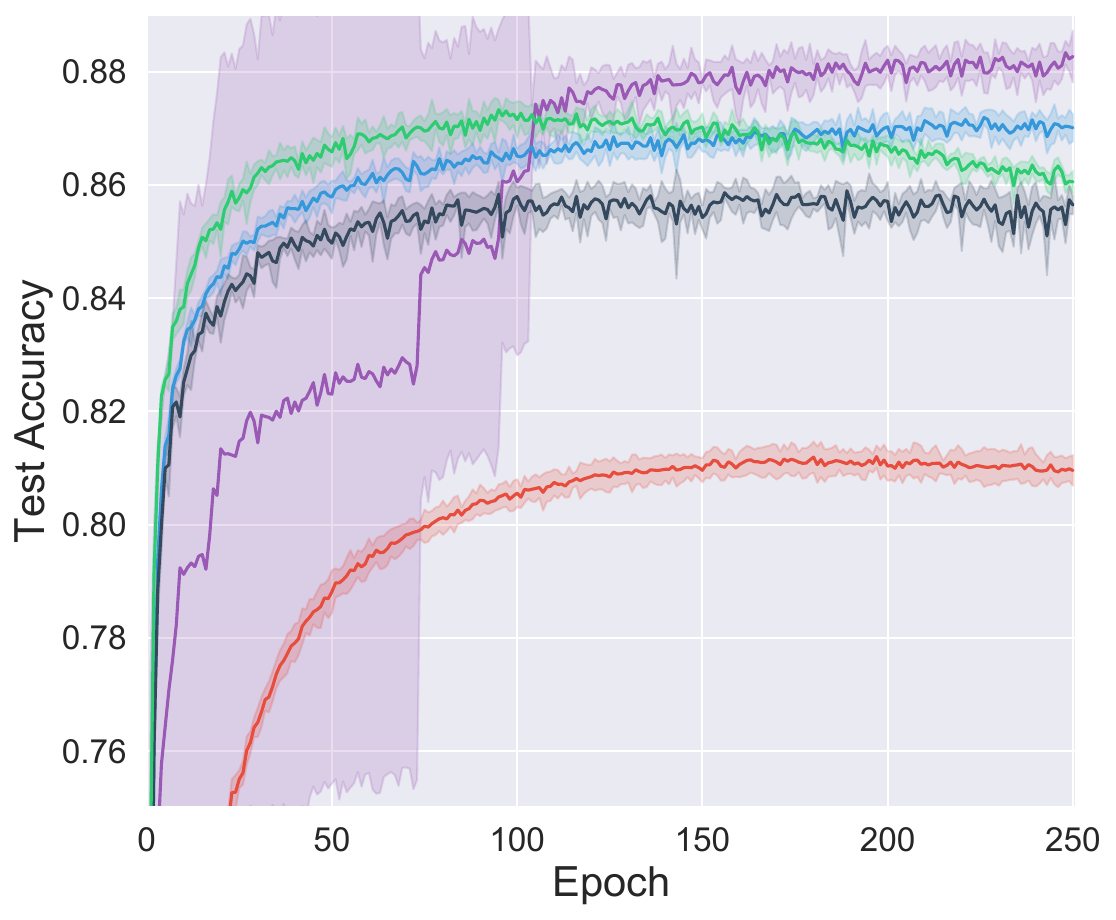}}
      \vspace{-0.3cm}
      \subfigure[Kuzushiji MNIST, linear]{
      \includegraphics[width=1.6in]{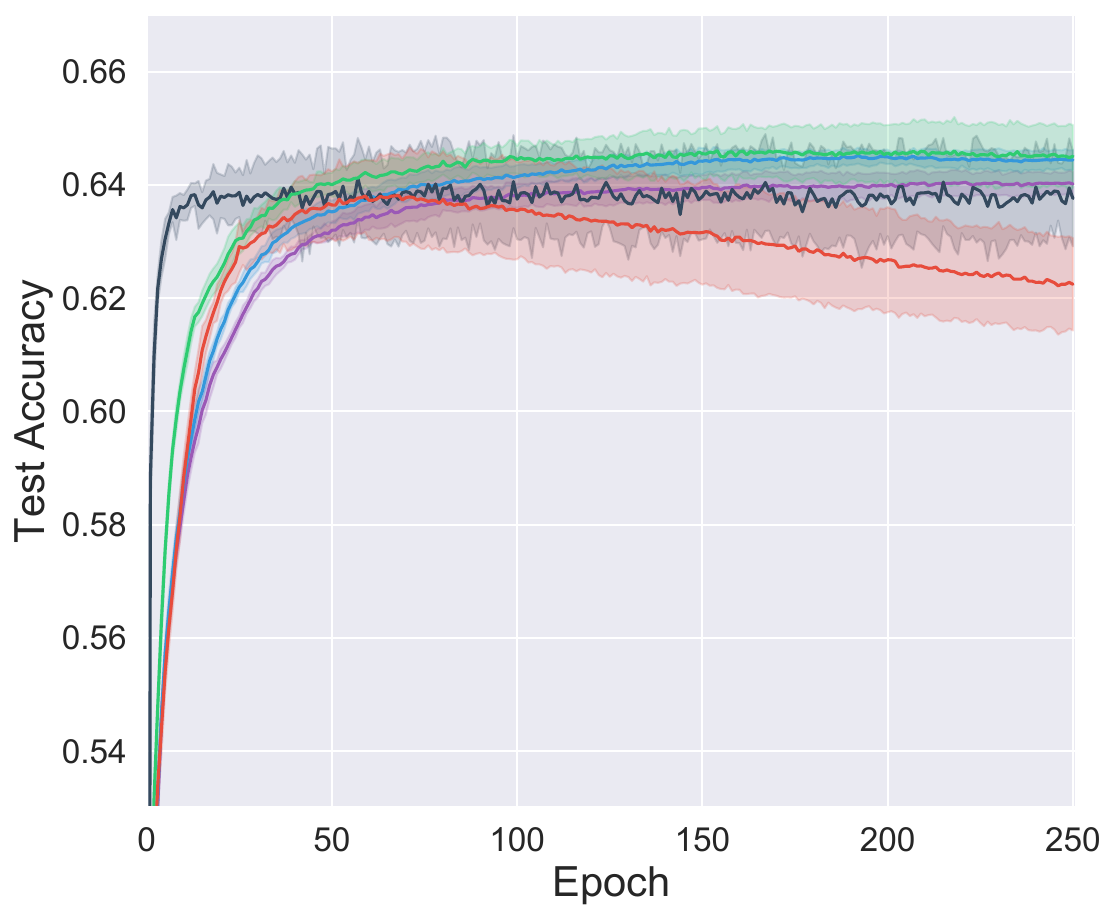}}
      \subfigure[Kuzushiji MNIST, MLP]{
      \includegraphics[width=1.6in]{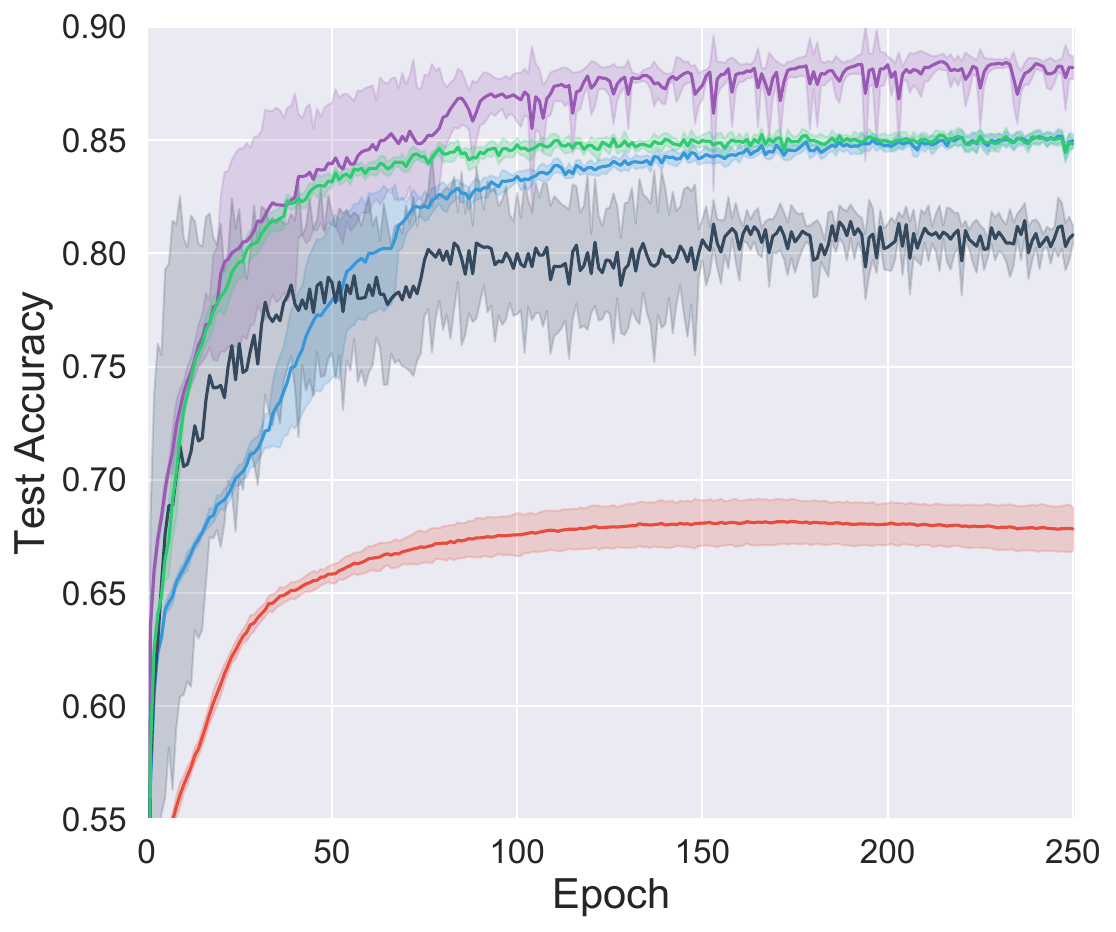}}
      \subfigure[CIFAR-10, ResNet]{
      \includegraphics[width=1.6in]{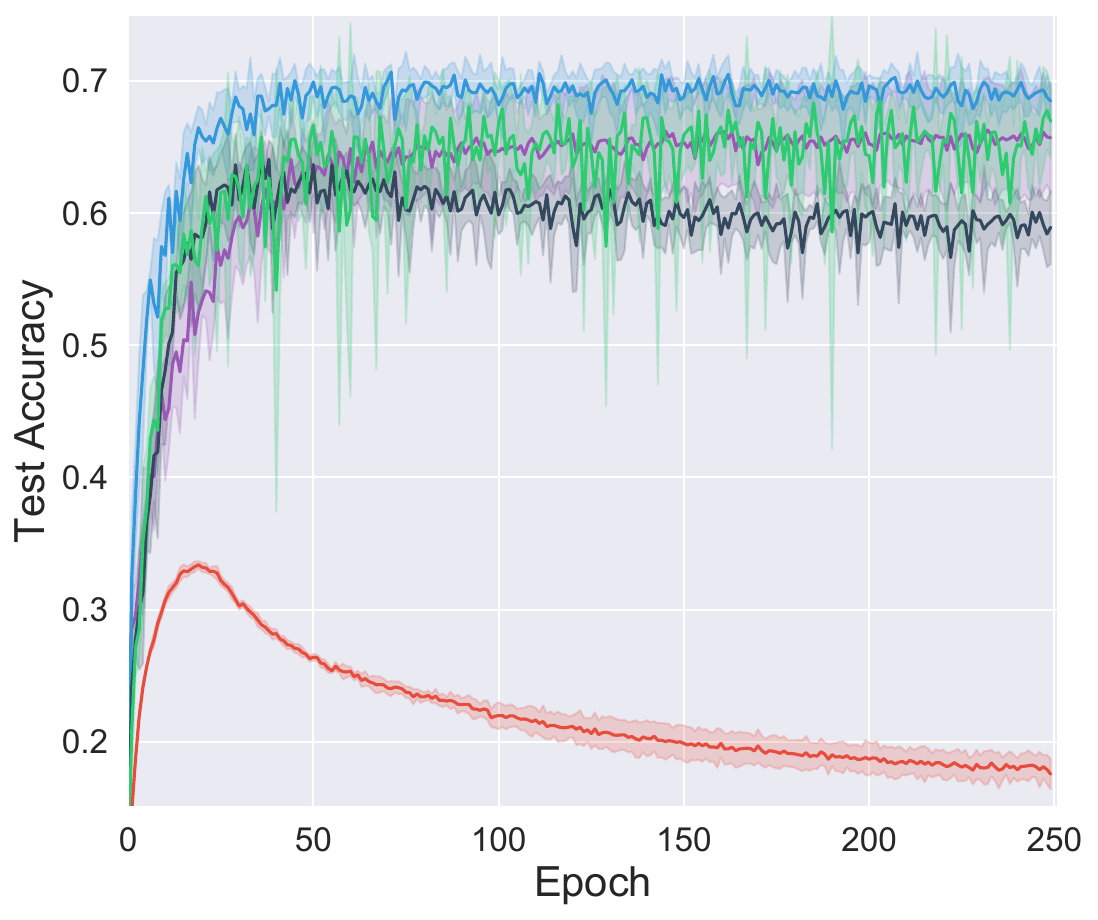}}
      \subfigure[CIFAR-10, DenseNet]{
      \includegraphics[width=1.6in]{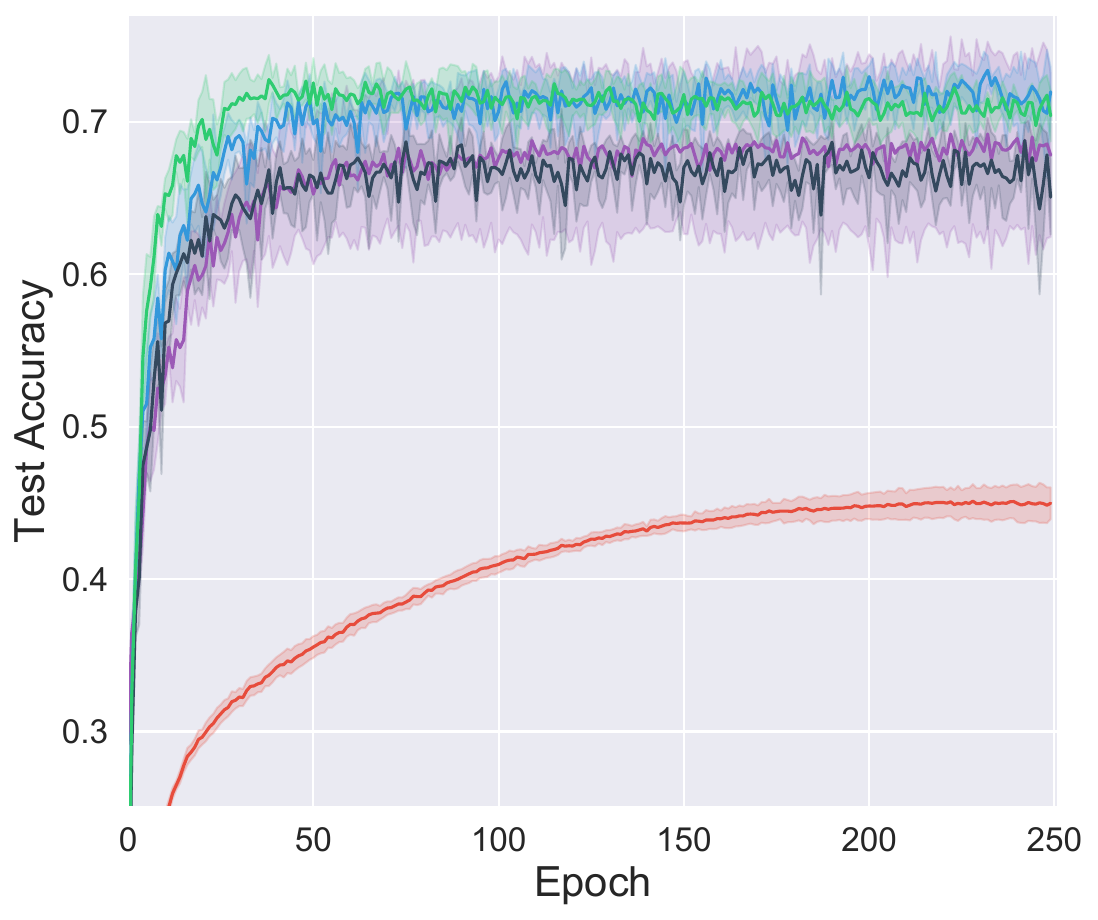}}
  \caption{Experimental results of different loss functions for different datasets and models. Dark colors show the mean accuracy of 5 trials and light colors show the standard deviation.}
  \vspace{-0.3cm}
  \label{fig1} 
\end{figure*}
\begin{itemize}
\setlength{\parskip}{-5pt}
\item Categorical Cross Entropy (CCE):
\begin{gather}
\nonumber
\mathcal{L}_{\text{CCE}}(f(\boldsymbol{x}),y)=-\log p_{\boldsymbol{\theta}}(y|\boldsymbol{x}).
\end{gather}
\item Mean Absolute Error (MAE):
\begin{gather}
\nonumber
\mathcal{L}_{\text{MAE}}(f(\boldsymbol{x}),y)=2-2p_{\boldsymbol{\theta}}(y|\boldsymbol{x}).
\end{gather}
\item Mean Square Error (MSE):
\begin{gather}
\nonumber
\mathcal{L}_{\text{MSE}}(f(\boldsymbol{x}),y)=1-2p_{\boldsymbol{\theta}}(y|\boldsymbol{x})+\sum\nolimits_{j=1}^kp_{\boldsymbol{\theta}}(j|\boldsymbol{x})^2.
\end{gather}
\item Generalized Cross Entropy (GCE)~\cite{zhang2018generalized}:
\begin{gather}
\nonumber
\mathcal{L}_{\text{GCE}}(f(\boldsymbol{x}),y)={(1-p_{\boldsymbol{\theta}}(y|\boldsymbol{x})^q)}/{q},
\end{gather}
where $q\in(0,1]$ is a user-defined hyper-parameter. We set $q=0.7$, as suggested by \citet{zhang2018generalized}. 
\item Partially Huberised Cross Entropy (PHuber-CE) \cite{menon2020can}:
\begin{align}
\small
\nonumber
\mathcal{L}_{\text{PHuber-CE}}(f(\boldsymbol{x}),y)=\left\{\begin{matrix}
-\log p_{\boldsymbol{\theta}}(y|\boldsymbol{x}), \text{if }p_{\boldsymbol{\theta}}(y|\boldsymbol{x})\geq\frac{1}{\tau},\\
-\tau p_{\boldsymbol{\theta}}(y|\boldsymbol{x})+\log\tau+1,\ \text{else},
\end{matrix}\right.
\end{align}
where $\tau>0$ is a user-defined hyper-parameter. We set $\tau=10$, because it works well in \citet{menon2020can}.
\end{itemize}
The detailed derivations of the above loss functions and their bounds are provided in Appendix D. Among these losses, CCE is unbounded while the others are bounded. We will experimentally demonstrate (Figure~\ref{fig1}) that
by inserting the above losses into Eq.~(\ref{emp_risk}), bounded loss is significantly better than unbounded loss. Furthermore, we conduct a deeper analysis of MAE because MAE has the special property that MAE is not only bounded, but also satisfies the symmetric condition~\cite{ghosh2017robust}, i.e.,
$\sum_{y=1}^k\mathcal{L}_{\text{MAE}}\big(f(\boldsymbol{x}),y\big)=2k-2$, which means the sum of the losses over all classes is a constant for arbitrary examples.
However, is MAE good enough? Previous studies~\cite{zhang2018generalized,wang2019improving} have already shown that MAE suffers from the optimization issue, which would affect its practical performance.
To alleviate this problem, we further improve MAE by proposing two upper-bound surrogate loss functions. Specifically, by using MAE in Eq.~(\ref{emp_risk}), we obtain
\begin{align}
\nonumber
\widehat{R}(f) =& \frac{k-1}{|\widebar{Y}_i|}\sum\nolimits_{y\notin\widebar{Y}_i}\mathcal{L}_{\text{MAE}}\big(f(\boldsymbol{x}_i),y\big)\\
\label{mae_loss}
=&\frac{2k-2}{|\widebar{Y}_i|}\mathcal{L}^\prime_{\text{MAE}}\big(f(\boldsymbol{x}_i),\widebar{Y}_i\big)+Z_i,
\end{align}
where $\mathcal{L}^\prime_{\text{MAE}}\big(f(\boldsymbol{x}_i),\widebar{Y}_i\big):=1-\sum_{j\notin\widebar{Y}_i}p_{\boldsymbol{\theta}}(j|\boldsymbol{x}_i)$, and $Z_i$ is a constant independent of $f(\boldsymbol{x}_i)$. It is clear that minimizing $\mathcal{L}^\prime_{\text{MAE}}\big(f(\boldsymbol{x}_i),\widebar{Y}_i\big)$ is equivalent to minimizing $\sum_{y\notin\widebar{Y}_i}\mathcal{L}_{\text{MAE}}\big(f(\boldsymbol{x},y)\big)$.

Based on this fact, we further introduce two upper-bound surrogate loss functions of $\mathcal{L}^\prime_{\text{MAE}}$:
\begin{align}
\nonumber
\mathcal{L}_{\text{EXP}}(f(\boldsymbol{x}_i),\widebar{Y}_i) &= \exp\Big(-\sum\nolimits_{j\notin\widebar{Y}_i}p_{\boldsymbol{\theta}}(j|\boldsymbol{x}_i)\Big),\\
\nonumber
\mathcal{L}_{\text{LOG}}(f(\boldsymbol{x}_i),\widebar{Y}_i) &= -\log\Big(\sum\nolimits_{j\notin\widebar{Y}_i}p_{\boldsymbol{\theta}}(j|\boldsymbol{x}_i)\Big).
\end{align}
One can easily verify that $\mathcal{L}_{\text{MAE}}^\prime$ is upper bounded by $\mathcal{L}_{\text{EXP}}$ and $\mathcal{L}_{\text{LOG}}$ using the two inequalities $1-z\leq\exp(-z)$ and $1-z\leq -\log z$, respectively. By replacing $\mathcal{L}_{\text{MAE}}^\prime$ by $\mathcal{L}_{\text{LOG}}$ and $\mathcal{L}_{\text{LOG}}$ in Eq.~(\ref{mae_loss}), we obtain two new methods for learning with MCLs.
We explain the advantage of $\mathcal{L}_{\text{EXP}}$ and $\mathcal{L}_{\text{LOG}}$ over $\mathcal{L}_{\text{MAE}}^\prime$ by closely examining their gradients:
\begin{align}
\nonumber
\frac{\partial\mathcal{L}^\prime_{\text{MAE}}}{\partial \boldsymbol{\theta}} &= \left\{\begin{matrix}
-\nabla_{\boldsymbol{\theta}}p_{\boldsymbol{\theta}}(j|\boldsymbol{x}_i),&\ \text{if}\ j\notin\widebar{Y}_i,\\
0,&\ \text{else},
\end{matrix}\right.\\
\nonumber
\frac{\partial\mathcal{L}_{\text{EXP}}}{\partial \boldsymbol{\theta}} &= \left\{\begin{matrix}
-\nabla_{\boldsymbol{\theta}}p_{\boldsymbol{\theta}}(j|\boldsymbol{x}_i)\cdot w_{\text{EXP}},&\ \text{if}\ j\notin\widebar{Y}_i,\\
0,&\ \text{else},
\end{matrix}\right.\\
\nonumber
\frac{\partial\mathcal{L}_{\text{LOG}}}{\partial \boldsymbol{\theta}} &= \left\{\begin{matrix}
-\nabla_{\boldsymbol{\theta}}p_{\boldsymbol{\theta}}(j|\boldsymbol{x}_i)\cdot w_{\text{LOG}},&\ \text{if}\ j\notin\widebar{Y}_i,\\
0,&\ \text{else},
\end{matrix}\right.
\end{align}
where $w_{\text{EXP}}=\exp\big(-\sum_{j\notin\widebar{Y}_i}p_{\boldsymbol{\theta}}(j|\boldsymbol{x}_i)\big)$ and $w_{\text{LOG}}=\big(\sum_{j\notin\widebar{Y}_i}p_{\boldsymbol{\theta}}(j|\boldsymbol{x}_i)\big)^{-1}$. From their gradients, we can clearly observe that $\mathcal{L}_{\text{MAE}}^\prime$ basically treats each example equally, while $\mathcal{L}_{\text{EXP}}$ and $\mathcal{L}_{\text{LOG}}$ give more weights to difficult examples. Concretely, if $\sum_{j\notin\widebar{Y}_i}p_{\boldsymbol{\theta}}(j|\boldsymbol{x}_i)$ is small, both $w_{\text{EXP}}$ and $w_{\text{LOG}}$ would be large. In other words, $\mathcal{L}_{\text{EXP}}$ and $\mathcal{L}_{\text{LOG}}$ pay more attention to hard examples whose sum of the predicted confidences of all the non-complementary labels is small.
\section{Experiments}

\begin{table*}[t]
\centering
\caption{Classification accuracy (mean$\pm$std) of each algorithm on the four UCI datasets using a linear model for 5 trials. The best performance among all the approaches is highlighted in boldface. In addition, $\bullet/\circ$ indicates whether the performance of our approach (the best of EXP and LOG) is statistically superior/inferior to the comparing algorithm on each dataset (paired $t$-test at 0.05 significance level).}
\label{uci}
\resizebox{1.00\textwidth}{!}{
\setlength{\tabcolsep}{5mm}{
\begin{tabular}{c|c|c|c|c|c}
\toprule
\multicolumn{2}{c|}{Approach}              & Yeast & Texture & Dermatology & Synthetic Control\\
\midrule
\multirow{2}{*}{Upper-bound Losses }
                              & EXP       &  54.94$\pm$1.56\%$\bullet$    &   97.51$\pm$0.09\%$\bullet$ & 98.89$\pm$0.37\% & 27.87$\pm$5.13\%$\bullet$
                         \\
                              & LOG &   \bf{60.11$\pm$1.93\%}  &  98.88$\pm$0.43\% & \bf{99.46$\pm$1.14\%}    &  \bf{90.73$\pm$4.41\%}   \\
\midrule
\multirow{4}{*}{Bounded Losses}
& MAE     &  33.07$\pm$0.37\%$\bullet$ & 85.29$\pm$7.93\%$\bullet$ & 85.39$\pm$2.58\%$\bullet$ & 23.50$\pm$2.44\%$\bullet$ \\
                              & MSE  &  58.17$\pm$1.52\%$\bullet$ & 97.59$\pm$0.16\%$\bullet$     &  97.84$\pm$1.21\%$\bullet$ & 34.20$\pm$8.69\%$\bullet$ \\
                              & GCE       &  57.56$\pm$1.56\%$\bullet$ & 97.25$\pm$0.31\%$\bullet$ & 97.53$\pm$1.81\%$\bullet$ & 23.67$\pm$3.10\%$\bullet$ \\
                              & Phuber-CE & 55.54$\pm$1.03\%$\bullet$ & 94.89$\pm$3.28\%$\bullet$ & 95.14$\pm$2.41\%$\bullet$ & 24.71$\pm$3.18\%$\bullet$\\
\midrule
Unbounded Loss                & CCE & 49.50$\pm$3.58\%$\bullet$ & 92.08$\pm$1.15\%$\bullet$ & 83.19$\pm$3.65\%$\bullet$ & 63.47$\pm$6.91\%$\bullet$\\
\midrule
\multirow{5}{*}{Decomposition before Shuffle}  & GA        &  27.91$\pm$5.02\%$\bullet$ & 90.93$\pm$1.34\%$\bullet$ & 36.05$\pm$9.79\%$\bullet$ & 18.12$\pm$1.74\%$\bullet$\\
                              & NN   &  32.73$\pm$3.59\%$\bullet$ & 96.29$\pm$0.39\%$\bullet$ & 61.49$\pm$6.83\%$\bullet$ & 55.12$\pm$4.43\%$\bullet$   \\
                              & FREE      &   35.50$\pm$2.79\%$\bullet$ & 94.36$\pm$1.08\%$\bullet$ & 86.30$\pm$5.62\%$\bullet$ & 76.95$\pm$3.26\%$\bullet$ \\
                              & PC        &   53.89$\pm$3.53\%$\bullet$ & 92.68$\pm$0.81\%$\bullet$ & 96.27$\pm$3.07\%$\bullet$ & 72.63$\pm$5.86\%$\bullet$  \\
                              & Forward   &  58.15$\pm$1.54\%$\bullet$ & \bf{98.95$\pm$0.17\%} & 99.37$\pm$0.85\% & 38.77$\pm$6.06\%$\bullet$  \\
\midrule
\multirow{5}{*}{Decomposition after Shuffle}  & GA        &   28.21$\pm$1.53\%$\bullet$ & 83.66$\pm$2.27\%$\bullet$ & 42.05$\pm$7.94\%$\bullet$ & 25.46$\pm$1.28\%$\bullet$\\
                              & NN        &  36.04$\pm$2.24\%$\bullet$ & 93.91$\pm$0.40\%$\bullet$ & 62.54$\pm$9.19\%$\bullet$ & 59.80$\pm$5.14\%$\bullet$\\
                              & FREE      &   43.47$\pm$1.36\%$\bullet$ & 93.94$\pm$0.72\%$\bullet$ & 86.22$\pm$6.07\%$\bullet$ & 73.33$\pm$2.17\%$\bullet$\\
                              & PC     &  54.58$\pm$2.57\%$\bullet$ & 94.19$\pm$1.21\%$\bullet$ & 95.73$\pm$3.33\%$\bullet$ & 69.53$\pm$9.01\%$\bullet$\\
                              & Forward   &  59.46$\pm$1.16\% & 97.65$\pm$0.32\%$\bullet$ & 99.03$\pm$1.33\% & 43.57$\pm$5.83\%$\bullet$  \\
\midrule
Partial Label Convex Formulation                    & CLPL      &  55.39$\pm$1.21\%$\bullet$     &   92.07$\pm$0.88\%$\bullet$    &     99.42$\pm$0.54\%  & 63.57$\pm$5.46\%$\bullet$   \\
\bottomrule
\end{tabular}
}
}
\end{table*}

\begin{table*}[!t]
\vspace{-0.2cm}
\centering\caption{Classification accuracy (mean$\pm$std) of each algorithm on the four benchmark datasets using a linear model for 5 trials. The best performance among all the approaches is highlighted in boldface. In addition, $\bullet/\circ$ indicates whether the performance of our approach (the best of EXP and LOG) is statistically superior/inferior to the comparing algorithm on each dataset (paired $t$-test at 0.05 significance level).}
\label{linear}
\resizebox{1.00\textwidth}{!}{
\setlength{\tabcolsep}{5mm}{
\begin{tabular}{c|c|c|c|c|c}
\toprule
\multicolumn{2}{c|}{Approach}              & MNIST & Kuzushiji & Fashion & 20Newsgroups \\
\midrule
\multirow{2}{*}{Upper-bound Losses }
                              & EXP       &   \bf{92.67$\pm$0.11\%}   &   64.23$\pm$0.33\%$\bullet$     &    \bf{84.56$\pm$0.25\%}            & 81.72$\pm$0.39\%$\bullet$\\
                              & LOG &  92.58$\pm$0.09\%$\bullet$     &   \bf{68.89$\pm$0.25\%}     &     84.42$\pm$0.16\% & \bf{84.06$\pm$0.57\%}         \\
\midrule
\multirow{4}{*}{Bounded Losses}
& MAE     &   92.66$\pm$0.12\%    &   64.03$\pm$0.19\%$\bullet$  & 84.50$\pm$0.16\% & 79.68$\pm$1.40\%$\bullet$ \\
                              & MSE       &   92.64$\pm$0.13\%    &    64.51$\pm$0.55\%$\bullet$    &    84.53$\pm$0.20\%     & 81.55$\pm$0.52\%$\bullet$       \\
                              & GCE       &  92.66$\pm$0.12\%     &  64.44$\pm$0.17\%$\bullet$      &       84.44$\pm$0.15\%         & 81.78$\pm$0.60\%$\bullet$\\
                              & Phuber-CE &   92.02$\pm$0.07\%$\bullet$    &   63.81$\pm$0.75\%$\bullet$    &  83.76$\pm$0.22\%$\bullet$    & 73.52$\pm$1.04\%$\bullet$          \\
\midrule
Unbounded Loss                & CCE        &   88.23$\pm$0.19\%$\bullet$    &  62.27$\pm$0.84\%$\bullet$      &        80.25$\pm$0.29\%$\bullet$    & 63.78$\pm$0.79\%$\bullet$   \\
\midrule
\multirow{5}{*}{Decomposition before Shuffle}  & GA        &   85.51$\pm$0.26\%$\bullet$    &    55.61$\pm$0.24\%$\bullet$    &      78.64$\pm$0.33\%$\bullet$   &  76.64$\pm$0.62\%$\bullet$   \\
                              & NN        &   88.09$\pm$0.16\%$\bullet$    &    60.54$\pm$0.23\%$\bullet$    &     80.68$\pm$0.07\%$\bullet$   & 76.00$\pm$0.37\%$\bullet$        \\
                              & FREE      &   89.35$\pm$0.14\%$\bullet$    &  65.21$\pm$0.45\%$\bullet$      &      81.22$\pm$0.11\%$\bullet$   & 68.34$\pm$0.72\%$\bullet$        \\
                              & PC        &  88.21$\pm$0.23\%$\bullet$    &   62.76$\pm$0.40\%$\bullet$     &      80.60$\pm$0.18\%$\bullet$    & 66.91$\pm$1.20\%$\bullet$     \\
                              & Forward   &   92.57$\pm$0.05\%$\bullet$    &   63.51$\pm$0.22\%$\bullet$     &   84.38$\pm$0.20\%   & 74.69$\pm$1.14\%$\bullet$        \\
\midrule
\multirow{5}{*}{Decomposition after Shuffle}  & GA      &   83.16$\pm$0.22\%$\bullet$    &  56.31$\pm$0.42\%$\bullet$   &          73.37$\pm$0.10\%$\bullet$   &  66.14$\pm$0.79\%$\bullet$\\
                              & NN        &  88.79$\pm$0.26\%$\bullet$    &   63.19$\pm$0.12\%$\bullet$     &    79.77$\pm$0.14\%$\bullet$  & 66.35$\pm$0.53\%$\bullet$          \\
                              & FREE      &  89.02$\pm$0.22\%$\bullet$    &   64.18$\pm$0.18\%$\bullet$    &       80.11$\pm$0.04\%$\bullet$   & 66.16$\pm$0.60\%$\bullet$       \\
                              & PC        & 87.76$\pm$0.17\%$\bullet$   &   61.64$\pm$0.38\%$\bullet$     &    80.58$\pm$0.17\%$\bullet$   & 65.64$\pm$0.81\%$\bullet$       \\
                              & Forward   &   92.54$\pm$0.04\%$\bullet$  &   63.69$\pm$0.14\%$\bullet$    &      84.37$\pm$0.17\%$\bullet$    & 71.98$\pm$3.41\%$\bullet$   \\
\midrule
Partial Label Convex Formulation                    & CLPL      &  81.85$\pm$0.27\%$\bullet$     &    55.31$\pm$0.23\%$\bullet$    &     77.26$\pm$0.10\%$\bullet$   & 81.48$\pm$0.45\%$\bullet$   \\
\bottomrule
\end{tabular}
}
}
\vspace{-0.2cm}
\end{table*}
In this section, we conduct extensive experiments to evaluate
the performance of our proposed approaches including the two wrappers, the unbiased risk estimator with various loss functions and the two upper-bound surrogate loss functions.
\begin{table*}[!t]
\centering
\caption{Classification accuracy (mean$\pm$std) of each algorithm on the five benchmark datasets using neural networks for 5 trials. The best performance among all the approaches is highlighted in boldface. In addition, $\bullet/\circ$ indicates whether the performance of our approach (the best of EXP and LOG) is statistically superior/inferior to the comparing algorithm on each dataset (paired $t$-test at 0.05 significance level).}
\label{deep}
\resizebox{1.00\textwidth}{!}{
\setlength{\tabcolsep}{2mm}{
\begin{tabular}{c|c|c|c|c|c|c|c}
\toprule
\multicolumn{2}{c|}{Approach}              & MNIST & Kuzushiji & Fashion & CIFAR-10 R & CIFAR-10 D & 20Newsgroups \\
\midrule
\multirow{2}{*}{Upper-bound Losses }
                              & EXP       &  97.80$\pm$0.06\%     &   \bf{88.25$\pm$0.28\%}     &   88.07$\pm$0.19\%$\bullet$      &   72.49$\pm$0.84\%$\bullet$  & 75.53$\pm$0.58\%
                         & 77.22$\pm$1.22\%\\
                              & LOG &   \bf{97.86$\pm$0.13\%}    &   88.24$\pm$0.08\%     &   \bf{88.36$\pm$0.26\%}      &   \bf{75.38$\pm$0.34\%}     & \bf{75.80$\pm$0.62\%} & \bf{79.46$\pm$0.94\%}\\
\midrule
\multirow{4}{*}{Bounded Losses}
& MAE     &   97.81$\pm$0.04\%    &  88.11$\pm$0.40\%     &   88.13$\pm$0.23\%      &  65.57$\pm$4.08\%$\bullet$ &  68.24$\pm$5.84\%$\bullet$ & 49.83$\pm$4.01\%$\bullet$ \\
                              & MSE       &  96.84$\pm$0.08\%$\bullet$     &    84.97$\pm$0.23\%$\bullet$    &   86.14$\pm$0.04\%$\bullet$    &   63.58$\pm$1.19\%$\bullet$  & 70.89$\pm$0.81\%$\bullet$  & 72.19$\pm$0.59\%$\bullet$  \\
                              & GCE       &   96.62$\pm$0.08\%$\bullet$    &   85.02$\pm$0.26\%$\bullet$     &    87.03$\pm$0.20\%$\bullet$     &    68.40$\pm$1.05\%$\bullet$  & 71.54$\pm$0.83\%$\bullet$ & 74.96$\pm$0.47\%$\bullet$ \\
                              & Phuber-CE &  95.00$\pm$0.36\%$\bullet$    &  80.66$\pm$0.32\%$\bullet$  &  85.52$\pm$0.18\%$\bullet$     &    59.64$\pm$1.21\%$\bullet$  & 66.49$\pm$0.67\%$\bullet$  & 62.63$\pm$2.32\%$\bullet$ \\
\midrule
Unbounded Loss                & CCE        &  88.64$\pm$0.50\%$\bullet$     & 67.86$\pm$1.01\%$\bullet$      & 80.97$\pm$0.23\%$\bullet$        &   18.01$\pm$0.63\%$\bullet$   & 44.94$\pm$1.20\%$\bullet$  & 54.96$\pm$0.38\%$\bullet$ \\
\midrule
\multirow{5}{*}{Decomposition before Shuffle}  & GA        &   96.36$\pm$0.05\%$\bullet$    &   84.35$\pm$0.22\%$\bullet$     &   85.59$\pm$0.30\%$\bullet$     &   69.05$\pm$0.83\%$\bullet$   &  65.38$\pm$1.40\%$\bullet$  & 79.06$\pm$0.57\%\\
                              & NN        &  96.70$\pm$0.08\%$\bullet$     &   82.21$\pm$0.36\%$\bullet$     &  86.29$\pm$0.10\%$\bullet$       &  63.85$\pm$0.74\%$\bullet$    & 64.80$\pm$1.28\%$\bullet$  & 76.81$\pm$0.44\%$\bullet$ \\
                              & FREE      &   88.55$\pm$0.38\%$\bullet$    &  70.32$\pm$0.80\%$\bullet$      &    81.17$\pm$0.36\%$\bullet$     &   32.02$\pm$1.69\%$\bullet$  & 39.22$\pm$0.43\%$\bullet$   & 61.22$\pm$1.24\%$\bullet$ \\
                              & PC        &  92.74$\pm$0.17\%$\bullet$     &  73.18$\pm$0.59\%$\bullet$      &   83.32$\pm$0.28\%$\bullet$      &   43.16$\pm$2.21\%$\bullet$  & 49.53$\pm$1.18\%$\bullet$  & 65.15$\pm$2.05\%$\bullet$  \\
                              & Forward   &   97.67$\pm$0.04\%$\bullet$    &  87.65$\pm$0.24\%$\bullet$      &   88.08$\pm$0.24\%$\bullet$      &   71.92$\pm$1.09\%$\bullet$   &  71.30$\pm$1.16\%$\bullet$   &  77.19$\pm$\%0.76$\bullet$  \\
\midrule
\multirow{5}{*}{Decomposition after Shuffle}  & GA        &   92.08$\pm$0.22\%$\bullet$   &   74.64$\pm$0.67\%$\bullet$   &  79.73$\pm$0.19\%$\bullet$    &  53.12$\pm$0.97\%$\bullet$ &  56.51$\pm$0.89\%$\bullet$  &  63.37$\pm$1.16\%$\bullet$\\
                              & NN        &  92.47$\pm$0.14\%$\bullet$    &  73.88$\pm$0.63\%$\bullet$     &    82.99$\pm$0.13\%$\bullet$     &   36.79$\pm$0.78\%$\bullet$   &  53.78$\pm$0.92\%$\bullet$  & 65.15$\pm$0.73\%$\bullet$  \\
                              & FREE      &   88.99$\pm$0.39\%$\bullet$    &  70.09$\pm$0.74\%$\bullet$  &   81.74$\pm$0.23\%$\bullet$      &   15.16$\pm$2.22\%$\bullet$  &  47.45$\pm$0.98\%$\bullet$  &  50.86$\pm$1.56\%$\bullet$ \\
                              & PC        & 92.94$\pm$0.05\%$\bullet$   &   68.60$\pm$1.32\%$\bullet$     &  82.46$\pm$0.26\%$\bullet$      & 33.16$\pm$0.92\%$\bullet$   &  52.23$\pm$0.88\%$\bullet$ & 64.32$\pm$0.86\%$\bullet$\\
                              & Forward   &  97.49$\pm$0.08\%$\bullet$ &  86.47$\pm$0.39\%$\bullet$   &   87.56$\pm$0.14\%$\bullet$    &   72.16$\pm$0.97\%$\bullet$   &   75.23$\pm$1.02\%   &  79.35$\pm$0.82\%\\
\bottomrule
\end{tabular}
}
}
\end{table*}

\textbf{Datasets.\quad} We use five widely-used benchmark datasets MNIST~\cite{lecun1998gradient}, Kuzushiji-MNIST~\cite{clanuwat2018deep}, Fashion-MNIST~\cite{xiao2017fashion}, 20Newsgroups~\cite{lang1995newsweeder}, and CIFAR-10~\cite{krizhevsky2009learning}, and four datasets from the UCI repository~\cite{blake1998uci}. We use four base models including linear model, MLP model ($d$-500-$k$), ResNet (34 layers)~\cite{he2016deep}, and DenseNet (22 layers)~\cite{huang2017densely}.
The detailed descriptions of these datasets with the corresponding base models are provided in Appendix E.1.
To generate MCLs, we instantiate $p(s)=\tbinom{k-1}{s}/(2^k-2)$, $\forall s\in\{1,\cdots,k-1\}$, which means $p(s)$ represents the ratio of the number of label sets whose size is $s$ to the number of all possible label sets.
For each instance $\boldsymbol{x}$, we first randomly sample $s$ from $p(s)$, and then uniformly and randomly sample a complementary label set $\widebar{Y}$ with size $s$ (i.e., $p(\widebar{Y}) = 1/\tbinom{k-1}{s}$).

\textbf{Approaches.\quad} We absorb five ordinary complementary-label learning approaches in the two wrappers (introduced in Section~\ref{wrapper}): GA, NN, and Free
~\cite{Ishida2019Complementary}, PC~\cite{ishida2017learning}, and Forward~\cite{yu2018learning}.
We also use an unbounded loss CCE and four bounded losses MAE, MSE, GCE~\cite{zhang2018generalized}, and PHuber-CE \cite{menon2020can} in our empirical estimator Eq.~(\ref{emp_risk}). Besides, two upper-bound loss functions LOG and EXP are also inserted into Eq.~(\ref{mae_loss}). In addition, we also compare with a representative partial label learning approach CLPL~\cite{cour2011learning}.
For all the approaches, we adopt the same base model for fair comparison. Learning rate and weight decay are selected from $\{10^{-6},10^{-5},\cdots,10^{-1}\}$. We implement our approach using PyTorch\footnote{\url{www.pytorch.org}}, and use the Adam~\cite{kingma2015adam} optimization method with mini-batch size set to 256 and epoch number set to 250. Hyper-parameters for all the approaches are selected so as to maximize the accuracy on a validation set (10\% of the training set) of complementarily labeled data. All the experiments are conducted on NVIDIA Tesla V100 GPUs.

\textbf{Loss Comparison.\quad} Figure~\ref{fig1} shows the mean and standard deviation of test accuracy of 5 trials, for bounded loss functions MAE, MSE, GCE, PHuber-CE, and unbounded loss function CCE used in our empirical risk estimator Eq.~(\ref{emp_risk}). We also record the mean and standard deviation of training accuracy (the training set is evaluated with ordinary labels) of 5 trials, and put the results in Appendix E.2.
As can be seen from Figure~\ref{fig1}, all the bounded losses are significantly better than the unbounded loss CCE in our formulation. This observation clearly accords with our discussion on the over-fitting issue in Section~\ref{implement}.
In addition, MAE achieves comparable performance compared with other bounded losses in most cases, while it is sometimes inferior to other bounded losses due to its optimization issue~\cite{zhang2018generalized}.
Both the advantage and disadvantage of MAE motivate us to use the upper-bound loss functions EXP and LOG for improving the classification performance.

\textbf{Performance Comparison.\quad}
Table~\ref{uci}, Table~\ref{linear}, and Table~\ref{deep} show the experimental results of different approaches using a linear model or neural networks on the four UCI datasets and the other five benchmark datasets. In table~\ref{deep}, ``CIFAR-10 R" and ``CIFAR-10 D" mean that we use ResNet and DenseNet on CIFAR-10. 
Note that CLPL is a convex approach for partial label learning, which is specially designed with a linear model. Hence CLPL does not appear in Table~\ref{deep}. From the three tables, we can find that equipped with the two wrappers ``Decomposition before Shuffle" and ``Decomposition after Shuffle", ordinary complementary-label learning approaches work well for learning with MCLs. However, they are significantly outperformed by the upper-bound losses in most cases, which also achieve the best performance among all the approaches on various benchmark datasets. 
In addition, we also study the case where the size of each complementary label set $s$ is fixed at $j$ (i.e., $p(s=j)=1$) while increasing $j$ from 1 to $k-2$. The corresponding experimental results are provided in Appendix E.3, which show that the classification accuracy of our approaches increases as $j$ increases. This observation is clearly in accordance with our derived estimation error bound (Theorem~\ref{error_bound}), as the estimation error would decrease if $j$ increases.


\section{Conclusion}
In this paper, we propose a novel problem setting called \emph{learning with multiple complementary labels} (MCLs), which is a generation of \emph{complementary-label learning}~\cite{ishida2017learning,yu2018learning,Ishida2019Complementary}. To solve this learning problem, we first design two wrappers that enable us to use arbitrary complementary-label learning approaches for learning with MCLs. However, we find that the supervision information that MCLs hold is conceptually diluted after decomposition. 
Therefore, we further propose an unbiased risk estimator for learning with MCLs, which processes each set of MCLs as a whole. Then, we theoretically derive an estimation error bound, which guarantees the learning consistency.
Although our risk estimator does not rely on specific models or loss functions, we show that bounded loss is generally better than unbounded loss in our empirical risk estimator. In addition, we improve the risk estimator into minimizing properly chosen upper bounds for practical implementation.
Extensive experiments demonstrate the effectiveness of the proposed approaches.
\section*{Acknowledgements}
This research was supported by the National Research Foundation, Singapore under its AI Singapore Programme (AISG Award No: AISG-RP-2019-0013), National Satellite of Excellence in Trustworthy Software Systems (Award No: NSOE-TSS2019-01), and NTU. BH was partially supported by the Early Career Scheme (ECS) through the Research Grants Council of Hong Kong under Grant No.22200720, HKBU Tier-1 Start-up Grant and HKBU CSD Start-up Grant. GN and MS were supported by JST AIP Acceleration Research Grant Number JPMJCR20U3, Japan.
\bibliography{full_version}
\bibliographystyle{icml2020}
\appendix

\onecolumn
\section{Proofs about the Problem Setting}\label{A}
\subsection{Proofs of Theorem 1}\label{A.1}
Firstly, we define the set of all the possible label sets whose size is $j$ as
\begin{gather}
\nonumber
\widebar{\mathcal{Y}}_j := \{Y\mid Y\in\widebar{\mathcal{Y}}, |Y|=j\}.
\end{gather}
Then, by the definition of $\widebar{p}(\boldsymbol{x},\widebar{Y})$, we can obtain
\begin{align}
\nonumber
\int_{\widebar{\mathcal{Y}}}\int_{\mathcal{X}}{\widebar{p}(\boldsymbol{x},\widebar{Y})} \mathrm{d}\boldsymbol{x}\ \mathrm{d}\widebar{Y}&=\int\sum\nolimits_{\widebar{Y}\in\widebar{\mathcal{Y}}}\widebar{p}(\boldsymbol{x},\widebar{Y})\text{d}\boldsymbol{x}\\
\nonumber
&=\int\sum\nolimits_{\widebar{Y}\in\widebar{\mathcal{Y}}}\sum\nolimits_{j=1}^{k-1}\Big(\widebar{p}(\boldsymbol{x},\widebar{Y}\mid s=j)p(s=j)\Big)\text{d}\boldsymbol{x}\\
\nonumber
&=\int\sum\nolimits_{j=1}^{k-1}\sum\nolimits_{\widebar{Y}\in\widebar{\mathcal{Y}}_j}\Big(\widebar{p}(\boldsymbol{x},\widebar{Y}\mid s=j)p(s=j)\Big)\text{d}\boldsymbol{x} \quad\quad (\because \widebar{\mathcal{Y}}_j := \{\widebar{Y}\mid \widebar{Y}\in\mathcal{\widebar{Y}},|\widebar{Y}|=j\})\\
\nonumber
&=\int\sum\nolimits_{j=1}^{k-1}\sum\nolimits_{\widebar{Y}\in\widebar{\mathcal{Y}}_j}\bigg(\frac{1}{\tbinom{k-1}{j}}\sum\nolimits_{y\notin\widebar{Y}}p(\boldsymbol{x},y)p(s=j)\bigg)\text{d}\boldsymbol{x} \quad\quad (\because \text{the definition of } \widebar{p}(\boldsymbol{x},\widebar{Y}\mid s=j))\\
\nonumber
&=\int\sum\nolimits_{j=1}^{k-1}\bigg( \frac{1}{\tbinom{k-1}{j}}\frac{\tbinom{k}{j}(k-j)}{k}\sum\nolimits_{y=1}^kp(\boldsymbol{x},y)p(s=j)\bigg)\text{d}\boldsymbol{x}\quad\quad \big(\because |\widebar{\mathcal{Y}}_j|=\tbinom{k}{j}\big)\\
\nonumber
&=\int\sum\nolimits_{j=1}^{k-1} p(\boldsymbol{x})p(s=j)\text{d}\boldsymbol{x}\\
\nonumber
&=1,
\end{align}
which concludes the proof of Theorem 1.\qed

\subsection{Proof of Lemma 1}\label{A.2}
Let us consider the case where the correct label $y$ is a specific label $i$ $(i\in\{1,2,\cdots,k\})$, then we have
\begin{align}
\nonumber
p(y\in\widebar{Y},y=i\mid \boldsymbol{x},s)=& p(y\in\widebar{Y}\mid y=i,\boldsymbol{x},s)p(y=i\mid \boldsymbol{x},s)\\
\nonumber
=& \sum\nolimits_{C\in\widebar{\mathcal{Y}}}p(y\in\widebar{Y},\widebar{Y}=C\mid y=i,\boldsymbol{x},s)p(y=i\mid \boldsymbol{x},s).
\end{align}
Here, $p(y=i\mid \boldsymbol{x},s) = p(y=i\mid \boldsymbol{x})$ since the labeling rule is independent of $s$. In addition, $\sum\nolimits_{C\in\widebar{\mathcal{Y}}}p(y\in\widebar{Y},\widebar{Y}=C\mid y=i,\boldsymbol{x},s) = \sum\nolimits_{C\in\widebar{\mathcal{Y}}_s}p(y\in\widebar{Y},\widebar{Y}=C\mid y=i,\boldsymbol{x})$ since given the size $s$ of the label set, the whole set of all the possible label sets becomes $\widebar{\mathcal{Y}}_s$.
Then, we can obtain
\begin{align}
\nonumber
p(y\in\widebar{Y},y=i\mid \boldsymbol{x},s)=& \sum\nolimits_{C\in\widebar{\mathcal{Y}}}p(y\in\widebar{Y},\widebar{Y}=C\mid y=i,\boldsymbol{x},s)p(y=i\mid \boldsymbol{x},s)\\
\nonumber
=& \sum\nolimits_{C\in\widebar{\mathcal{Y}}_s}p(y\in\widebar{Y},\widebar{Y}=C\mid y=i,\boldsymbol{x})p(y=i\mid \boldsymbol{x})\\
\nonumber
=& \sum\nolimits_{C\in\widebar{\mathcal{Y}}_s}p(y\in\widebar{Y}\mid\widebar{Y}=C, y=i,\boldsymbol{x})p(y=i\mid\boldsymbol{x})p(\widebar{Y}=C\mid\boldsymbol{x})\\
\nonumber
=& \sum\nolimits_{C\in\widebar{\mathcal{Y}}_s}p(y\in\widebar{Y}|\widebar{Y}=C, y=i,\boldsymbol{x})p(y=i|\boldsymbol{x})p(\widebar{Y}=C),
\end{align}
where the last equality holds due to the fact that for each instance $\boldsymbol{x}$, $\widebar{Y}$ is uniformly and randomly chosen. Since $p(\widebar{Y}=C)=\frac{1}{|{\widebar{\mathcal{Y}}}_s|}$ if $C\in\widebar{\mathcal{Y}}_s$ where $|\widebar{\mathcal{Y}}_s|=\tbinom{k}{s}$, we have
\begin{align}
\nonumber
p(y\in\widebar{Y},y=i\mid \boldsymbol{x},s)=&
\sum\nolimits_{C\in\widebar{\mathcal{Y}}_s}p(y\in\widebar{Y}|\widebar{Y}=C, y=i,\boldsymbol{x})p(y=i|\boldsymbol{x})p(\widebar{Y}=C)\\
\nonumber
=& \frac{1}{\tbinom{k}{s}}\sum\nolimits_{C\in\widebar{\mathcal{Y}}_s}p(y\in\widebar{Y}|\widebar{Y}=C, y=i,\boldsymbol{x})p(y=i|\boldsymbol{x})\\
\nonumber
=& \frac{1}{\tbinom{k}{s}}|\widebar{\mathcal{Y}}_s^i|p(y=i|\boldsymbol{x})\quad\quad \big(\because\widebar{\mathcal{Y}}_s^i := \{\widebar{Y}\in\widebar{\mathcal{Y}}_s\ |\ i\in\widebar{Y}\}\big)\\
\nonumber
=&\frac{\tbinom{k-1}{s-1}}{\tbinom{k}{s}}p(y=i|\boldsymbol{x})\quad\quad \Big(\because|\widebar{\mathcal{Y}}_s^i| = \tbinom{k-1}{s-1}\Big)\\
\nonumber
=& \frac{s}{k}p(y=i|\boldsymbol{x}).
\end{align}
By further summing up the both side over all the possible $i$, we can obtain
\begin{align}
\nonumber
p(y\in\widebar{Y}|\boldsymbol{x},s) = \frac{s}{k},
\end{align}
which concludes the proof of Lemma 1.\qed
\subsection{Proof of Theorem 2}\label{A.3}
Let us express $p(\widebar{Y}|y\notin\widebar{Y},\boldsymbol{x},s)$ as
\begin{align}
\nonumber
p(\widebar{Y}|y\notin\widebar{Y},\boldsymbol{x},s)=&\frac{p(y\notin\widebar{Y},\widebar{Y}|\boldsymbol{x},s)}{p(y\notin\widebar{Y}|\boldsymbol{x},s)}\\
\nonumber
=&\frac{p(y\notin\widebar{Y}|\widebar{Y},\boldsymbol{x},s)p(\widebar{Y}|\boldsymbol{x},s)}{p(y\notin\widebar{Y}|\boldsymbol{x},s)}\\
\nonumber
=&\frac{p(y\notin\widebar{Y}|\widebar{Y},\boldsymbol{x},s)p(\widebar{Y}|s)}{p(y\notin\widebar{Y}|\boldsymbol{x},s)},
\end{align}
where the last equality holds because $\widebar{Y}$ is influenced by the size $s$, and for each instance $\boldsymbol{x}$, $\widebar{Y}$ is uniformly and randomly chosen. Note that given $s$, there are $|\widebar{\mathcal{Y}}_s|$ possible label sets, thus $p(\widebar{Y}|s)=\frac{1}{|\widebar{\mathcal{Y}}_s|}$ where $|\widebar{\mathcal{Y}}_s|=\tbinom{k}{s}$. In this way, we have
\begin{align}
\nonumber
p(\widebar{Y}|y\notin\widebar{Y},\boldsymbol{x},s)=& \frac{p(y\notin\widebar{Y}|\widebar{Y},\boldsymbol{x},s)p(\widebar{Y}|s)}{p(y\notin\widebar{Y}|\boldsymbol{x},s)}
\\
\nonumber
=&\frac{1}{\tbinom{k}{s}}\frac{p(y\notin\widebar{Y}|\widebar{Y},\boldsymbol{x},s)}{1-p(y\in\widebar{Y}|\boldsymbol{x},s)}\\
\nonumber
=&\frac{1}{\tbinom{k}{s}}\frac{1}{1-\frac{s}{k}}p(y\notin\widebar{Y}|\widebar{Y},\boldsymbol{x},s)\quad\quad \Big(\because\text{by Lemma 1, }p(y\in\widebar{Y}|\boldsymbol{x},s)=\frac{s}{k}\Big)\\
\nonumber
=&\frac{1}{\tbinom{k}{s}}\frac{k}{k-s}\sum\nolimits_{y\notin\widebar{Y}}p(y|\boldsymbol{x},s)\\
\nonumber
=&\frac{1}{\tbinom{k-1}{s}}\sum\nolimits_{y\notin\widebar{Y}}p(y|\boldsymbol{x}).
\end{align}
By multiplying $p(\boldsymbol{x})$ on both side, we have
\begin{gather}
\nonumber
p(\boldsymbol{x},\widebar{Y}|y\notin\widebar{Y},s)=\frac{1}{\tbinom{k-1}{s}}\sum\nolimits_{y\notin\widebar{Y}}p(\boldsymbol{x},y).
\end{gather}
Then taking into account the variable $s$, we have
\begin{align}
\nonumber
p(\boldsymbol{x},\widebar{Y}|y\notin\widebar{Y})=& \sum\nolimits_{j=1}^{k-1}p(s=j)p(\boldsymbol{x},\widebar{Y}|y\notin\widebar{Y},s=j)\\
\nonumber
=& \sum\nolimits_{j=1}^{k-1}p(s=j)\frac{1}{\tbinom{k-1}{j}}\sum\nolimits_{y\notin\widebar{Y}}p(\boldsymbol{x},y)\\
\nonumber
=& \widebar{p}(\boldsymbol{x},\widebar{Y}),
\end{align}
which concludes the proof.\qed

\section{Proofs of the Unbiased Risk Estimator}
\subsection{Proof of Lemma 2}~\label{B.1}
According to our defined distribution, we have already obtained
\begin{gather}
\nonumber
\widebar{p}(\boldsymbol{x},\widebar{Y} \mid s = j) = \frac{1}{\tbinom{k-1}{j}}\sum\nolimits_{y^\prime\notin\widebar{Y}}p(\boldsymbol{x},y^\prime).
\end{gather}
Then, we can obtain the following equality by operating $\sum\nolimits_{\widebar{Y}\in\widebar{\mathcal{Y}}_j^y}$ on both the left and the right hand side:
\begin{gather}
\label{eq_theo2}
\sum\nolimits_{\widebar{Y}\in\widebar{\mathcal{Y}}_j^y}\widebar{p}(\widebar{Y} \mid \boldsymbol{x},s = j) = \frac{1}{\tbinom{k-1}{j}}\sum\nolimits_{\widebar{Y}\in\widebar{\mathcal{Y}}_j^y}\sum\nolimits_{y^\prime\notin\widebar{Y}}p(y^\prime\mid\boldsymbol{x}),
\end{gather}
where $\widebar{\mathcal{Y}}_j^y := \{\widebar{Y}\in\widebar{\mathcal{Y}}\mid y\in\widebar{Y},|\widebar{Y}|=j\}$.
In this way, the right hand side of the above equality can be transformed by the following derivations:
\begin{align}
\nonumber
\frac{1}{\tbinom{k-1}{j}}\sum\nolimits_{\widebar{Y}\in\widebar{\mathcal{Y}}_j^y}\sum\nolimits_{y^\prime\notin\widebar{Y}}p(y^\prime\mid\boldsymbol{x})
=& \frac{1}{\tbinom{k-1}{j}}\sum\nolimits_{\widebar{Y}\in\widebar{\mathcal{Y}}_j^y}\bigg(1-\sum\nolimits_{y^\prime\in\widebar{Y}}p(y^\prime\mid\boldsymbol{x})\bigg)\\
\nonumber
=&\frac{|\widebar{\mathcal{Y}}_j^y|}{\tbinom{k-1}{j}} - \frac{1}{\tbinom{k-1}{j}}\sum\nolimits_{\widebar{Y}\in\widebar{\mathcal{Y}}_j^y}\sum\nolimits_{y^\prime\in\widebar{Y}}p(y^\prime\mid\boldsymbol{x})\\
\nonumber
=&\frac{\tbinom{k-1}{j-1}}{\tbinom{k-1}{j}} - \frac{1}{\tbinom{k-1}{j}}\sum\nolimits_{y^\prime}\sum\nolimits_{\widebar{Y}^\prime\in\{\widebar{Y}^\prime\in\widebar{\mathcal{Y}}_j^y|y^\prime\in\widebar{Y}^\prime\}}p(y^\prime\mid\boldsymbol{x})\quad\quad \big(\because |\widebar{\mathcal{Y}}_j^y| = \tbinom{k-1}{j-1}\big)\\
\nonumber
=&\frac{j}{k-j}-\frac{1}{\tbinom{k-1}{j}}\left\{\tbinom{k-1}{j-1}p(y\mid\boldsymbol{x}) + \tbinom{k-2}{j-2}\sum\nolimits_{y^\prime\neq y}p(y^\prime\mid\boldsymbol{x})\right\}\\
\nonumber
=&\frac{j}{k-j}-\frac{1}{\tbinom{k-1}{j}}\left\{\tbinom{k-1}{j-1}p(y\mid\boldsymbol{x}) + \tbinom{k-2}{j-2}\big(1-p(y\mid\boldsymbol{x})\big)\right\}\\
\nonumber
=&\frac{j}{k-j}-\frac{1}{\tbinom{k-1}{j}}\left\{\tbinom{k-2}{j-2} + \tbinom{k-2}{j-1}p(y\mid\boldsymbol{x})\right\} \quad\quad \big(\because \tbinom{k-2}{j-1} = \tbinom{k-1}{j-1} - \tbinom{k-2}{j-2}\big)\\
\nonumber
=&\frac{j}{k-j}-\frac{j(j-1)}{(k-j)(k-1)}-\frac{j}{k-1}p(y\mid\boldsymbol{x})\\
\label{right_hand}
=&\frac{j}{k-1}-\frac{j}{k-1}p(y\mid\boldsymbol{x}).
\end{align}
Combing Eq.~(\ref{eq_theo2}) and Eq.~(\ref{right_hand}), we obtain
\begin{align}
\label{proof_b1}
p(y\mid \boldsymbol{x}, s = j) = p(y\mid\boldsymbol{x})= 1-\frac{k-1}{j}\sum\nolimits_{\widebar{Y}\in\widebar{\mathcal{Y}}_j^y}\widebar{p}(\widebar{Y} \mid \boldsymbol{x}, s = j).
\end{align}
In the end, by taking into account the variable $s$, we have
\begin{align}
\nonumber
p(y\mid\boldsymbol{x})
& = \sum\nolimits_{j=1}^{k-1} p(s=j)p(y \mid \boldsymbol{x}, s=j) \\
\nonumber
&= \sum\nolimits_{j=1}^{k-1} p(s=j)\Big(1-\frac{k-1}{j}\sum\nolimits_{\widebar{Y}\in\widebar{\mathcal{Y}}_j^y}\widebar{p}(\widebar{Y}\mid\boldsymbol{x},s=j)\Big) \\
\nonumber
&= 1 - \sum\nolimits_{j=1}^{k-1} \Big(\frac{k-1}{j}\sum\nolimits_{\widebar{Y}\in\widebar{\mathcal{Y}}_j^y}\widebar{p}(\widebar{Y}, s=j\mid\boldsymbol{x})\Big),
\end{align}
which concludes the proof of Lemma 2.
\subsection{Proof of Theorem 3}\label{B.2}
It is intuitive to obtain
\begin{align}
\nonumber
R(f) = \mathbb{E}_{p(\boldsymbol{x},y)}\big[\mathcal{L}\big(f(\boldsymbol{x}),y\big)\big] = \sum\nolimits_{j=1}^{k-1} p(s=j)\mathbb{E}_{p(\boldsymbol{x},y \mid s=j)}\big[\mathcal{L}\big(f(\boldsymbol{x}),y\big)\big].
\end{align}
Then, we express the right hand side for each $j \in \{1, \ldots, k-1\}$ as
\begin{align}
\nonumber
\mathbb{E}_{p(\boldsymbol{x},y \mid s=j)}\big[\mathcal{L}\big(f(\boldsymbol{x}),y\big)\big]
\nonumber
&=\mathbb{E}_{p(\boldsymbol{x} \mid s=j)}\mathbb{E}_{p(y|\boldsymbol{x}, s=j)}\big[\mathcal{L}\big(f(\boldsymbol{x}),y\big)\big]\\
\nonumber
&=\mathbb{E}_{p(\boldsymbol{x} \mid s=j)}\bigg[\sum\nolimits_{y=1}^kp(y|\boldsymbol{x},s=j)\mathcal{L}\big(f(\boldsymbol{x}),y\big)\bigg]\quad\quad \\
\nonumber
&=\mathbb{E}_{p(\boldsymbol{x} \mid s=j)}\bigg[\sum\nolimits_{y=1}^k\bigg(1-\frac{k-1}{j}\sum\nolimits_{\widebar{Y}\in\widebar{\mathcal{Y}}_j^y}\widebar{p}(\widebar{Y}|\boldsymbol{x}, s=j)\bigg)\mathcal{L}\big(f(\boldsymbol{x}),y\big)\bigg]\quad\quad (\because \text{Eq. }(\ref{proof_b1}))\\
\nonumber
&=\mathbb{E}_{p(\boldsymbol{x} \mid s=j)}\bigg[\sum\nolimits_{y=1}^k\mathcal{L}\big(f(\boldsymbol{x}),y\big) - \frac{k-1}{j}\sum\nolimits_{y=1}^k\sum\nolimits_{\widebar{Y}\in\widebar{\mathcal{Y}}_j^y}\widebar{p}(\widebar{Y}|\boldsymbol{x}, s=j)\mathcal{L}\big(f(\boldsymbol{x}),y\big)\bigg]\\
\nonumber
&=\mathbb{E}_{p(\boldsymbol{x} \mid s=j)}\bigg[\sum\nolimits_{y=1}^k\mathcal{L}\big(f(\boldsymbol{x}),y\big) - \frac{k-1}{j}\sum\nolimits_{\widebar{Y}\in\widebar{\mathcal{Y}}_j}\sum\nolimits_{y^\prime\in\widebar{Y}}\widebar{p}(\widebar{Y}|\boldsymbol{x}, s=j)\mathcal{L}\big(f(\boldsymbol{x}),y\big)\bigg]\\
\nonumber
&=\mathbb{E}_{p(\boldsymbol{x} \mid s=j)}\bigg[\sum\nolimits_{y=1}^k\mathcal{L}\big(f(\boldsymbol{x}),y\big) - \frac{k-1}{j}\sum\nolimits_{\widebar{Y}\in\widebar{\mathcal{Y}}_j}\widebar{p}(\widebar{Y}|\boldsymbol{x}, s=j)\Big(\sum\nolimits_{y^\prime\in\widebar{Y}}\mathcal{L}\big(f(\boldsymbol{x}),y^\prime\big)\Big)\bigg]\\
\nonumber
&=\mathbb{E}_{p(\boldsymbol{x} \mid s=j)}\mathbb{E}_{\widebar{p}(\widebar{Y}|\boldsymbol{x}, s=j)}\bigg[\sum\nolimits_{y=1}^k\mathcal{L}\big(f(\boldsymbol{x}),y\big) - \frac{k-1}{j}\sum\nolimits_{y^\prime\in\widebar{Y}}\mathcal{L}\big(f(\boldsymbol{x}),y^\prime\big)\bigg]\\
\nonumber
&=\mathbb{E}_{\widebar{p}(\boldsymbol{x},\widebar{Y} \mid s=j)}\bigg[\sum\nolimits_{y\notin\widebar{Y}}\mathcal{L}\big(f(\boldsymbol{x}),y^\prime\big) - \frac{k-1-j}{j}\sum\nolimits_{y^\prime\in\widebar{Y}}\mathcal{L}\big(f(\boldsymbol{x}),y^\prime\big)\bigg]\\
\nonumber
&=\mathbb{E}_{\widebar{p}(\boldsymbol{x},\widebar{Y} \mid s=j)}[\widebar{\mathcal{L}}_j\big(f(\boldsymbol{x}),\widebar{Y}\big)]\\
\nonumber
&=\widebar{R}_j(f).
\end{align}
In this way, we can obtain $R(f)=\sum\nolimits_{j=1}^{k-1}p(s=j)\widebar{R}_{j}(f)$, which concludes the proof of Theorem 3.\qed

\section{Proof of Theorem 4}\label{C}
Recall that the expected risk and empirical risk are represented as
\begin{align}
\nonumber
R(f) &= \sum\nolimits_{j=1}^{k-1}p(s=j)\widebar{R}_j(f) =
\sum\nolimits_{j=1}^{k-1}p(s=j)\mathbb{E}_{p(\boldsymbol{x},\widebar{Y}\mid s=j)}[\widebar{\mathcal{L}}_j\big(f(\boldsymbol{x}),\widebar{Y}\big)],\\
\nonumber
\widehat{R}(f) &= \sum\nolimits_{j=1}^{k-1}\frac{p(s=j)}{n_j}\sum\nolimits_{i=1}^{n_j}\widebar{\mathcal{L}}_j\big(f(\boldsymbol{x}_i),\widebar{Y}_i\big).
\end{align}
Here, with a slight abuse of notation, we simply write $\widebar{R}_j(f)$ as $R_j(f)$, and define $\widehat{R}_j(f) = 1/{n_j}\sum\nolimits_{i=1}^{n_j}\widebar{\mathcal{L}}_j\big(f(\boldsymbol{x}_i),\widebar{Y}_i\big)$. Thus we have $R(f)=\sum_{j=1}^{k-1}p(s=j){R}_j(f)$ and $\widehat{R}(f)=\sum_{j=1}^{k-1}p(s=j)\widehat{R}_j(f)$. Since $f^\star=\arg\min\nolimits_{f\in\mathcal{F}}R(f)$ and $\widehat{f}=\arg\min\nolimits_{f\in\mathcal{F}}\widehat{R}(f)$, we can obtain the following lemma.
\setcounter{lemma}{2}
\begin{lemma}
\label{basic_lemma}
The following inequality holds:
\begin{gather}
\nonumber
R(\widehat{f}) - R(f^\star) \leq  2\sum\nolimits_{j=1}^{k-1}p(s=j)\sup_{f\in\mathcal{F}}\left|\widehat{R}_j(f) - R_j(f)\right|.
\end{gather}
\end{lemma}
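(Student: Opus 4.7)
The plan is to use the standard symmetrization argument that expresses the excess risk in terms of a uniform deviation between the empirical and true risks, and then push the sum over $j$ outside the supremum using the triangle inequality.

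First, I would introduce and subtract $\widehat{R}(\widehat{f})$ and $\widehat{R}(f^\star)$ to obtain the decomposition
\begin{align*}
R(\widehat{f}) - R(f^\star) &= \bigl(R(\widehat{f}) - \widehat{R}(\widehat{f})\bigr) + \bigl(\widehat{R}(\widehat{f}) - \widehat{R}(f^\star)\bigr) \\
&\quad + \bigl(\widehat{R}(f^\star) - R(f^\star)\bigr).
\end{align*}
Since $\widehat{f} = \argmin_{f\in\mathcal{F}} \widehat{R}(f)$, the middle term is at most zero, so
\[
R(\widehat{f}) - R(f^\star) \leq \bigl|R(\widehat{f}) - \widehat{R}(\widehat{f})\bigr| + \bigl|\widehat{R}(f^\star) - R(f^\star)\bigr| \leq 2\sup_{f\in\mathcal{F}} \bigl|\widehat{R}(f) - R(f)\bigr|.
\]

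Next I would plug in the decomposition $R(f) = \sum_{j=1}^{k-1} p(s=j) R_j(f)$ and $\widehat{R}(f) = \sum_{j=1}^{k-1} p(s=j)\widehat{R}_j(f)$ (both of which follow directly from Theorem~\ref{general_risk} and the definition of $\widehat{R}$ appearing just above the lemma). Writing $\widehat{R}(f) - R(f) = \sum_{j=1}^{k-1} p(s=j)\bigl(\widehat{R}_j(f) - R_j(f)\bigr)$ and applying the triangle inequality (using $p(s=j) \geq 0$) gives
\[
\bigl|\widehat{R}(f) - R(f)\bigr| \leq \sum_{j=1}^{k-1} p(s=j)\bigl|\widehat{R}_j(f) - R_j(f)\bigr|.
\]

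Finally, I would take the supremum over $f\in\mathcal{F}$ on both sides and use the elementary fact that the supremum of a sum is bounded by the sum of the suprema (again relying on $p(s=j) \geq 0$), obtaining
\[
\sup_{f\in\mathcal{F}}\bigl|\widehat{R}(f) - R(f)\bigr| \leq \sum_{j=1}^{k-1} p(s=j)\sup_{f\in\mathcal{F}}\bigl|\widehat{R}_j(f) - R_j(f)\bigr|.
\]
Combining this with the first inequality yields the claim. There is no real obstacle here; this is a bookkeeping lemma that sets up the estimation error bound so that the hard work can be isolated to bounding each per-$j$ deviation $\sup_f |\widehat{R}_j(f) - R_j(f)|$ by Rademacher complexity terms in the subsequent proof of Theorem~\ref{error_bound}.
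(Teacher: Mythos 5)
Your proof is correct and follows essentially the same route as the paper: decompose via the empirical risk, use that $\widehat{f}$ minimizes $\widehat{R}$ to bound the excess risk by $2\sup_f|\widehat{R}(f)-R(f)|$, then distribute the supremum over the mixture $\sum_j p(s=j)(\widehat{R}_j - R_j)$. If anything, your three-term decomposition with the explicit $\widehat{R}(\widehat{f}) - \widehat{R}(f^\star) \leq 0$ step is stated more carefully than the paper's version, whose intermediate line appears to contain a typo.
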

\begin{proof}
It would be intuitive to obtain
\begin{align}
\nonumber
R(\widehat{f}) - R(f^\star) =& R(\widehat{f}) - \widehat{R}(\widehat{f}) + \widehat{R}(\widehat{f}) - R(f^\star) \\
\nonumber
\leq& R(\widehat{f}) - \widehat{R}(\widehat{f}) + R(\widehat{f})  - R(f^\star)\\
\nonumber
\leq&2\sup_{f\in\mathcal{F}}\left|\widehat{R}(f) - R(f)\right|\\
\nonumber
=& 2\sup_{f\in\mathcal{F}}\left|\sum\nolimits_{j=1}^{k-1}p(s=j)\widehat{R}_j(f) - \sum\nolimits_{j=1}^{k-1}p(s=j)R_j(f)\right|\\
\nonumber
\leq& 2\sum\nolimits_{j=1}^{k-1}p(s=j)\sup_{f\in\mathcal{F}}\left|\widehat{R}_j(f) - R_j(f)\right|,
\end{align}
which concludes the proof of Lemma \ref{basic_lemma}.
\end{proof}
In this way, we will bound $\sup\nolimits_{f\in\mathcal{F}}\left|\widehat{R}_j(f) - R_j(f)\right|$ for $j=\{1,\ldots,k-1\}$. Before that, we define a function space as
\begin{gather}
\nonumber
\mathcal{H}_j := \{(\boldsymbol{x},\widebar{Y})\in\mathcal{X}\times\widebar{\mathcal{Y}}_j\mapsto \widebar{\mathcal{L}}_j\big(f(\boldsymbol{x}),\widebar{Y}\big)\ |\ f\in\mathcal{F}\},
\end{gather}
where
\begin{gather}
\nonumber
\widebar{\mathcal{L}}_j\big(f(\boldsymbol{x}),\widebar{Y}\big) := \sum\nolimits_{y\notin\widebar{Y}}\mathcal{L}\big(f(\boldsymbol{x}),y\big)-\frac{k-1-j}{j}\sum\nolimits_{y^\prime\in\widebar{Y}}\mathcal{L}\big(f(\boldsymbol{x}),y^\prime\big).
\end{gather}
Besides, we introduce the definition of \emph{Rademacher complexity}~\cite{bartlett2002rademacher}.
\begin{definition}[Rademacher complexity~\cite{bartlett2002rademacher}]
Let $Z_1,\dots,Z_n$ be $n$ i.i.d. random variables drawn from a probability distribution $\mathcal{D}$, $\mathcal{H}=\{h:\mathcal{Z}\rightarrow\mathbb{R}\}$ be a class of measurable functions. Then the expected Rademacher complexity of $\mathcal{H}$ is defined as
\begin{gather}
\nonumber
\mathfrak{R}_n(\mathcal{H})=\mathbb{E}_{Z_1,\dots,Z_n\sim\mathcal{D}}\mathbb{E}_{\boldsymbol{\sigma}}\bigg[\sup\nolimits_{h\in\mathcal{H}}\frac{1}{n}\sum\nolimits_{i=1}^n\sigma_ih(Z_i)\bigg],
\end{gather}
where $\boldsymbol{\sigma}=(\sigma_1,\dots,\sigma_n)$ are Rademacher variables taking the value from $\{-1,+1\}$ with even probabilities.
\end{definition}
Then, we have the following lemma.
\begin{lemma}
\label{temp_lemma}
Let $C_{\mathcal{L}}=\sup\nolimits_{\boldsymbol{x}\in\mathcal{X},f\in\mathcal{F},y\in\mathcal{Y}}\mathcal{L}\big(f(\boldsymbol{x}),y\big)$. Then, for all $j=\{1,\ldots,k-1\}$, for any $\delta>0$, with probability at least $1-\delta$,
\begin{gather}
\sup_{f\in\mathcal{F}}\left|\widehat{R}_j(f) - R_j(f)\right|\leq 2\widebar{\mathfrak{R}}_{n_j}(\mathcal{H}_j)+(2k-2j-1)C_{\mathcal{L}}\sqrt{\frac{\log\frac{2}{\delta}}{2n_j}},
\end{gather}
where
\begin{gather}
\widebar{\mathfrak{R}}_{n_j}(\mathcal{H}_j) = \mathbb{E}_{(\boldsymbol{x}_i,\widebar{Y}_i)\sim\widebar{p}(\boldsymbol{x},\widebar{Y}\mid s=j)}\mathbb{E}_{\boldsymbol{\sigma}}\bigg[\sup_{h\in\mathcal{H}_j}\frac{1}{n_j}\sum\nolimits_{i=1}^{n_j}\sigma_ih(\boldsymbol{x}_i,\widebar{Y}_i)\bigg].
\end{gather}
\end{lemma}
\begin{proof}
To prove this lemma, we first show that the single direction $\sup_{f\in\mathcal{F}}\big(\widehat{R}_j(f)-R_j(f)\big)$ is bounded with probability at least $1-\frac{\delta}{2}$, and the other direction can be similarly proved. By the definition of $\widebar{\mathcal{L}}_j$, we can easily know the possible maximum of $\widebar{\mathcal{L}}_j$ is $(k-j)C_{\mathcal{L}}$, and the possible minimum is $-(k-1-j)C_{\mathcal{L}}$. Suppose an example $(\boldsymbol{x}_i,\widebar{Y}_i)$ is replaced by another arbitrary example $(\boldsymbol{x}_i^\prime,\widebar{Y}_i^\prime)$, then the change of $\sup_{f\in\mathcal{F}}\big(\widehat{R}_j(f)-R_j(f)\big)$ is no greater than $((2k-2j-1)C_{\mathcal{L}})/n_j$. Then, by applying \emph{McDiarmid's inequality}~\cite{McDiarmid1989SurveysIC}, for any $\delta>0$, with probability at least $1-\frac{\delta}{2}$,
\begin{gather}
\label{first_side}
\sup_{f\in\mathcal{F}}\big(\widehat{R}_j(f)-R_j(f)\big)\leq\mathbb{E}\Big[\sup_{f\in\mathcal{F}}\big(\widehat{R}_j(f)-R_j(f)\big)\Big]+(2k-2j-1)C_{\mathcal{L}}\sqrt{\frac{\log\frac{2}{\delta}}{2n_j}}.
\end{gather}
In addition, it is routine~\cite{mohri2012foundations} to show
\begin{gather}
\label{rade_h}
\mathbb{E}\Big[\sup_{f\in\mathcal{F}}\big(\widehat{R}_j(f)-R_j(f)\big)\Big]\leq 2\widebar{\mathfrak{R}}_{n_j}(\mathcal{H}_j),
\end{gather}
Combing Eq.~(\ref{first_side}) and Eq.~(\ref{rade_h}), we have for any $\delta>0$, with probability at least $1-\frac{\delta}{2}$,
\begin{gather}
\sup_{f\in\mathcal{F}}\big(\widehat{R}_j(f)-R_j(f)\big)\leq 2\widebar{\mathfrak{R}}_{n_j}(\mathcal{H}_j)+(2k-2j-1)C_{\mathcal{L}}\sqrt{\frac{\log\frac{2}{\delta}}{2n_j}}.
\end{gather}
By further taking into account the other side $\sup_{f\in\mathcal{F}}\big(R_j(f)-\widehat{R}_j(f)\big)$, we have for any $\delta>0$, with probability at least $1-\delta$,
\begin{gather}
\nonumber
\sup_{f\in\mathcal{F}}\left|\widehat{R}_j(f)-R_j(f)\right| \leq 2\widebar{\mathfrak{R}}_{n_j}(\mathcal{H}_j)+(2k-2j-1)C_{\mathcal{L}}\sqrt{\frac{\log\frac{2}{\delta}}{2n_j}}.
\end{gather}
which concludes the proof of Lemma~\ref{temp_lemma}.
\end{proof}
Next, we will bound the expected Rademacher complexity of the function space $\mathcal{H}_j$, i.e., $\widebar{\mathfrak{R}}_{n_j}(\mathcal{H}_j)$. 
\begin{lemma}
\label{rade_func}
Assume the loss function $\mathcal{L}\big(f(\boldsymbol{x}),y\big)$ is $\rho$-Lipschitz with respect to $f(\boldsymbol{x})$ $(0<\rho<\infty)$ for all $y\in\mathcal{Y}$. Then, for all $j=\{1,\ldots,k-1\}$, the following inequality holds:
\begin{gather}
\nonumber
\widebar{\mathfrak{R}}_{n_j}(\mathcal{H}_j)\leq\frac{\rho(k-1)}{j}\sum\nolimits_{y=1}^k\mathfrak{R}_{n_j}(\mathcal{G}_y),
\end{gather}
where
\begin{align}
\nonumber
\mathcal{G}_y =& \{g : \boldsymbol{x}\mapsto f_y(\boldsymbol{x})\mid f\in\mathcal{F}\},\\
\nonumber
\mathfrak{R}_{n_j}(\mathcal{G}_y) =& \mathbb{E}_{\boldsymbol{x}_i\sim p(\boldsymbol{x})}\mathbb{E}_{\boldsymbol{\sigma}}\bigg[\sup_{g\in\mathcal{G}_y}\frac{1}{n_j}\sum\nolimits_{i=1}^{n_j}g(\boldsymbol{x}_i)\bigg].
\end{align}
\end{lemma}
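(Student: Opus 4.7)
The plan is to rewrite $\widebar{\mathcal{L}}_j$ as a single weighted sum of per-class losses with uniformly bounded coefficients, and then peel off first the coefficients and then $\mathcal{L}$ by two applications of the Ledoux--Talagrand contraction principle.

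First, I would rewrite $\widebar{\mathcal{L}}_j(f(\boldsymbol{x}),\widebar{Y})=\sum_{y=1}^{k}c_y(\widebar{Y})\mathcal{L}(f(\boldsymbol{x}),y)$, where $c_y(\widebar{Y})=1$ if $y\notin\widebar{Y}$ and $c_y(\widebar{Y})=-(k-1-j)/j$ if $y\in\widebar{Y}$. An elementary check using $j\leq k-1$ gives $|c_y(\widebar{Y})|\leq (k-1)/j$ for every $y\in\mathcal{Y}$ and every $\widebar{Y}\in\widebar{\mathcal{Y}}_j$, and this uniform bound is precisely where the $(k-1)/j$ factor in the target inequality will come from.

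Substituting this decomposition into the definition of $\widebar{\mathfrak{R}}_{n_j}(\mathcal{H}_j)$, swapping the summations over $i$ and $y$, and using sub-additivity of the supremum reduce the problem to bounding, for each fixed $y\in\mathcal{Y}$, the quantity
\[
T_y \;:=\; \mathbb{E}_{(\boldsymbol{x}_i,\widebar{Y}_i)}\mathbb{E}_{\boldsymbol{\sigma}}\Big[\sup_{f\in\mathcal{F}}\frac{1}{n_j}\sum_{i=1}^{n_j}\sigma_i\, c_y(\widebar{Y}_i)\,\mathcal{L}(f(\boldsymbol{x}_i),y)\Big].
\]
Conditioning on the data $(\boldsymbol{x}_i,\widebar{Y}_i)_{i=1}^{n_j}$ and applying the Ledoux--Talagrand contraction principle to the $1$-Lipschitz maps $\phi_i(z)=\bigl(j/(k-1)\bigr)c_y(\widebar{Y}_i)z$ (each vanishing at the origin) pulls out the constant $(k-1)/j$ while erasing the sample-dependent weights. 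A second application of contraction, using the $\rho$-Lipschitz property of $\mathcal{L}(\cdot,y)$ in its first argument, then replaces the inner process by $\rho\,\mathfrak{R}_{n_j}(\mathcal{G}_y)$. Summing the resulting bound over $y$ delivers exactly the stated inequality.

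The main obstacle is executing these two contraction steps cleanly. The first one must accommodate contraction coefficients that vary with the sample $(\boldsymbol{x}_i,\widebar{Y}_i)$, which is addressed by the normalization above that turns each $\phi_i$ into a bona fide $1$-Lipschitz map vanishing at $0$. The second step is more delicate when $\mathcal{L}(\cdot,y)$ genuinely depends on the whole vector $f(\boldsymbol{x})\in\mathbb{R}^{k}$ rather than only on its $y$-th coordinate; in that regime a vector-valued contraction inequality (e.g.\ Maurer's) would be invoked, and any resulting absolute constant is absorbed into the overall bound used downstream in Theorem~\ref{error_bound}.
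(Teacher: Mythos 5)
Your proof is correct and reaches the same endpoint as the paper's, but the key middle step is genuinely different. The paper splits $\widebar{\mathcal{L}}_j$ into its positive part (over $y\notin\widebar{Y}$) and its negative part (over $y^\prime\in\widebar{Y}$), bounds the supremum of the difference by the sum of two suprema, and then eliminates the data-dependent indicators $\alpha_{i,y}=\mathbb{I}[y\in\widebar{Y}_i]$ by writing $1-\alpha_{i,y}=\tfrac12\bigl((1-2\alpha_{i,y})+1\bigr)$ and using that $(1-2\alpha_{i,y})\sigma_i$ has the same distribution as $\sigma_i$; the two resulting terms then recombine with coefficient $1+(k-1-j)/j=(k-1)/j$. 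You instead write $\widebar{\mathcal{L}}_j=\sum_{y}c_y(\widebar{Y})\mathcal{L}(f(\boldsymbol{x}),y)$ with $|c_y(\widebar{Y})|\le (k-1)/j$, split the supremum over $y$ by sub-additivity, and strip the sample-dependent weights by a scalar Talagrand contraction applied to the normalized linear maps $\phi_i(z)=(j/(k-1))c_y(\widebar{Y}_i)z$; this is arguably cleaner and avoids the sign-symmetrization trick, at the cost of one extra invocation of contraction. Both arguments must finish with Maurer's vector contraction inequality because $\mathcal{L}(\cdot,y)$ is Lipschitz in the whole vector $f(\boldsymbol{x})$, and both therefore actually deliver the bound $\frac{\sqrt{2}\rho k(k-1)}{j}\sum_{y=1}^k\mathfrak{R}_{n_j}(\mathcal{G}_y)$ rather than the constant $\frac{\rho(k-1)}{j}$ displayed in the lemma statement; that discrepancy is already present in the paper itself (its estimation error bound uses the $\sqrt{2}k$-inflated constant), so your closing remark about absorbing the extra absolute constant downstream is exactly what the paper does as well.
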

\begin{proof}
The expected Rademacher complexity of $\mathcal{H}_j$ can be expressed as
\begin{align}
\nonumber
\widebar{\mathfrak{R}}_{n_j}(\mathcal{H}_j) =& \mathbb{E}_{(\boldsymbol{x}_i,\widebar{Y}_i)\sim\widebar{p}(\boldsymbol{x},\widebar{Y}\mid s=j)}\mathbb{E}_{\boldsymbol{\sigma}}\bigg[\sup_{h\in\mathcal{H}_j}\frac{1}{n_j}\sum\nolimits_{i=1}^{n_j}\sigma_ih(\boldsymbol{x}_i,\widebar{Y}_i)\bigg]\\
\nonumber
=&\mathbb{E}_{(\boldsymbol{x}_i,\widebar{Y}_i)\sim\widebar{p}(\boldsymbol{x},\widebar{Y}\mid s=j)}\mathbb{E}_{\boldsymbol{\sigma}}\Bigg[\sup_{f\in\mathcal{F}}\frac{1}{n_j}\sum\nolimits_{i=1}^{n_j}\sigma_i\bigg(\sum\nolimits_{y\notin\widebar{Y}_i}\mathcal{L}\big(f(\boldsymbol{x}),y\big) - \frac{k-j-1}{j}\sum\nolimits_{y^\prime\in\widebar{Y}_i}\mathcal{L}\big(f(\boldsymbol{x}),y^\prime\big)\bigg)\Bigg]\\
\nonumber
\leq& \mathbb{E}_{(\boldsymbol{x}_i,\widebar{Y}_i)\sim\widebar{p}(\boldsymbol{x},\widebar{Y}\mid s=j)}\mathbb{E}_{\boldsymbol{\sigma}}\Bigg[\sup_{f\in\mathcal{F}}\frac{1}{n_j}\sum\nolimits_{i=1}^{n_j}\sigma_i\bigg(\sum\nolimits_{y\notin\widebar{Y}_i}\mathcal{L}\big(f(\boldsymbol{x}),y\big)\bigg)\Bigg] \\
\nonumber
&\quad\quad\quad\quad\quad\quad\quad\quad\quad\quad\quad+\mathbb{E}_{(\boldsymbol{x}_i,\widebar{Y}_i)\sim\widebar{p}(\boldsymbol{x},\widebar{Y}\mid s=j)}\mathbb{E}_{\boldsymbol{\sigma}}\Bigg[\sup_{f\in\mathcal{F}}\frac{1}{n_j}\sum\nolimits_{i=1}^{n_j}\sigma_i\bigg(\frac{k-j-1}{j}\sum\nolimits_{y^\prime\in\widebar{Y}_i}\mathcal{L}\big(f(\boldsymbol{x}),y^\prime\big)\bigg)\Bigg].
\end{align}
Here, we introduce random variables $\alpha_{i,y}=\mathbb{I}[y\in\widebar{Y}_i],\ \forall i\in\{1,\cdots,n\},y\in\mathcal{Y}$, where $\mathbb{I}[\cdot]$ denotes the indicator function. In other words, given a complementary label set $\widebar{Y}_i$, if a specific label $y$ satisfies the condition $y\in\widebar{Y}_i$, then $\mathbb{I}[y\in\widebar{Y}_i]=1$, otherwise $\mathbb{I}[y\in\widebar{Y}_i]=0$.
Then, we can obtain
\begin{align}
\nonumber
\widebar{\mathfrak{R}}_{n_j}(\mathcal{H}_j)\leq& \mathbb{E}_{(\boldsymbol{x}_i,\widebar{Y}_i)\sim\widebar{p}(\boldsymbol{x},\widebar{Y}\mid s=j)}\mathbb{E}_{\boldsymbol{\sigma}}\Bigg[\sup_{f\in\mathcal{F}}\frac{1}{n_j}\sum\nolimits_{i=1}^{n_j}\sigma_i\bigg(\sum\nolimits_{y\notin\widebar{Y}_i}\mathcal{L}\big(f(\boldsymbol{x}),y\big)\bigg)\Bigg] \\
\nonumber
&\quad\quad\quad\quad\quad\quad\quad\quad\quad\quad\quad+\mathbb{E}_{(\boldsymbol{x}_i,\widebar{Y}_i)\sim\widebar{p}(\boldsymbol{x},\widebar{Y}\mid s=j)}\mathbb{E}_{\boldsymbol{\sigma}}\Bigg[\sup_{f\in\mathcal{F}}\frac{1}{n_j}\sum\nolimits_{i=1}^{n_j}\sigma_i\bigg(\frac{k-j-1}{j}\sum\nolimits_{y^\prime\in\widebar{Y}_i}\mathcal{L}\big(f(\boldsymbol{x}),y^\prime\big)\bigg)\Bigg]\\
\nonumber
=&\mathbb{E}_{(\boldsymbol{x}_i,\widebar{Y}_i)\sim\widebar{p}(\boldsymbol{x},\widebar{Y}\mid s=j)}\mathbb{E}_{\boldsymbol{\sigma}}\Bigg[\sup_{f\in\mathcal{F}}\frac{1}{n_j}\sum\nolimits_{i=1}^{n_j}\sigma_i\bigg(\sum\nolimits_{y=1}^k(1-\alpha_{i,y})\mathcal{L}\big(f(\boldsymbol{x}),y\big)\bigg)\Bigg] \\
\nonumber
&\quad\quad\quad\quad\quad\quad\quad\quad\quad\quad+\mathbb{E}_{(\boldsymbol{x}_i,\widebar{Y}_i)\sim\widebar{p}(\boldsymbol{x},\widebar{Y}\mid s=j)}\mathbb{E}_{\boldsymbol{\sigma}}\Bigg[\sup_{f\in\mathcal{F}}\frac{1}{n_j}\sum\nolimits_{i=1}^{n_j}\sigma_i\bigg(\frac{k-j-1}{j}\sum\nolimits_{y=1}^k\alpha_{i,y}\mathcal{L}\big(f(\boldsymbol{x}),y\big)\bigg)\Bigg]\\
\nonumber
=&\mathbb{E}_{(\boldsymbol{x}_i,\widebar{Y}_i)\sim\widebar{p}(\boldsymbol{x},\widebar{Y}\mid s=j)}\mathbb{E}_{\boldsymbol{\sigma}}\Bigg[\sup_{f\in\mathcal{F}}\frac{1}{n_j}\sum\nolimits_{i=1}^{n_j}\sigma_i\bigg(\sum\nolimits_{y=1}^k\frac{1}{2}(1-2\alpha_{i,y}+1)\mathcal{L}\big(f(\boldsymbol{x}),y\big)\bigg)\Bigg] \\
\nonumber
&\quad\quad\quad\quad\quad+\mathbb{E}_{(\boldsymbol{x}_i,\widebar{Y}_i)\sim\widebar{p}(\boldsymbol{x},\widebar{Y}\mid s=j)}\mathbb{E}_{\boldsymbol{\sigma}}\Bigg[\sup_{f\in\mathcal{F}}\frac{1}{n_j}\sum\nolimits_{i=1}^{n_j}\sigma_i\bigg(\frac{k-j-1}{j}\sum\nolimits_{y=1}^k\frac{1}{2}(2\alpha_{i,y}-1+1)\mathcal{L}\big(f(\boldsymbol{x}),y\big)\bigg)\Bigg]\\
\nonumber
=&\mathbb{E}_{(\boldsymbol{x}_i,\widebar{Y}_i)\sim\widebar{p}(\boldsymbol{x},\widebar{Y}\mid s=j)}\mathbb{E}_{\boldsymbol{\sigma}}\Bigg[\sup_{f\in\mathcal{F}}\frac{1}{2n_j}\sum\nolimits_{i=1}^{n_j}\bigg(\sum\nolimits_{y=1}^k(1-2\alpha_{i,y})\sigma_i\mathcal{L}\big(f(\boldsymbol{x}),y\big)+\sigma_i\mathcal{L}\big(f(\boldsymbol{x}),y\big)\bigg)\Bigg] \\
\nonumber
+&\mathbb{E}_{(\boldsymbol{x}_i,\widebar{Y}_i)\sim\widebar{p}(\boldsymbol{x},\widebar{Y}\mid s=j)}\mathbb{E}_{\boldsymbol{\sigma}}\Bigg[\sup_{f\in\mathcal{F}}\frac{1}{2n_j}\sum\nolimits_{i=1}^{n_j}\bigg(\frac{k-j-1}{j}\sum\nolimits_{y=1}^k(2\alpha_{i,y}-1)\sigma_i\mathcal{L}\big(f(\boldsymbol{x}),y^\prime\big)+\sigma_i\mathcal{L}\big(f(\boldsymbol{x}),y^\prime\big)\bigg)\Bigg].
\end{align}
Here, because $(1-2\alpha_{i,y})\sigma_i$ and $(2\alpha_{i,y}-1)\sigma_i$, and $\sigma_i$ follow the same distribution, we have
\begin{align}
\nonumber
\widebar{\mathfrak{R}}_{n_j}(\mathcal{H}_j) \leq &\mathbb{E}_{(\boldsymbol{x}_i,\widebar{Y}_i)\sim\widebar{p}(\boldsymbol{x},\widebar{Y}\mid s=j)}\mathbb{E}_{\boldsymbol{\sigma}}\Bigg[\sup_{f\in\mathcal{F}}\frac{1}{2n_j}\sum\nolimits_{i=1}^{n_j}\bigg(\sum\nolimits_{y=1}^k(1-2\alpha_{i,y})\sigma_i\mathcal{L}\big(f(\boldsymbol{x}),y\big)+\sigma_i\mathcal{L}\big(f(\boldsymbol{x}),y\big)\bigg)\Bigg] \\
\nonumber
+&\mathbb{E}_{(\boldsymbol{x}_i,\widebar{Y}_i)\sim\widebar{p}(\boldsymbol{x},\widebar{Y}\mid s=j)}\mathbb{E}_{\boldsymbol{\sigma}}\Bigg[\sup_{f\in\mathcal{F}}\frac{1}{2n_j}\sum\nolimits_{i=1}^{n_j}\bigg(\frac{k-j-1}{j}\sum\nolimits_{y=1}^k(2\alpha_{i,y}-1)\sigma_i\mathcal{L}\big(f(\boldsymbol{x}),y^\prime\big)+\sigma_i\mathcal{L}\big(f(\boldsymbol{x}),y^\prime\big)\bigg)\Bigg]\\
\nonumber
\leq& \mathbb{E}_{(\boldsymbol{x}_i,\widebar{Y}_i)\sim\widebar{p}(\boldsymbol{x},\widebar{Y}\mid s=j)}\mathbb{E}_{\boldsymbol{\sigma}}\Bigg[\sup_{f\in\mathcal{F}}\frac{1}{n_j}\sum\nolimits_{i=1}^{n_j}\sum\nolimits_{y=1}^k\sigma_i\mathcal{L}\big(f(\boldsymbol{x}_i),y\big)\Bigg]
\\
\nonumber
&\quad\quad\quad\quad\quad\quad\quad\quad\quad\quad\quad\quad +\mathbb{E}_{(\boldsymbol{x}_i,\widebar{Y}_i)\sim\widebar{p}(\boldsymbol{x},\widebar{Y}\mid s=j)}\mathbb{E}_{\boldsymbol{\sigma}}\Bigg[\sup_{f\in\mathcal{F}}\frac{1}{n_j}\sum\nolimits_{i=1}^{n_j}\frac{k-j-1}{j}\sum\nolimits_{y=1}^k\sigma_i\mathcal{L}\big(f(\boldsymbol{x}_i),y\big)\Bigg]\\
\nonumber
=&\frac{k-1}{j}\mathbb{E}_{(\boldsymbol{x}_i,\widebar{Y}_i)\sim\widebar{p}(\boldsymbol{x},\widebar{Y}\mid s=j)}\mathbb{E}_{\boldsymbol{\sigma}}\Bigg[\sup_{f\in\mathcal{F}}\frac{1}{n_j}\sum\nolimits_{i=1}^{n_j}\sum\nolimits_{y=1}^k\sigma_i\mathcal{L}\big(f(\boldsymbol{x}_i),y\big)\Bigg]\\
\nonumber
\leq& \frac{k-1}{j}\sum\nolimits_{y=1}^k\mathbb{E}_{\boldsymbol{x}_i\sim p(\boldsymbol{x})}\mathbb{E}_{\boldsymbol{\sigma}}\Bigg[\sup_{f\in\mathcal{F}}\frac{1}{n_j}\sum\nolimits_{i=1}^{n_j}\sigma_i\mathcal{L}\big(f(\boldsymbol{x}_i),y\big)\Bigg]\quad\quad (\because p(\boldsymbol{x}) = \widebar{p}(\boldsymbol{x}\mid s=j)),
\end{align}
Then, we have
\begin{align}
\nonumber
\widebar{\mathfrak{R}}_{n_j}(\mathcal{H}_j)\leq& \frac{k-1}{j}\sum\nolimits_{y=1}^k\mathbb{E}_{\boldsymbol{x}_i\sim p(\boldsymbol{x})}\mathbb{E}_{\boldsymbol{\sigma}}\Bigg[\sup_{f\in\mathcal{F}}\frac{1}{n_j}\sum\nolimits_{i=1}^{n_j}\sigma_i\mathcal{L}\big(f(\boldsymbol{x}_i),y\big)\Bigg]\\
\nonumber
\leq& \frac{k-1}{j}\sum\nolimits_{y=1}^k\mathfrak{R}_{n_j}(\mathcal{L}\circ\mathcal{F})\\
\nonumber
\leq&\frac{\sqrt{2}\rho k(k-1)}{j}\sum\nolimits_{y=1}^k\mathfrak{R}_{n_j}(\mathcal{G}_y),
\end{align}
where we applied the Rademacher vector contraction inequality~\cite{maurer2016vector} in the last inequality.
\end{proof}
Under the assumptions described in the above three lemmas (Lemma~\ref{basic_lemma}, Lemma~\ref{temp_lemma}, and Lemma~\ref{rade_func}), for any $\delta>0$, with probability at least $1-\delta$,
\begin{gather}
\nonumber
R(\widehat{f}) - R(f^\star)
\leq \sum\nolimits_{j=1}^{k-1}p(s=j)\bigg(\frac{4\sqrt{2}\rho k(k-1)}{j}\sum\nolimits_{y=1}^k\mathfrak{R}_{n_j}(\mathcal{G}_y)+(4k-4j-2)C_{\mathcal{L}}\sqrt{\frac{\log\frac{2(k-1)}{\delta}}{2n_j}}\bigg).
\end{gather}
It is clear that by combining the above three lemmas, Theorem 4 is proved.
\qed
\section{Derivations and Boundness of the Used Loss Functions}\label{D}
\subsection{Derivations of the Used Loss Functions}
Conventionally, the label for each instance $\boldsymbol{x}$ is in one-hot encoding. Concretely, if the label of $\boldsymbol{x}$ is $y$, then we represent the label vector as $\boldsymbol{e}_y$ where $e_{yj}=1$ if $j=y$, otherwise 0. In this way, we provide the detailed derivations of CCE, MAE, and MSE as follows.
\begin{itemize}
\item Categorical Cross Entropy (CCE):
\begin{gather}
\nonumber
\mathcal{L}_{\text{CCE}}(f(\boldsymbol{x}),y)=-\sum\nolimits_{j=1}^ke_{yj}\log p_{\boldsymbol{\theta}}(j|\boldsymbol{x}) =
-\log p_{\boldsymbol{\theta}}(y|\boldsymbol{x}).
\end{gather}
\item Mean Absolute Error (MAE):
\begin{gather}
\nonumber
\mathcal{L}_{\text{MAE}}(f(\boldsymbol{x}),y)=\sum\nolimits_{j=1}^k\left|p_{\boldsymbol{\theta}}(j|\boldsymbol{x})-e_{yj}\right| = 2-2p_{\boldsymbol{\theta}}(y|\boldsymbol{x}).
\end{gather}
\item Mean Square Error (MSE):
\begin{gather}
\nonumber
\mathcal{L}_{\text{MSE}}(f(\boldsymbol{x}),y)=\sum\nolimits_{j=1}^k\big(p_{\boldsymbol{\theta}}(j|\boldsymbol{x})-e_{yj}\big)^2=1-2p_{\boldsymbol{\theta}}(y|\boldsymbol{x})+\sum\nolimits_{j=1}^kp_{\boldsymbol{\theta}}(j|\boldsymbol{x})^2.
\end{gather}
\end{itemize}
\subsection{Boundness of the Used Loss Functions}
\setcounter{table}{4}
\begin{table*}[!t]
\centering
	\caption{Statistics of the used benchmark datasets.}
	\label{datasets}
\setlength{\tabcolsep}{4.0mm}{
				\begin{tabular}{lccccc}
					\toprule
					Dataset & \#Train & \#Test & \#Features & \#Classes & Model \\
					\midrule
					MNIST & 60,000 & 10,000 & 784 & 10 & Linear Model, MLP ($d$-500-10)\\
					Fashion-MNIST & 60,000 & 10,000 & 784 & 10 & Linear Model, MLP ($d$-500-10) \\
					Kuzushiji-MNIST & 60,000 & 10,000 & 784 & 10 & Linear Model, MLP ($d$-500-10) \\
					20Newsgroups & 16,961 & 1,885 & 1,000 & 20 & Linear Model, MLP ($d$-500-20)\\
					CIFAR-10 & 50,000 & 10,000 & 3,072 & 10 & ResNet, DenseNet \\
					\midrule
					Yeast & 1,335 & 149 & 8 & 10 & Linear Model\\
					Texture & 4,950 & 550 & 40 & 11 & Linear Model\\
					Dermatology & 329 & 37 & 34 & 6 & Linear Model\\
					Synthetic Control & 540 & 60 & 60 & 6 & Linear Model \\
					\bottomrule
				\end{tabular}
				}
\end{table*}
\setcounter{figure}{1}
\begin{figure*}[!t]
\centering
\subfigure[MNIST, Linear]{
      \includegraphics[width=1.6in]{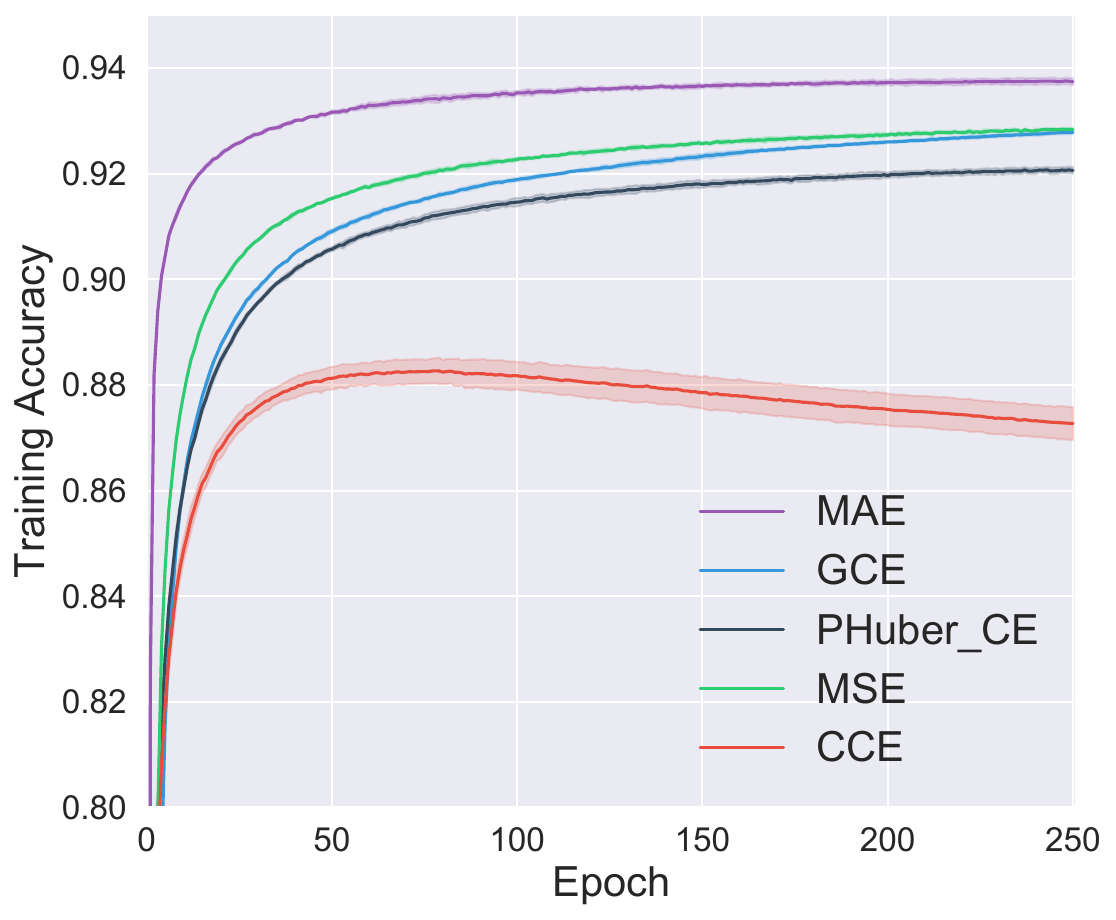}}%
  \subfigure[MNIST, MLP]{
      \includegraphics[width=1.6in]{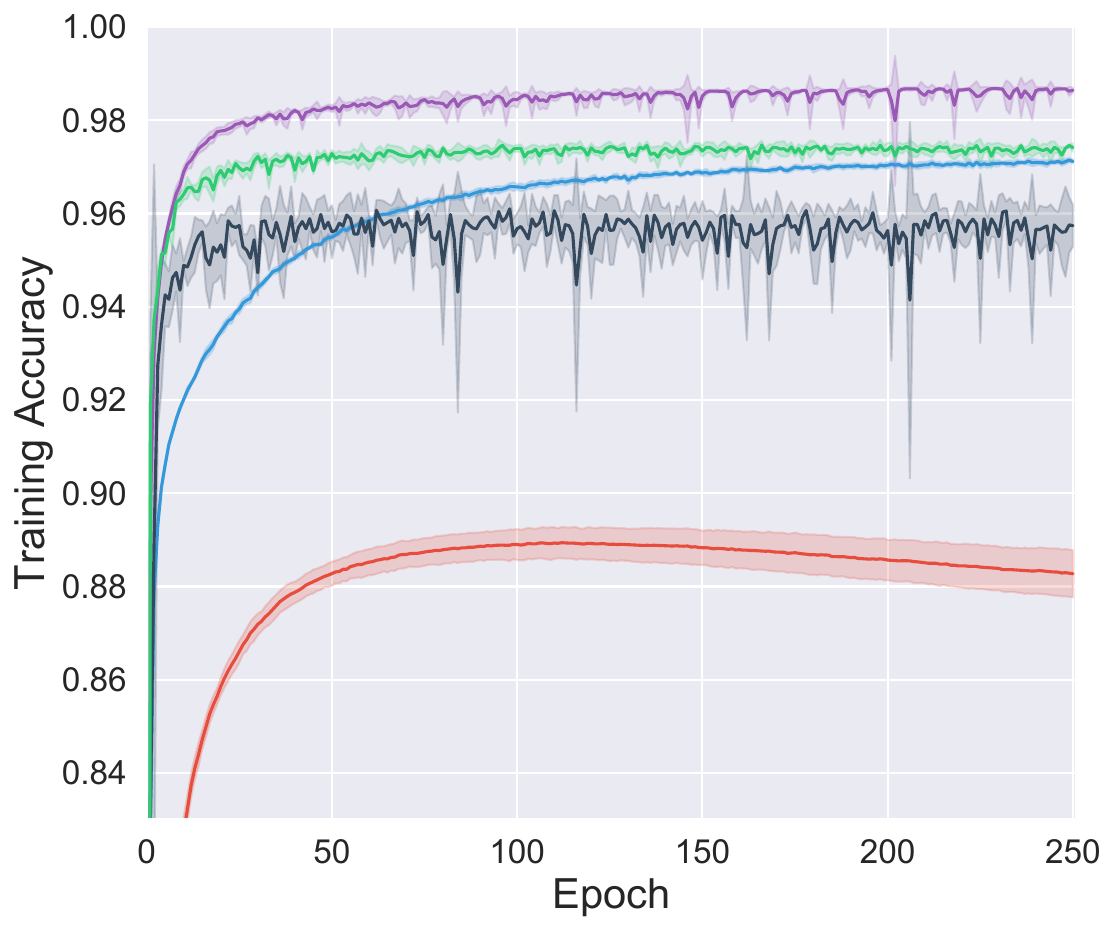}}
      \subfigure[Fashion-MNIST, Linear]{
      \includegraphics[width=1.6in]{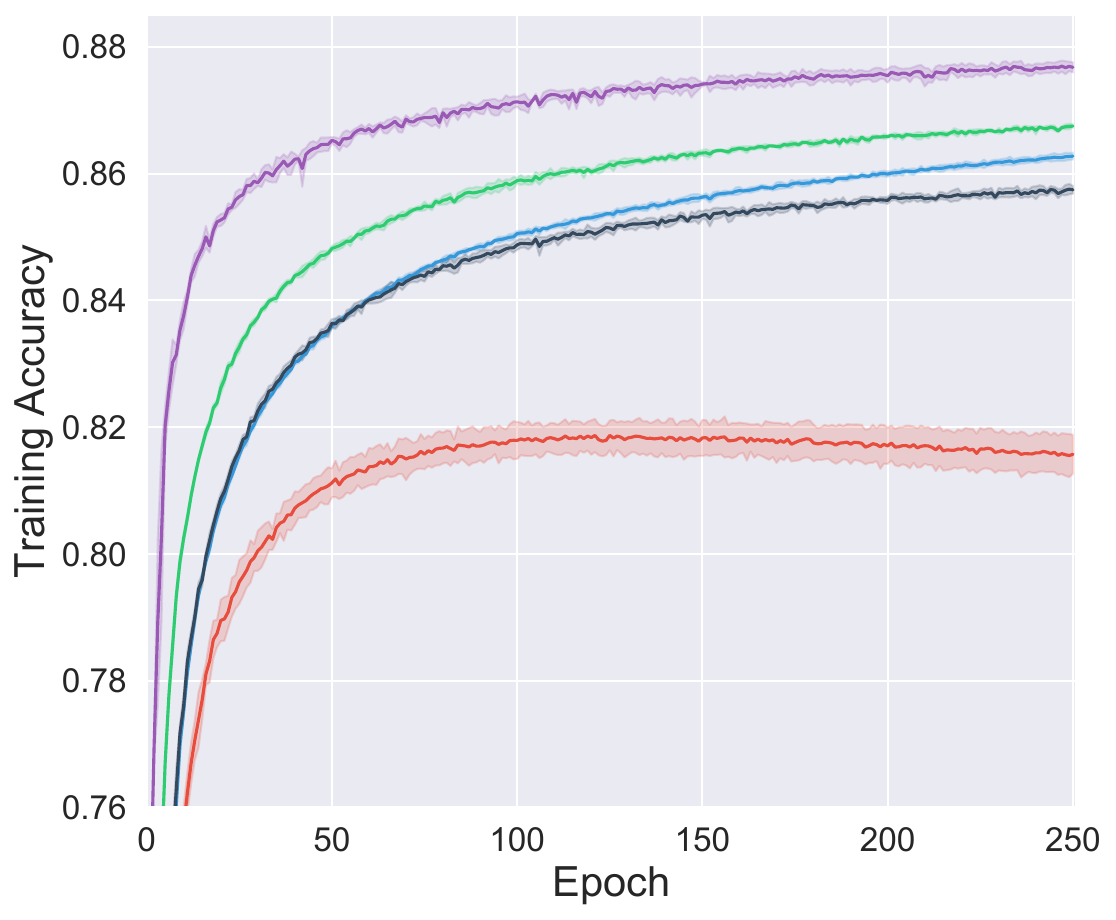}}
      \subfigure[Fashion-MNIST, MLP]{
      \includegraphics[width=1.6in]{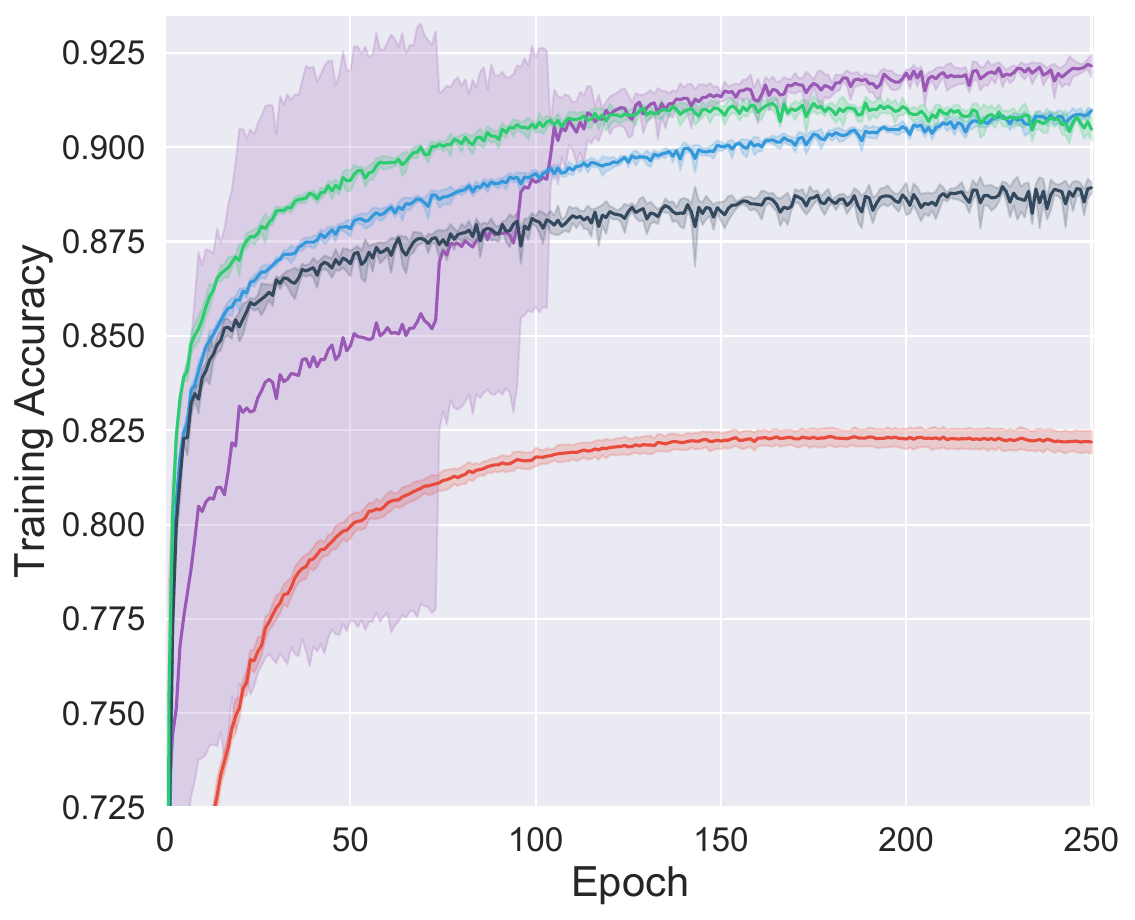}}
      \subfigure[Kuzushiji-MNIST, Linear]{
      \includegraphics[width=1.6in]{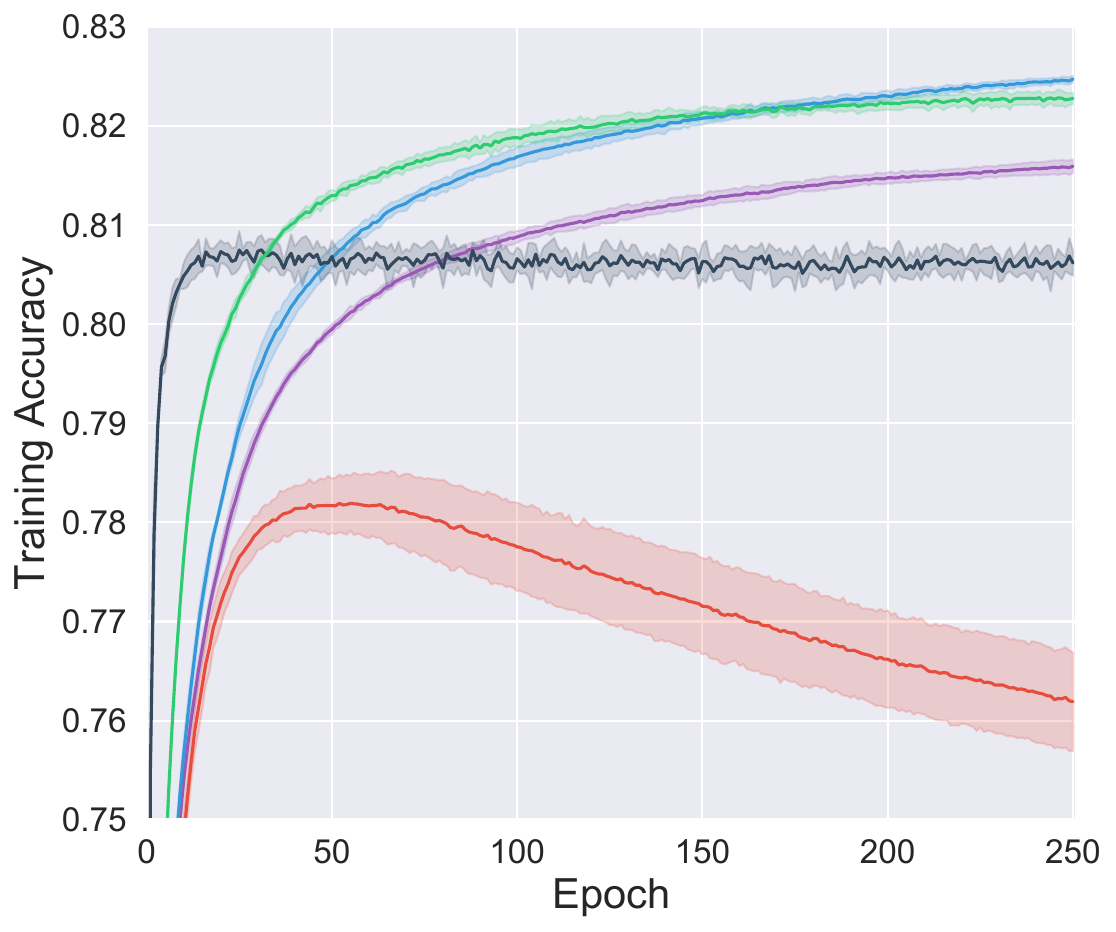}}
      \subfigure[Kuzushiji-MNIST, MLP]{
      \includegraphics[width=1.6in]{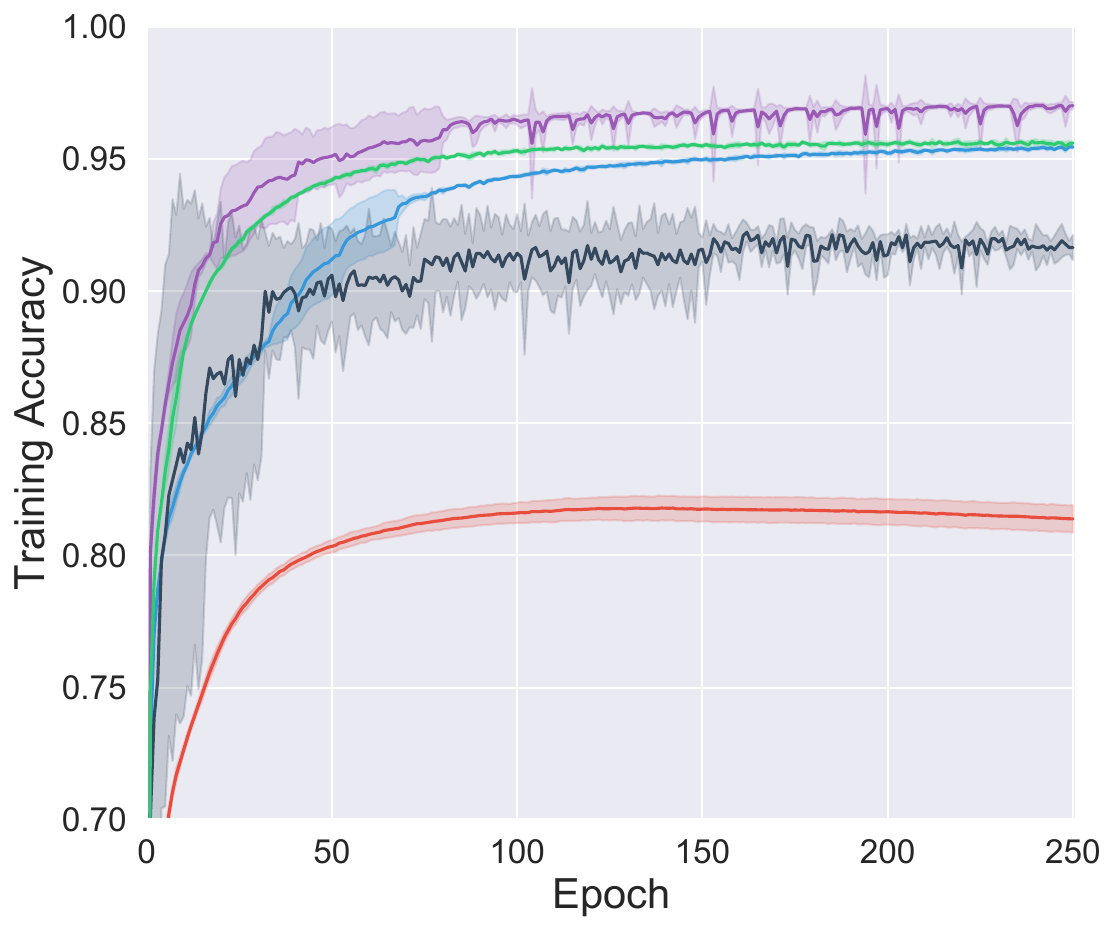}}
      \subfigure[CIFAR-10, Resnet]{
      \includegraphics[width=1.6in]{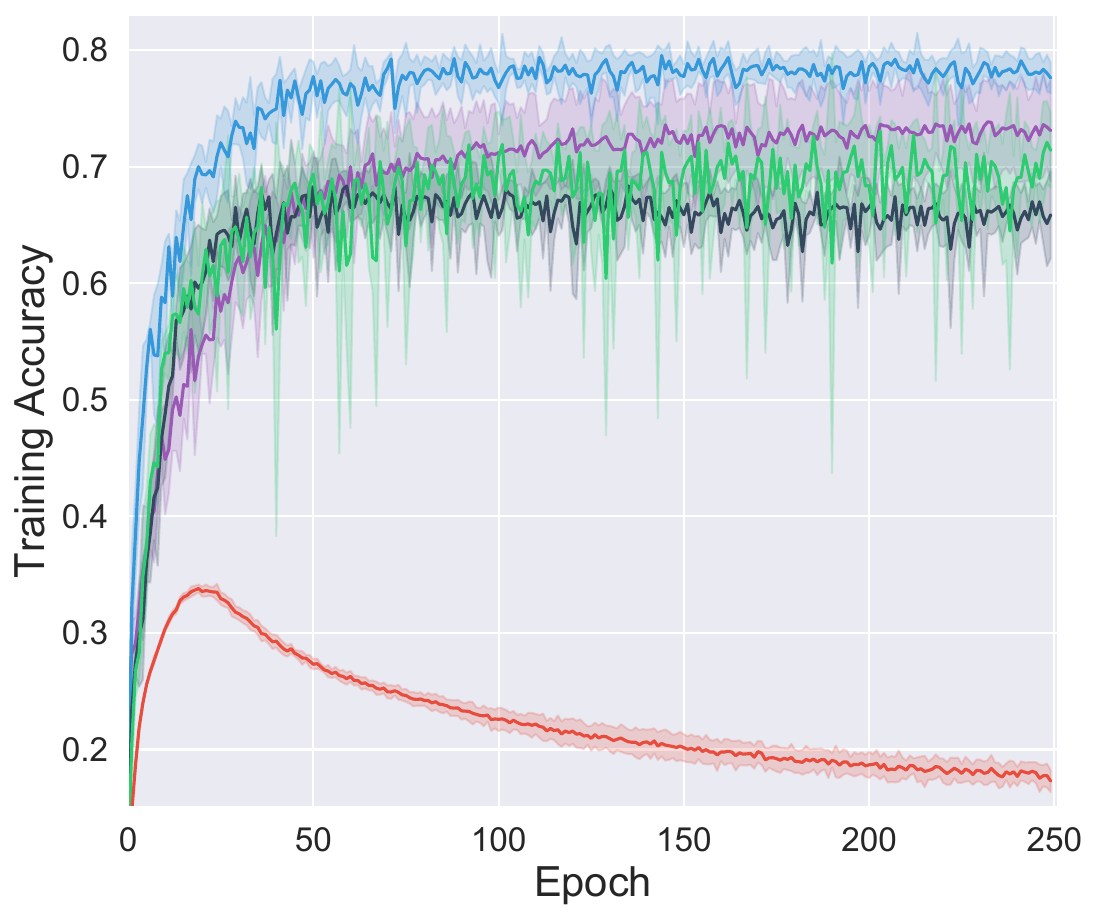}}
      \subfigure[CIFAR-10, Densenet]{
      \includegraphics[width=1.6in]{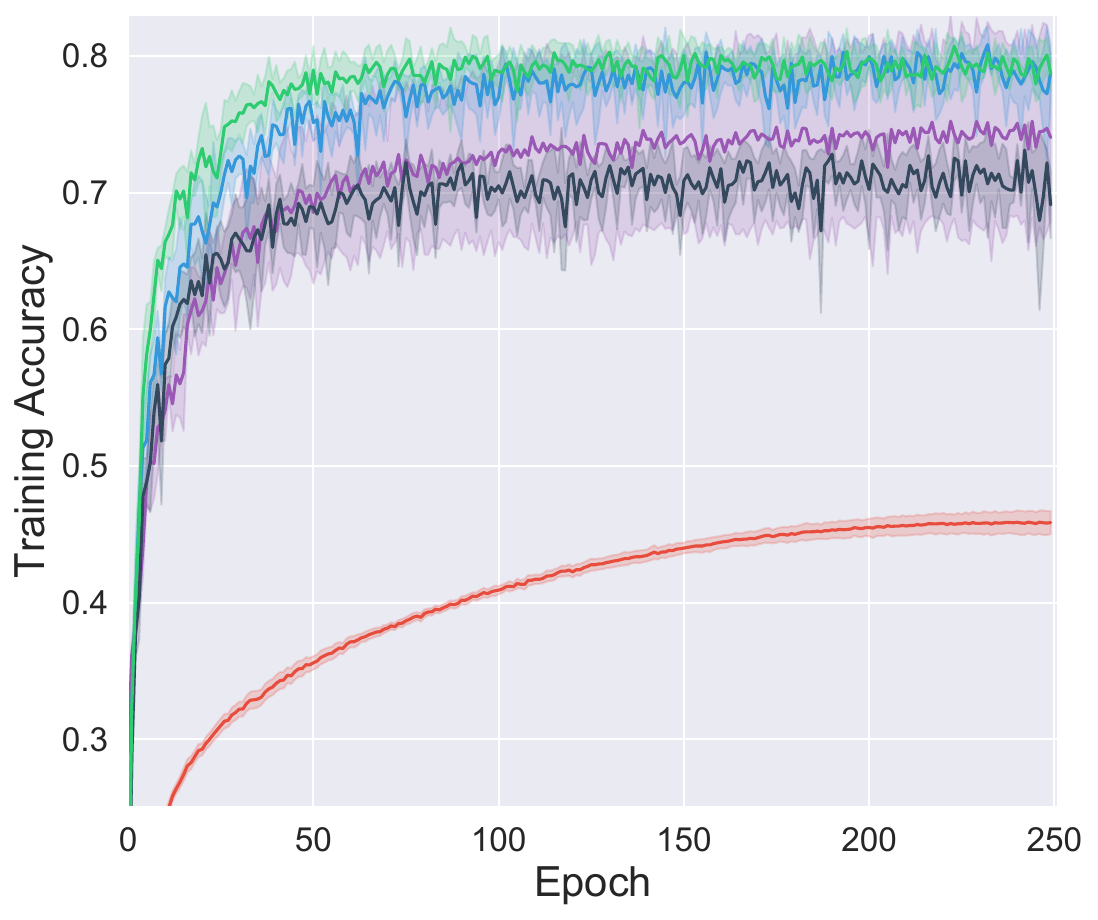}}
  \caption{Experimental results of different loss functions for different datasets and models. Dark colors show the mean accuracy of 5 trials and light colors show the standard deviation.}
  \label{training} 
\end{figure*}
Firstly, it is clear that each loss function is non-negative. Besides, for each loss function, the loss becomes larger if $p_{\boldsymbol{\theta}}(y|\boldsymbol{x})$ gets smaller given the correct label $y$. Note that $0< p_{\boldsymbol{\theta}}(y|\boldsymbol{x})<1$, hence the upper bound of each loss function is stated as follows.
\begin{itemize}
\item MAE: $\mathcal{L}_{\text{MAE}}(f(\boldsymbol{x}),y) < 2$.
\item MSE: $\mathcal{L}_{\text{MSE}}(f(\boldsymbol{x}),y)< 1-0+\sum\nolimits_{j=1}^k p_{\boldsymbol{\theta}}(j|\boldsymbol{x})^2<2$.
\item GCE: $\mathcal{L}_{\text{GCE}}(f(\boldsymbol{x}),y)<1/q$ where $q=0.7$.
\item PHuber-CE: $\mathcal{L}_{\text{PHuber-CE}}(f(\boldsymbol{x}),y)<\log\tau+1$ where $\tau=10$.
\end{itemize}
Note that for CCE, $\mathcal{L}_{\text{CCE}}(f(\boldsymbol{x}),y) < -\log 0 = \infty$. Therefore, we can know that MAE, MSE, GCE, and PHuber-CE are upper-bounded, while CCE is not upper-bounded.
\section{Additional Information of Experiments}
\begin{table*}[!t]
\centering
\caption{Classification accuracy (\%) of each approach on Kuzushiji-MNIST using linear model. The best performance is highlighted in boldface.}
\label{fixed_linear}
\resizebox{1.0\textwidth}{!}{
\setlength{\tabcolsep}{3.0mm}{
\begin{tabular}{c|c|c|c|c|c|c|c|c|c}
\toprule
\multicolumn{2}{c|}{Approach}              & $s=1$ & $s=2$ & $s=3$ & $s=4$ & $s=5$ & $s=6$ & $s=7$ & $s=8$\\
\midrule
\multirow{4}{*}{Upper-bound Losses }
                              & EXP       &  \bf{60.87}  &  \bf{62.73}   &  \bf{63.53}  & \bf{64.03} & \bf{64.55} & 65.06 & 65.23 & 65.65
                         \\
                         & 	&  ($\pm$0.38)    &  ($\pm$0.58)    &  ($\pm$0.30)    &  ($\pm$0.38)    &  ($\pm$0.41)    &  ($\pm$0.15)    &  ($\pm$0.10)    &  ($\pm$0.08)    \\

                              & LOG &  60.11  & 61.57 & 62.71 & 63.36 & 64.01 & 65.68 & \bf{69.35} & \bf{70.10} \\
                              & 	&  ($\pm$0.49)    &  ($\pm$0.15)    &  ($\pm$0.32)    &  ($\pm$0.09)    &  ($\pm$0.13)    &  ($\pm$0.27)    &  ($\pm$0.22)    &  ($\pm$0.18)    \\
\midrule
\multirow{8}{*}{Bounded Losses}
& MAE     &  60.43  & 62.71 & 63.51  & 63.75 & 63.94 & 64.61 & 64.82 & 65.10 \\
& 	&  ($\pm$0.43)    &  ($\pm$0.45)    &  ($\pm$0.10)    &  ($\pm$0.31)    &  ($\pm$0.38)    &  ($\pm$0.19)    &  ($\pm$0.16)    &  ($\pm$0.16)    \\

& MSE  &	58.97    &    62.07    &    63.05    &    63.85    &    64.47    &    64.80    &    65.17    &    65.43    \\
& 	&  ($\pm$0.47)    &  ($\pm$0.54)    &  ($\pm$0.38)    &  ($\pm$0.57)    &  ($\pm$0.43)    &  ($\pm$0.34)    &  ($\pm$0.25)    &  ($\pm$0.10)    \\

& GCE       & 60.48  & 62.71 & 63.13 & 63.87 & 63.91 & 64.28 & 64.38 & 64.33\% \\
& 	&  ($\pm$0.55)    &  ($\pm$0.65)    &  ($\pm$0.30)    &  ($\pm$0.33)    &  ($\pm$0.30)    &  ($\pm$0.07)    &  ($\pm$0.12)    &  ($\pm$0.06)    \\

& Phuber-CE & 52.69  & 56.58   & 61.10   & 62.32    & 64.51   & 64.93  & 65.96   &  65.81  \\
&  &  ($\pm$4.22)    &  ($\pm$3.94)    &  ($\pm$2.58)    &  ($\pm$1.50)    &  ($\pm$0.68)    &  ($\pm$0.52)    &  ($\pm$0.37)    &  ($\pm$0.62)    \\
\midrule
\multirow{2}{*}{Unbounded Loss}                & CCE  &	51.59    &    55.98    &    59.15    &    61.08    &    63.19    &    65.05    &    66.82    &    68.23    \\
& 	&  ($\pm$0.64)    &  ($\pm$1.26)    &  ($\pm$1.18)    &  ($\pm$0.78)    &  ($\pm$0.54)    &  ($\pm$0.51)    &  ($\pm$0.41)    &  ($\pm$0.21)    \\
\midrule
\multirow{10}{*}{Decomposition before Shuffle}  & GA  &	51.72    &    53.78    &    54.58    &    54.78    &    55.33    &    55.67    &    55.91    &    56.15    \\
& 	&  ($\pm$1.04)    &  ($\pm$1.07)    &  ($\pm$0.87)    &  ($\pm$0.58)    &  ($\pm$0.29)    &  ($\pm$0.31)    &  ($\pm$0.42)    &  ($\pm$0.23)    \\

& NN  &	55.03    &    57.68    &    58.87    &    59.52    &    60.41    &    60.89    &    61.41    &    61.62    \\
& 	&  ($\pm$1.35)    &  ($\pm$1.29)    &  ($\pm$1.19)    &  ($\pm$0.87)    &  ($\pm$0.59)    &  ($\pm$0.53)    &  ($\pm$0.36)    &  ($\pm$0.09)    \\

& FREE &	57.26    &    60.69    &    62.77    &    63.91    &    64.54    &    \textbf{66.21}    &    67.00    &    67.71    \\
& 	&  ($\pm$0.83)    &  ($\pm$0.96)    &  ($\pm$0.79)    &  ($\pm$0.65)    &  ($\pm$0.55)    &  ($\pm$0.56)    &  ($\pm$0.28)    &  ($\pm$0.20)    \\

& PC &	54.31    &    58.11    &    60.15    &    61.32    &    62.56    &    63.55    &    64.27    &    65.16    \\
& 	&  ($\pm$1.04)    &  ($\pm$0.87)    &  ($\pm$0.79)    &  ($\pm$0.68)    &  ($\pm$0.59)    &  ($\pm$0.43)    &  ($\pm$0.20)    &  ($\pm$0.18)    \\

& Forward &	60.05    &    61.53    &    62.43    &    62.98    &    63.48    &    63.95    &    64.14    &    64.27    \\
& 	&  ($\pm$0.43)    &  ($\pm$0.31)    &  ($\pm$0.26)    &  ($\pm$0.40)    &  ($\pm$0.34)    &  ($\pm$0.29)    &  ($\pm$0.09)    &  ($\pm$0.16)    \\
\midrule
\multirow{10}{*}{Decomposition after Shuffle}  & GA   &	51.72    &    53.79    &    54.59    &    54.83    &    55.33    &    55.67    &    55.90    &    56.18    \\
& 	&  ($\pm$1.05)    &  ($\pm$1.07)    &  ($\pm$0.85)    &  ($\pm$0.58)    &  ($\pm$0.35)    &  ($\pm$0.31)    &  ($\pm$0.41)    &  ($\pm$0.22)    \\

& NN &	55.03    &    58.58    &    60.43    &    61.58    &    62.99    &    64.00    &    65.07    &    66.08    \\
& 	&  ($\pm$1.35)    &  ($\pm$1.11)    &  ($\pm$1.00)    &  ($\pm$0.72)    &  ($\pm$0.49)    &  ($\pm$0.48)    &  ($\pm$0.36)    &  ($\pm$0.10)    \\

& FREE  &	57.26    &    60.32    &    62.11    &    62.98    &    64.30    &    65.18    &    66.02    &    67.02    \\
& 	&  ($\pm$0.84)    &  ($\pm$0.94)    &  ($\pm$0.64)    &  ($\pm$0.67)    &  ($\pm$0.47)    &  ($\pm$0.45)    &  ($\pm$0.28)    &  ($\pm$0.18)    \\

& PC &	54.31    &    57.32    &    58.95    &    60.17    &    61.47    &    62.54    &    63.53    &    64.74    \\
& 	&  ($\pm$1.04)    &  ($\pm$0.76)    &  ($\pm$0.77)    &  ($\pm$0.83)    &  ($\pm$0.45)    &  ($\pm$0.40)    &  ($\pm$0.22)    &  ($\pm$0.22)    \\

& Forward  &	60.02    &    61.75    &    62.68    &    63.19    &    63.59    &    63.94    &    64.18    &    64.32    \\
& 	&  ($\pm$0.44)    &  ($\pm$0.25)    &  ($\pm$0.23)    &  ($\pm$0.28)    &  ($\pm$0.19)    &  ($\pm$0.09)    &  ($\pm$0.14)    &  ($\pm$0.15)    \\
\bottomrule
\end{tabular}
}
}
\end{table*}
\begin{table*}[!t]
\centering
\caption{Classification accuracy (\%) of each approach on Kuzushiji-MNIST using MLP. The best performance is highlighted in boldface.}
\label{fixed_mlp}
\resizebox{1.0\textwidth}{!}{
\setlength{\tabcolsep}{3.0mm}{
\begin{tabular}{c|c|c|c|c|c|c|c|c|c}
\toprule
\multicolumn{2}{c|}{Approach}              & $s=1$ & $s=2$ & $s=3$ & $s=4$ & $s=5$ & $s=6$ & $s=7$ & $s=8$\\
\midrule
\multirow{4}{*}{Upper-bound Losses }
& EXP & 71.66 & \bf{82.51} & 84.45 & 87.10 & 88.35 & \bf{89.61} & 90.18 & 90.92 \\
& 	&    ($\pm$3.48 )    &    ($\pm$3.08 )    &    ($\pm$0.24 )    &    ($\pm$0.37 )    &    ($\pm$0.18 )    &    ($\pm$0.33 )    &    ($\pm$0.37 )    &    ($\pm$0.15)    \\
& LOG & \bf{77.07} & 82.39 & \bf{85.54} & \bf{87.60} & \bf{88.87} & 89.25 & \bf{90.22} & \bf{91.19}\\
& 	&    ($\pm$3.00 )    &    ($\pm$0.73 )    &    ($\pm$0.35 )    &    ($\pm$0.40 )    &    ($\pm$0.34 )    &    ($\pm$0.37 )    &    ($\pm$0.31 )    &    ($\pm$0.11)    \\
\midrule
\multirow{8}{*}{Bounded Losses}
& MAE &	69.87    &    73.60    &    79.97    &    85.34    &    86.91    &    89.10    &    90.32    &    91.06    \\
& 	&    ($\pm$1.04 )    &    ($\pm$5.77 )    &    ($\pm$3.71 )    &    ($\pm$2.78 )    &    ($\pm$3.06 )    &    ($\pm$0.46 )    &    ($\pm$0.31 )    &    ($\pm$0.34)    \\

& MSE  &	57.56    &    71.37    &    78.26    &    82.97    &    85.37    &    86.82    &    88.03    &    88.69    \\
& 	&    ($\pm$0.92 )    &    ($\pm$0.89 )    &    ($\pm$0.49 )    &    ($\pm$0.41 )    &    ($\pm$0.45 )    &    ($\pm$0.13 )    &    ($\pm$0.11 )    &    ($\pm$0.05)    \\

& GCE &	63.85    &    74.11    &    79.18    &    83.65    &    85.23    &    86.32    &    87.12    &    87.64    \\
& 	&    ($\pm$1.27 )    &    ($\pm$2.38 )    &    ($\pm$2.31 )    &    ($\pm$0.15 )    &    ($\pm$0.25 )    &    ($\pm$0.27 )    &    ($\pm$0.20 )    &    ($\pm$0.09)    \\

& Phuber-CE & 10.24 & 14.76 & 26.60 & 73.43 & 81.41 & 83.00 & 84.69 & 85.59 \\
&  &    ($\pm$4.09 )    &    ($\pm$2.11 )    &    ($\pm$1.58 )    &    ($\pm$1.50 )    &    ($\pm$0.58 )    &    ($\pm$0.42 )    &    ($\pm$0.47 )    &    ($\pm$0.52)    \\
\midrule
\multirow{2}{*}{Unbounded Loss}                & CCE  &	56.17    &    60.89    &    64.18    &    66.57    &    69.14    &    71.63    &    74.55    &    78.22    \\
& 	&    ($\pm$0.64 )    &    ($\pm$0.61 )    &    ($\pm$0.77 )    &    ($\pm$0.41 )    &    ($\pm$0.49 )    &    ($\pm$0.31 )    &    ($\pm$0.31 )    &    ($\pm$0.22)    \\
\midrule
\multirow{10}{*}{Decomposition before Shuffle}  & GA  &	70.25    &    76.50    &    79.77    &    82.03    &    84.05    &    85.58    &    86.40    &    87.49    \\
& 	&    ($\pm$0.24 )    &    ($\pm$0.47 )    &    ($\pm$0.32 )    &    ($\pm$0.22 )    &    ($\pm$0.64 )    &    ($\pm$0.32 )    &    ($\pm$0.24 )    &    ($\pm$0.15)    \\

& NN  &	65.33    &    71.34    &    75.46    &    78.67    &    81.40    &    84.08    &    86.56    &    88.61    \\
& 	&    ($\pm$0.51 )    &    ($\pm$0.53 )    &    ($\pm$0.31 )    &    ($\pm$0.58 )    &    ($\pm$0.28 )    &    ($\pm$0.16 )    &    ($\pm$0.39 )    &    ($\pm$0.12)    \\

& FREE &	53.90    &    60.32    &    63.98    &    66.79    &    69.31    &    71.65    &    74.43    &    76.61    \\
& 	&    ($\pm$1.05 )    &    ($\pm$1.14 )    &    ($\pm$0.85 )    &    ($\pm$0.64 )    &    ($\pm$0.73 )    &    ($\pm$0.73 )    &    ($\pm$0.28 )    &    ($\pm$0.33)    \\

& PC &	56.36    &    62.37    &    66.09    &    69.51    &    72.46    &    75.18    &    78.50    &    82.40    \\
& 	&    ($\pm$0.56 )    &    ($\pm$0.50 )    &    ($\pm$0.44 )    &    ($\pm$0.47 )    &    ($\pm$0.35 )    &    ($\pm$0.33 )    &    ($\pm$0.52 )    &    ($\pm$0.38)    \\

& Forward &	75.40    &    83.19    &    85.18    &    86.63    &    87.51    &    88.29    &    88.96    &    89.41    \\
& 	&    ($\pm$2.02 )    &    ($\pm$0.61 )    &    ($\pm$0.48 )    &    ($\pm$0.38 )    &    ($\pm$0.29 )    &    ($\pm$0.29 )    &    ($\pm$0.26 )    &    ($\pm$0.25)    \\
\midrule
\multirow{10}{*}{Decomposition after Shuffle}  & GA   &	70.25    &    75.91    &    78.46    &    80.60    &    82.14    &    83.48    &    84.01    &    84.65    \\
& 	&    ($\pm$0.24 )    &    ($\pm$1.37 )    &    ($\pm$2.84 )    &    ($\pm$3.35 )    &    ($\pm$4.51 )    &    ($\pm$4.92 )    &    ($\pm$5.35 )    &    ($\pm$6.28)    \\

& NN &	63.73    &    67.26    &    69.46    &    71.25    &    73.15    &    74.82    &    77.09    &    79.39    \\
& 	&    ($\pm$0.97 )    &    ($\pm$0.82 )    &    ($\pm$0.74 )    &    ($\pm$0.62 )    &    ($\pm$0.45 )    &    ($\pm$0.35 )    &    ($\pm$0.17 )    &    ($\pm$0.21)    \\

& FREE &	55.33    &    60.81    &    64.65    &    67.01    &    69.60    &    71.63    &    74.22    &    77.16    \\
& 	&    ($\pm$0.89 )    &    ($\pm$0.97 )    &    ($\pm$0.89 )    &    ($\pm$0.70 )    &    ($\pm$0.78 )    &    ($\pm$0.46 )    &    ($\pm$0.40 )    &    ($\pm$0.50)    \\

& PC &	56.68    &    61.07    &    63.86    &    65.61    &    68.03    &    69.74    &    72.49    &    75.17    \\
& 	&    ($\pm$1.28 )    &    ($\pm$0.99 )    &    ($\pm$0.67 )    &    ($\pm$0.44 )    &    ($\pm$0.64 )    &    ($\pm$0.65 )    &    ($\pm$0.37 )    &    ($\pm$0.46)    \\

& Forward  &	66.09    &    73.20    &    75.76    &    82.53    &    86.27    &    88.05    &    89.24    &    90.22    \\
& 	&    ($\pm$0.49 )    &    ($\pm$3.05 )    &    ($\pm$2.61 )    &    ($\pm$2.60 )    &    ($\pm$0.65 )    &    ($\pm$0.27 )    &    ($\pm$0.22 )    &    ($\pm$0.20)    \\
\bottomrule
\end{tabular}
}
}
\end{table*}
\subsection{Datasets and Models}\label{E.1}
In the experiments of Section 5, we use 5 widely-used large-scale benchmark datasets and 4 regular-scale datasets from the UCI Machine Learning Repository. The statistics of these datasets with the corresponding base models are reported in Table~\ref{datasets}. Hyper-parameters for all the approaches are selected so as to maximize the accuracy on a validation set, which is constructed by randomly sampling 10\% of the training set.
We report the characteristics, the parameter settings (to reproduce the experimental results), and the sources of these datasets as follows.
\begin{itemize}
\item MNIST~\cite{lecun1998gradient}: It is a 10-class dataset of handwritten digits (0 to 9). Each instance is a 28$\times$28 grayscale image. 
Source:
\url{http://yann.lecun.com/exdb/mnist/}
\item Kuzushiji-MNIST~\cite{clanuwat2018deep}:
It is a 10-class dataset of cursive Japanese (``Kuzushiji") characters. Each instance is a 28$\times$28 grayscale image. 
\item Fashion-MNIST~\cite{xiao2017fashion}:
It is a 10-class dataset of fashion items (T-shirt/top, trouser, pullover, dress, sandal, coat, shirt, sneaker, bag, and ankle boot). Each instance is a 28$\times$28 grayscale image. 
Source: \url{https://github.com/rois-codh/kmnist}
\item CIFAR-10~\cite{krizhevsky2009learning}: It is a 10-class dataset of 10 different objects (airplane, bird, automobile, cat, deer, dog, frog, horse, ship, and truck). Each instance is a 32$\times$32$\times$3 colored image in RGB format. This dataset is normalized with mean $(0.4914, 0.4822, 0.4465)$ and standard deviation $(0.247, 0.243, 0.261)$. 
Source: \url{https://www.cs.toronto.edu/~kriz/cifar.html}
\item 20Newsgroups: It is a 20-class dataset of 20 different newsgroups (comp.graphics, comp.os.ms-windows.misc, comp.sys.ibm.pc.hardware, comp.sys.mac.hardware, comp.windows.x,
rec.autos, rec.motorcycles, rec.sport.baseball, rec.sport.hockey, sci.crypt, sci.electronics, sci.med, sci.space, misc.forsale, talk.politics.misc, talk.politics.guns, talk.politics.mideast, talk.religion.misc, alt.atheism, soc.religion.christian).
We obtained the tf-idf features, and applied
TruncatedSVD~\cite{halko2011finding} to reduce the dimension to 1000. We randomly sample 90\% of the examples from the whole dataset to construct the training set, and the rest 10\% forms the test set.
Source: \url{http://qwone.com/~jason/20Newsgroups/}
\item Yeast, Texture, Dermatology, Synthetic Control: They are all the datasets from the UCI Machine Learning Repository. Since they are all regular-scale datasets, we only apply linear model on them. For each dataset, we randomly sample 90\% of the examples from the whole dataset to construct the training set, and the rest 10\% forms the test set.
The detailed parameter settings can be found in our provided code package. Source:
\url{https://archive.ics.uci.edu/ml/datasets.php}
\end{itemize}
For the used models, the detailed information of the used 34-layer ResNet~\cite{he2016deep} and 22-layer DenseNet~\cite{huang2017densely} can be found in the corresponding papers.

\subsection{Experimental Results on Training Accuracy}\label{E.2}
Here, we report the mean and standard deviation of training
accuracy (the training set is evaluated with ordinary labels)
of 5 trials in Figure~\ref{training}, to compare the bounded loss functions MAE, MSE, GCE, PHuber-CE, and the unbounded loss function CCE. The training accuracy can reflect the ability of the loss function in identifying the correct label from the non-complementary labels.

From Figure~\ref{training}, we can find that CCE always achieves the worst performance among all the loss functions, which implies that unbounded loss function is worse than bounded loss function, using our provided empirical risk estimator. This observation clearly supports our conjecture that the negative term in our empirical risk estimator could cause the over-fitting issue. In addition, we can also find that compared with other bounded loss functions, MAE achieves comparable performance in most cases, while it is sometimes inferior to other bounded losses due to its optimization
issue~\cite{zhang2018generalized}. All the above observations on the training accuracy (Figure~\ref{training}) are very similar to those observations on the test accuracy (Figure \ref{fig1} in our paper).

\subsection{Experimental Results on Fixed Complementary Label Set}\label{E.4}
We also conduct additional experiments to investigate the influence of the variable $s$ on Kuzushiji-MNIST using both linear model and MLP. Specifically, we study the case where
the size of each complementary label set $s$ is fixed at $j$
(i.e., $p(s=j)=1$) while increasing $j$ from 1 to $k-2$. The detailed experimental results are shown in Table~\ref{fixed_linear} and Table~\ref{fixed_mlp}. From the two tables, we can find that the (test) classification accuracy of
our approaches increases as $j$ increases. This observation
is clearly in accordance with our derived estimation error
bound (Theorem 4), as the estimation error would decrease
if $j$ increases. In addition, as shown in the two tables, our proposed upper-bound losses outperform other approaches in most cases. This observation also demonstrates the effectiveness of our proposed upper-bound losses.

\end{document}